\documentclass[10pt]{article} 

\usepackage[preprint]{tmlr}

\usepackage[utf8]{inputenc}
\usepackage{microtype}
\usepackage{amsmath,amssymb,amsfonts,amsthm}
\usepackage{booktabs}
\usepackage{array}
\usepackage{multirow}
\usepackage{graphicx}
\usepackage{xcolor}
\usepackage{enumitem}
\usepackage{algorithm}
\usepackage{algpseudocode}
\usepackage{longtable}
\usepackage{pdflscape}
\usepackage{placeins}
\usepackage{float}
\usepackage[colorlinks=true,linkcolor=blue,citecolor=blue,urlcolor=blue]{hyperref}
\usepackage{url}

\newtheorem{lemma}{Lemma}
\newtheorem{proposition}{Proposition}
\newtheorem{theorem}{Theorem}

\newtheorem{assumption}{Assumption}

\newcommand{\R}{\mathbb{R}}
\newcommand{\E}{\mathbb{E}}
\newcommand{\DeltaC}{\Delta^{C-1}}
\newcommand{\Dden}{D_{\mathrm{den}}}
\newcommand{\Dw}{D_{\mathrm{within}}^{\mathrm{wt}}}
\newcommand{\Db}{D_{\mathrm{between}}}
\newcommand{\Dmod}{D_{\mathrm{mod}}}
\newcommand{\Uw}{U_{\mathrm{within}}^{\mathrm{wt}}}
\newcommand{\GapAcc}{\mathrm{GapAcc}}
\newcommand{\GapLoss}{\mathrm{GapLoss}}
\newcommand{\TrainAcc}{\mathrm{TrainAcc}}
\newcommand{\ValAcc}{\mathrm{ValAcc}}
\newcommand{\TrainLoss}{\mathrm{TrainLoss}}
\newcommand{\ValLoss}{\mathrm{ValLoss}}
\newcommand{\tr}{\mathrm{tr}}
\newcommand{\val}{\mathrm{val}}
\newcommand{\od}{\mathrm{d}}
\newcommand{\Rdir}{R^{\mathrm{dir}}}
\newcommand{\Rsc}{R^{\mathrm{sc}}}

\newcommand{\NumModRobertaRaw}{.350}

\newcommand{\NumModDebertaRaw}{.551}

\newcommand{\NumModQwenRaw}{.776}

\newcommand{\NumModQwenDdenRaw}{.822}

\newcommand{\NumModNlpRaw}{.559}
\newcommand{\NumModNlpLowess}{.636}
\newcommand{\NumModNlpJoint}{82}
\newcommand{\NumModNlpDenHigher}{82}
\newcommand{\NumModNlpSd}{.371}
\newcommand{\NumModNlpDdenSd}{.137}

\newcommand{\NumModNlpPartial}{.410}
\newcommand{\NumModNlpPartialPos}{74}
\newcommand{\NumModNlpDiff}{.200}
\newcommand{\NumModNlpDiffPos}{68}
\newcommand{\NumModResRaw}{.855}
\newcommand{\NumModResLowess}{.894}

\newcommand{\NumModResDenHigher}{11}

\newcommand{\NumModResDdenRaw}{.887}

\newcommand{\NumModResDdenLowess}{.916}

\newcommand{\NumModCondAllPos}{13}
\newcommand{\NumModCondLowest}{\ensuremath{-.022}}
\newcommand{\NumModCondDdenLowest}{.550}
\newcommand{\NumModRobertaRteRaw}{\ensuremath{-.022}}
\newcommand{\NumModRobertaRtePos}{2}
\newcommand{\NumModRobertaMrpcRaw}{.032}
\newcommand{\NumModRobertaMrpcPos}{3}
\newcommand{\NumDepthTwoDdenRaw}{.366}
\newcommand{\NumDepthTwoDdenPos}{11}
\newcommand{\NumDepthTenDdenRaw}{.645}
\newcommand{\NumDepthTenDdenPos}{15}
\newcommand{\NumBoolqDepthMin}{.667}
\newcommand{\NumBoolqDepthMax}{.959}
\newcommand{\NumBoolqDepthLowessMin}{.778}
\newcommand{\NumBoolqDepthLowessMax}{.973}
\newcommand{\NumBoolqQwenDdenTwo}{.762}
\newcommand{\NumBoolqQwenDdenTwelve}{.959}
\newcommand{\NumNlpDdenRaw}{.861}
\newcommand{\NumNlpDdenLowess}{.913}
\newcommand{\NumNlpWRaw}{.803}
\newcommand{\NumNlpWLowess}{.880}
\newcommand{\NumNlpDdenPartial}{.812}
\newcommand{\NumNlpDdenPartialPos}{90}
\newcommand{\NumNlpWPartial}{.733}
\newcommand{\NumNlpWPartialPos}{89}
\newcommand{\NumNlpDdenDiff}{.610}
\newcommand{\NumNlpDdenDiffPos}{88}
\newcommand{\NumNlpWDiff}{.516}
\newcommand{\NumNlpWDiffPos}{87}
\newcommand{\NumNlpDdenDiffLowess}{.813}
\newcommand{\NumNlpDdenDiffLowessPos}{90}
\newcommand{\NumNlpWDiffLowess}{.722}
\newcommand{\NumNlpWDiffLowessPos}{87}
\newcommand{\NumNlpTimeGap}{.684}

\newcommand{\NumNlpLossDdenLevel}{.813}
\newcommand{\NumNlpLossDdenLevelPos}{88}
\newcommand{\NumNlpLossDdenDiff}{.463}
\newcommand{\NumNlpLossDdenDiffPos}{83}

\newcommand{\NumNlpDzeroJoint}{90}
\newcommand{\NumNlpDoneJoint}{88}
\newcommand{\NumNlpBetweenRaw}{.788}
\newcommand{\NumNlpBetweenJoint}{88}
\newcommand{\NumNlpIntervalMin}{0.980}
\newcommand{\NumNlpIntervalMax}{1.011}
\newcommand{\NumNlpGapEpochTwo}{12.5}
\newcommand{\NumNlpGapFinal}{16.7}

\newcommand{\NumNlpBetweenRatioFirst}{9.2}
\newcommand{\NumNlpBetweenRatioLast}{22.2}

\newcommand{\NumNlpTrainAccFinal}{99.9}
\newcommand{\NumCiNinetyLower}{.960}
\newcommand{\NumVerifyComparisons}{1{,}676}

\newcommand{\NumResGapThree}{.185}
\newcommand{\NumResLossGapThree}{1.398}
\newcommand{\NumResValAccThree}{81.5}

\newcommand{\NumResGapFive}{.331}
\newcommand{\NumResLossGapFive}{2.251}
\newcommand{\NumResValAccFive}{65.5}

\newcommand{\NumResGapSeven}{.527}
\newcommand{\NumResLossGapSeven}{3.878}
\newcommand{\NumResValAccSeven}{45.7}

\newcommand{\NumResDdenAccPartial}{.552}
\newcommand{\NumResDdenAccPartialPos}{13}

\newcommand{\NumResDdenLossRaw}{.830}

\newcommand{\NumResDdenLossPartial}{\ensuremath{-.145}}
\newcommand{\NumResDdenLossPartialPos}{6}

\newcommand{\NumTurnMin}{6}
\newcommand{\NumTurnMax}{11}
\newcommand{\NumTurnMean}{8.4}
\newcommand{\NumTurnLossGrowth}{87}
\newcommand{\NumTurnAccGrowth}{56}
\newcommand{\NumTurnValAccNotLower}{12}
\newcommand{\NumTurnValAccStart}{.587}
\newcommand{\NumTurnValAccEnd}{.643}
\newcommand{\NumTurnValLossStart}{.900}
\newcommand{\NumTurnValLossEnd}{2.555}

\newcommand{\NumCvTenDdenRaw}{.925}

\newcommand{\NumCvTenWRaw}{.936}

\newcommand{\NumNlpNegCheckpoints}{17}

\newcommand{\NumNlpNonNegDdenRaw}{.840}
\newcommand{\NumNlpNonNegDdenRawPos}{89}

\newcommand{\NumResNegCheckpoints}{38}

\newcommand{\NumResNonNegDdenRaw}{.844}
\newcommand{\NumResNonNegDdenRawPos}{15}

\newcommand{\NumNlpDdenPartialQuad}{.508}
\newcommand{\NumNlpDdenPartialQuadPos}{86}
\newcommand{\NumNlpDdenPartialLog}{.495}
\newcommand{\NumNlpDdenPartialLogPos}{85}
\newcommand{\NumNlpWPartialQuad}{.413}
\newcommand{\NumNlpWPartialQuadPos}{78}
\newcommand{\NumNlpWPartialLog}{.434}
\newcommand{\NumNlpWPartialLogPos}{82}
\newcommand{\NumMassDebertaEight}{2.9}
\newcommand{\NumMassDebertaNine}{3.2}
\newcommand{\NumMassQwen}{\ensuremath{1.1\times10^{-5}}}

\title{Why Does Train--Validation Separation Emerge?\\
Update-Pressure Density Dynamics in Pretrained Backbones}

\author{%
Yuchen Li$^{1}$, Zongqi Fan$^{1}$,\\[0.2em]
Ken-Tye Yong$^{1,*}$, Nguyen H. Tran$^{1,*}$\\[0.6em]
{\normalfont\small $^{1}$School of Computer Science, The University of Sydney, Sydney, Australia}\\[0.4em]
{\normalfont\small $^{*}$Co-corresponding authors: Ken-Tye Yong and Nguyen H. Tran.}\\
{\normalfont\small \href{mailto:ken.yong@sydney.edu.au}{ken.yong@sydney.edu.au}; \href{mailto:nguyen.tran@sydney.edu.au}{nguyen.tran@sydney.edu.au}}
}

\begin{document}
\hypersetup{pdftitle={Why Does Train--Validation Separation Emerge? Update-Pressure Density Dynamics in Pretrained Backbones},pdfauthor={Yuchen Li, Zongqi Fan, Ken-Tye Yong, Nguyen H. Tran}}
\maketitle

\begin{abstract}
During fine-tuning, training accuracy often keeps improving after validation accuracy has levelled off, so the empirical train--validation gap grows. We propose a support-transfer account: early updates reuse structure shared by many examples, whereas later fitting increasingly relies on narrower structure whose benefits transfer less to held-out data. Using fixed training examples, we normalize each example's gradient magnitudes over selected parameters to measure how its loss sensitivity is distributed. We track the dispersion of these allocations across examples ($\Dden$) and among examples with the same label, weighting each class by its sample count ($\Dw$). A local model gives sufficient conditions for allocation dispersion and the train--validation cross-entropy gap to grow together, accounting separately for uniform scaling of model scores. Such scaling preserves predictions; additional decision-boundary conditions characterize accuracy-gap growth. In a constructed five-level hierarchy, increasing the share of private features, unique to individual examples, from $.3$ to $.5$ to $.7$ raises the final accuracy gap from $\NumResGapThree$ to $\NumResGapFive$ to $\NumResGapSeven$, alongside increasing unnormalized dispersion increments. Tests retaining one feature level at a time show a widening validation-minus-training difference in the private level's share of these losses. After the validation-loss minimum, the cross-entropy gap continues to grow without a decline in mean validation accuracy. In 90 runs of RoBERTa, DeBERTa, and Qwen3-1.7B on six language tasks, both readouts are positively associated with the accuracy gap in every run; raw correlations of one-epoch changes are positive for $\Dden$ in $\NumNlpDdenDiffPos$/90 runs. Both readouts also retain positive associations in 15 ResNet-18 runs across three vision datasets. Together, the controlled intervention and real-model trajectories support the proposed connection between evolving gradient allocation and train--validation separation.
\end{abstract}

\section{Introduction}
\label{sec:intro}

When a pretrained model is fine-tuned, its accuracy on the training data often keeps rising after its accuracy on held-out data has levelled off. We write the accuracy gap at training time $t$ as
\begin{equation}
\label{eq:gapacc}
\GapAcc(t)=\TrainAcc(t)-\ValAcc(t),
\end{equation}
where $\TrainAcc(t)$ is measured on a fixed training-evaluation subset and $\ValAcc(t)$ on the validation split. We call the trajectory $t\mapsto\GapAcc(t)$ \emph{train--validation separation}. We study this finite-sample trajectory during adaptation and ask: while the empirical gap grows, what changes in the way individual training examples drive the update, and why do further training gains transfer less to held-out examples?

\paragraph{Account.}
We propose the \emph{support-transfer account}. A pretrained backbone already contains structure that is useful for many downstream examples. We say that such structure provides \emph{shared support}: it can help reduce the loss of many examples, including validation examples. We call the remaining loss reducible through this structure \emph{shared demand}. Structure that lowers the loss of only a few training examples provides \emph{narrow} or, in the limit, \emph{private support}; its benefit carries over less. Early fine-tuning updates are dominated by directions that many examples agree on, so shared demand is consumed first. Later, a growing part of the remaining loss can only be reduced through narrow support, and training improvements then produce smaller validation improvements. Figure~\ref{fig:concept} illustrates the idea with face images: a face outline is shared by all faces, pose or lighting patterns by groups of images, and a scar or compression artifact may help fit a single training image without helping any validation image. Here, support describes how broadly structure is reused across examples.

\begin{figure}[t]
\centering
\includegraphics[width=0.95\linewidth]{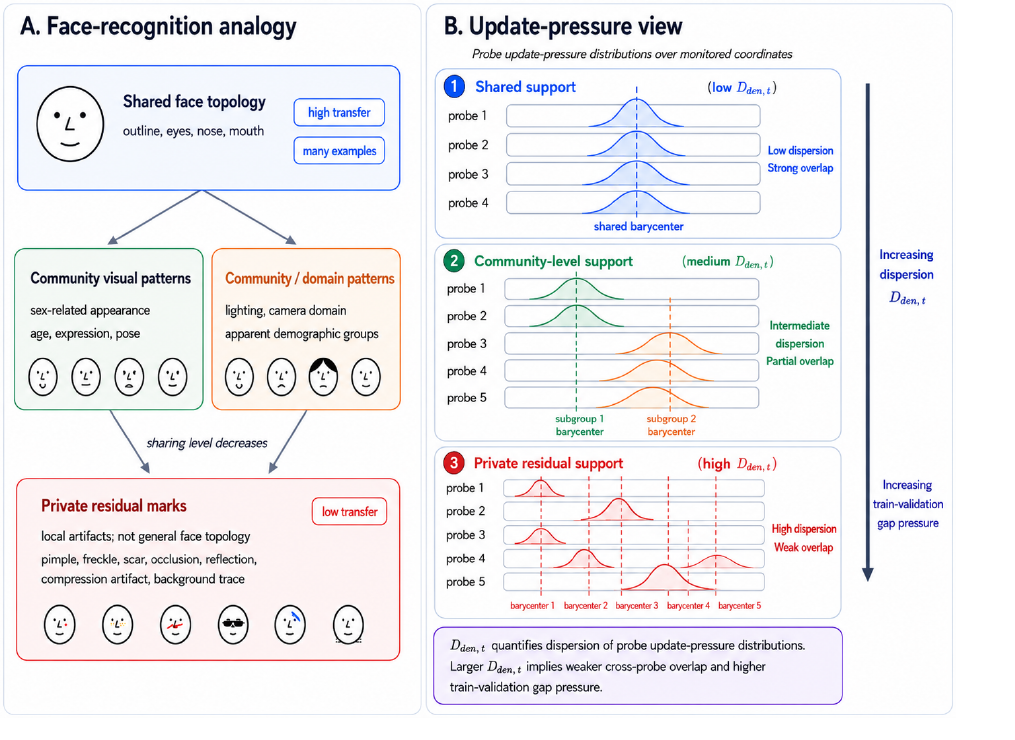}
\caption{Conceptual illustration. Panel A uses face recognition as an analogy for support shared by all examples, by groups of examples, or by single examples. Panel B sketches the quantity measured in this paper: when the gradient allocations of different training examples overlap, their dispersion is low; when the examples load different coordinates, it is high. Both panels are schematic.}
\label{fig:concept}
\end{figure}

\paragraph{Measurement.}
We observe the associated structural evolution through per-example gradients. Fix a set of $N_0$ training examples, the \emph{training probes}, and a set of trainable parameter entries, the \emph{monitored coordinates}. At each recorded checkpoint we compute, for every probe separately, the gradient of its loss with respect to the monitored coordinates, take absolute values entrywise, and normalize them to sum to one. The resulting vector, the probe's \emph{gradient allocation}, records where the probe's first-order loss sensitivity is located. The account predicts that increasing reliance on heterogeneous narrow support separates these allocations across probes, raising their dispersion around the mean allocation, $\Dden$. Its probe-count-weighted within-class part $\Dw$ measures this dispersion among examples with the same label, removing differences between class centres. Each readout is computed solely from the fixed training probes and compared with the train--validation gap along the same trajectory.

\paragraph{Theory.}
Section~\ref{sec:theory} turns the account into a local model with explicit sufficient conditions. Faster consumption of shared demand increases the residual weight; when residual heterogeneity persists, allocation dispersion grows. Lower held-out transfer of residual support increases the part of train--validation cross-entropy separation associated with changes in scale-normalized logits. Lemma~\ref{lem:direction-scale} separates changes of normalized logits from uniform logit scaling, which changes cross-entropy while preserving predictions. Proposition~\ref{prop:accuracy-crossings} then gives an exact finite-sample identity for accuracy-gap changes and a sufficient condition in terms of margin gains and boundary-near examples. Together, these results connect the structural model to both loss and accuracy while explaining why their trajectories can differ.

\paragraph{Evidence.}
The experiments combine a controlled support-sharing test with tests on real training trajectories. In a constructed five-level hierarchy, we vary the private share while fixing the relative composition of the shared levels, and measure the gap, unnormalized dispersion, and losses with only one feature level retained at a time. In 90 fine-tuning runs of three language backbones on six datasets, we test the predicted association between the two normalized readouts and the accuracy gap, including within-class, time-trend, and one-epoch-change analyses. Fifteen ResNet-18 runs on three vision datasets test the same association across data domains. Section~\ref{sec:results} presents this evidence in the same order.

\paragraph{Contributions.}
\begin{enumerate}[leftmargin=1.5em]
\item \textbf{A dynamical account of how the gap develops.} We derive sufficient conditions linking shared-demand consumption to allocation dispersion and directional cross-entropy separation (Theorem~\ref{thm:observable-dispersion}, Proposition~\ref{prop:directional-transfer}, Theorem~\ref{thm:combined}). An exact direction--scale decomposition and a finite-sample boundary-crossing result specify how loss and accuracy changes relate (Lemma~\ref{lem:direction-scale}, Proposition~\ref{prop:accuracy-crossings}).
\item \textbf{A measurement from training data alone.} We define fixed-probe gradient-allocation readouts, their exact class decomposition, and their numerical safeguards (Section~\ref{sec:measurement}). They are computed during ordinary training without validation data.
\item \textbf{Tests across a controlled hierarchy, language, and vision.} We test private-share effects in the constructed hierarchy, structural associations in 90 language fine-tuning runs, and their extension in 15 vision runs. Within-class, temporal, loss, module-level, and numerical checks examine the evidence and its sensitivity. The data of all runs and the analysis and measurement code are released with the paper.
\end{enumerate}

Section~\ref{sec:related} positions the work. Section~\ref{sec:measurement} defines the readouts, Section~\ref{sec:theory} develops the theory, Section~\ref{sec:design} describes the experiments, Section~\ref{sec:results} reports the results, and Section~\ref{sec:discussion} discusses alternative explanations and limitations.

\section{Related Work}
\label{sec:related}

\paragraph{Generalization theory.}
Uniform-convergence analyses bound the difference between empirical and population risk through the capacity of the hypothesis class, measured for example by VC dimension or Rademacher complexity \citep{vapnik1998statistical,bartlett2002rademacher}. Norm- and margin-based measures refine such bounds for deep networks \citep{neyshabur2017exploring,jiang2020fantastic}, and overparameterized networks generalize better than capacity alone suggests \citep{zhang2017understanding,belkin2019reconciling}. These results concern the size of the population gap of a trained model. Our analysis concerns the evolving empirical gap between a training subset and a finite validation set within one adaptation run. The support-transfer account complements capacity-based analyses by examining which structure optimization uses as training proceeds and how its held-out benefit changes.

\paragraph{Learning order, example difficulty, and memorization.}
Networks tend to fit common patterns before rare, atypical, or mislabeled examples \citep{arpit2017closer,toneva2019empirical}, and training dynamics separate easy from hard examples \citep{swayamdipta2020dataset,baldock2021deep}. Memorization of rare examples can be necessary for accuracy on long-tailed data \citep{feldman2020does,feldman2020neural}. Our account shares the premise that what is being fitted changes over training. We record how each example's gradient is allocated over monitored parameters and relate the cross-example dispersion of these allocations to the evolving gap. This allocation view complements output-, confidence-, and influence-based descriptions of example difficulty and memorization; Section~\ref{sec:discussion} examines their relationship to the proposed account.

\paragraph{Per-example gradient statistics.}
Coherent-gradient analyses argue that gradient descent amplifies directions supported by many examples \citep{chatterjee2020coherent}; stiffness measures how a step on one example changes the loss of another \citep{fort2019stiffness}; gradient diversity and the gradient signal-to-noise ratio summarize agreement among per-example gradients \citep{yin2018gradient,liu2020understanding}. Per-example gradient norms and their variance have been used to score example importance and difficulty \citep{paul2021deep,agarwal2022estimating}, and gradient dot products to trace the influence of training examples \citep{pruthi2020estimating}. These statistics use signed alignment or gradient magnitude. Our readout discards both sign and total magnitude: it compares where, over the monitored coordinates, the loss sensitivity of different examples is located, and it is tracked along the training trajectory. Section~\ref{sec:absolute-gradients} explains why sign is folded before the dispersion is computed.

\paragraph{Feature transfer and fine-tuning.}
Features differ in how well they transfer across tasks and layers \citep{yosinski2014transferable}; gradient descent can learn some predictive features while starving others \citep{pezeshki2021gradient}; and full fine-tuning can distort pretrained features \citep{kumar2022finetuning}. Our account uses the related idea that the structure exploited during fine-tuning differs in its held-out transfer, and it models the consequence for the train--validation gap over time.

\paragraph{Logit scale, calibration, and implicit bias.}
Late in training, networks can become overconfident: validation cross-entropy can rise while accuracy is unchanged, and temperature rescaling of the logits can improve calibration \citep{guo2017calibration}. On separable data, gradient descent drives the logit scale to infinity while the normalized predictor converges in direction \citep{soudry2018implicit,lyu2020gradient}. Lemma~\ref{lem:direction-scale} applies the scale--direction separation exactly to the rate of the train--validation cross-entropy gap. It identifies the directional contribution constrained by the transfer argument and explains why uniform scaling changes cross-entropy while preserving predicted labels. Proposition~\ref{prop:accuracy-crossings} supplies the additional margin conditions under which the accuracy gap grows.

\paragraph{Representation analysis and evaluation protocol.}
Linear probes, SVCCA, CKA, and neural-collapse analyses characterize activations and their geometry \citep{alain2017understanding,raghu2017svcca,kornblith2019similarity,papyan2020prevalence}; they do not measure where per-example loss sensitivity is allocated among trainable parameters. Work on reusable holdouts studies the relationship between model selection and evaluation \citep{dwork2015reusable}. Appendix~\ref{app:hpo} specifies the data used for hyperparameter selection, probe measurement, and gap evaluation.

\paragraph{Positioning.}
We connect learning order, feature transfer, and per-example gradient allocation in a dynamical account of the empirical gap during pretrained fine-tuning. The contribution combines this account with a training-data-only observation method and complementary tests: a controlled intervention on support sharing and structural-association analyses in real language and vision models.

\section{Measuring Gradient Allocation on Fixed Training Probes}
\label{sec:measurement}

\subsection{Setting and Notation}
Let $\mathcal D_{\tr}$ be the training set and $\mathcal D_{\val}$ the validation split. Training accuracy and loss are evaluated on a fixed subset $\mathcal D_{\tr}^{\mathrm{eval}}\subseteq\mathcal D_{\tr}$; validation accuracy and loss on a fixed evaluation set $\mathcal D_{\val}^{\mathrm{eval}}\subseteq\mathcal D_{\val}$. In the formal runs, $\mathcal D_{\val}^{\mathrm{eval}}=\mathcal D_{\val}$ is the full validation split (the official test split for the vision tasks); the validation examples serve a different role from the training probes used for gradient measurement. Besides the accuracy gap of Eq.~\eqref{eq:gapacc} we record the cross-entropy gap
\begin{equation}
\label{eq:gaploss}
\GapLoss(t)=\ValLoss(t)-\TrainLoss(t),
\end{equation}
where both terms are mean cross-entropies. Before training we fix the probe set $I=\{z_i=(x_i,y_i)\}_{i=1}^{N_0}\subset\mathcal D_{\tr}$ and a set $\mathcal C$ of $C$ trainable parameter entries, the monitored coordinates, in the shared-output projections of the final $k$ backbone blocks. This shared-output weight-gradient scheme and the fixed depths are specified in Appendix~\ref{app:monitor}. Here $\theta_t$ denotes the parameters at checkpoint $t$ and $\ell(z;\theta)$ the cross-entropy of example $z$.

\subsection{Gradient Allocation}
For probe $i$ at checkpoint $t$, let
\begin{equation}
\label{eq:probe-gradient}
g_i(t)=\nabla_{\theta_{\mathcal C}}\ell(z_i;\theta_t)\in\R^C,\qquad S_i(t)=\|g_i(t)\|_1=\sum_{c\in\mathcal C}|g_{i,c}(t)|,
\end{equation}
where the gradient is taken with respect to the monitored parameters only, for the loss of the single example $z_i$, and with the model in evaluation mode. We write $|g|$ for the entrywise absolute value $(|g_c|)_{c\in\mathcal C}$ and $\DeltaC=\{q\in\R^C:q_c\ge0,\ \sum_c q_c=1\}$ for the probability simplex. If $S_i(t)>0$, the \emph{gradient allocation} of probe $i$ is
\begin{equation}
\label{eq:allocation}
q_i(t)=\frac{|g_i(t)|}{S_i(t)}\in\DeltaC .
\end{equation}
It records how the probe's first-order loss sensitivity is distributed over the monitored coordinates, independently of its total size $S_i(t)$. The title's ``update pressure'' refers to this allocation. The implementation computes $\tilde q_i(t)=|g_i(t)|/(S_i(t)+\varepsilon)$ with $\varepsilon=10^{-12}$ and treats a probe as valid at $t$ if $S_i(t)$ is finite and exceeds $\varepsilon$; then $\|\tilde q_i(t)-q_i(t)\|_2\le\varepsilon/S_i(t)$ (Appendix~\ref{app:norm-stability}). Probe membership is fixed. At each checkpoint the readouts use its valid probes; for compact notation, write $N=N(t)$ and $N_y=N_y(t)$ for the effective total and class counts, with sums and class index sets restricted accordingly. Classes with no valid probes are omitted from the sums.

\subsection{Dispersion Readouts}
With the mean allocation $\bar q(t)=N^{-1}\sum_i q_i(t)$, the overall readout is the pooled dispersion
\begin{equation}
\label{eq:dden}
\Dden(t)=\frac1N\sum_{i=1}^N\|q_i(t)-\bar q(t)\|_2^2=\frac{1}{2N^2}\sum_{i=1}^N\sum_{j=1}^N\|q_i(t)-q_j(t)\|_2^2 .
\end{equation}
It is zero exactly when all probes have the same allocation, and it grows when probes place their loss sensitivity on different coordinates or in different proportions. Let $\mathcal Y$ denote the set of class labels. For a probe class $y\in\mathcal Y$ with $N_y$ probes, index set $I_y$, and weight $\pi_y=N_y/N$, let $\bar q_y(t)$ be the class mean and $D_y(t)=N_y^{-1}\sum_{i\in I_y}\|q_i(t)-\bar q_y(t)\|_2^2$ the within-class dispersion. We report
\begin{equation}
\label{eq:within-between}
\Dw(t)=\sum_y\pi_yD_y(t),\qquad \Db(t)=\sum_y\pi_y\|\bar q_y(t)-\bar q(t)\|_2^2,\qquad \Dden(t)=\Dw(t)+\Db(t).
\end{equation}
The identity is exact (Appendix~\ref{app:class-decomposition}). The weights are the probe counts, not validation-class frequencies. $\Dw$ measures how differently examples \emph{with the same label} allocate their sensitivity, so an association of $\Dw$ with the gap cannot be produced by movement of class centres alone. In the binary language tasks we also report $D_0$ and $D_1$, paired with the accuracy gaps of their own classes. For the constructed hierarchy we additionally compute the unnormalized companions $U_{\mathrm{den}}$ and $\Uw$, defined as Eqs.~\eqref{eq:dden} and~\eqref{eq:within-between} with $|g_i(t)|$ in place of $q_i(t)$; they retain gradient magnitude and quantify how unnormalized dispersion evolves with the accuracy gap (Appendix~\ref{app:results-resmlp}).

For a fixed monitored allocation $q_i$, define its module-level projection by grouping its coordinates. The monitored coordinates are partitioned into groups $G_1,\dots,G_M$, one for the weight and one for the bias of each monitored projection (for Qwen3-1.7B, one for the trainable LoRA parameters of each projection). The module allocation of probe $i$ has the entries $q^{\mathrm{mod}}_{i,m}(t)=\sum_{c\in G_m}q_{i,c}(t)$, the shares of its sensitivity carried by the groups, and with their mean $\bar q^{\mathrm{mod}}(t)$ over the probes,
\begin{equation}
\label{eq:dmod}
\Dmod(t)=\frac1N\sum_{i=1}^N\bigl\|q^{\mathrm{mod}}_i(t)-\bar q^{\mathrm{mod}}(t)\bigr\|_2^2 .
\end{equation}
$\Dmod$ keeps how each probe divides its sensitivity among the monitored projections of the final blocks and discards how the sensitivity is distributed within a projection. It is a coarser view of the same allocations rather than a component of $\Dden$, and it serves only the comparison in Section~\ref{sec:results-module}.

\subsection{Why Absolute Values}
\label{sec:absolute-gradients}
Per-example gradients do not form a single signed population. At an affine layer the per-example weight gradient is $\delta_ih_i^\top$, the product of the backpropagated error $\delta_i$ and the input activation $h_i$. For example, in a single-logit binary output layer with logit $a_i$ and $p_i=\sigma(a_i)$, $\partial\ell_i/\partial a_i=p_i-y_i$ is negative for $y_i=1$ and positive for $y_i=0$. Two probes with identical input activations but different binary labels therefore have opposite gradient signs on each output-weight coordinate with nonzero activation. A signed dispersion would count such label-conditioned sign modes, and sign reversals of a single probe between nearby checkpoints, as structural differences. The map $g\mapsto|g|/\|g\|_1$ removes these sign effects while keeping any difference in the coordinates or proportions over which loss sensitivity is distributed (Figure~\ref{fig:sign-jitter}).

\begin{figure}[t]
\centering
\includegraphics[width=\linewidth]{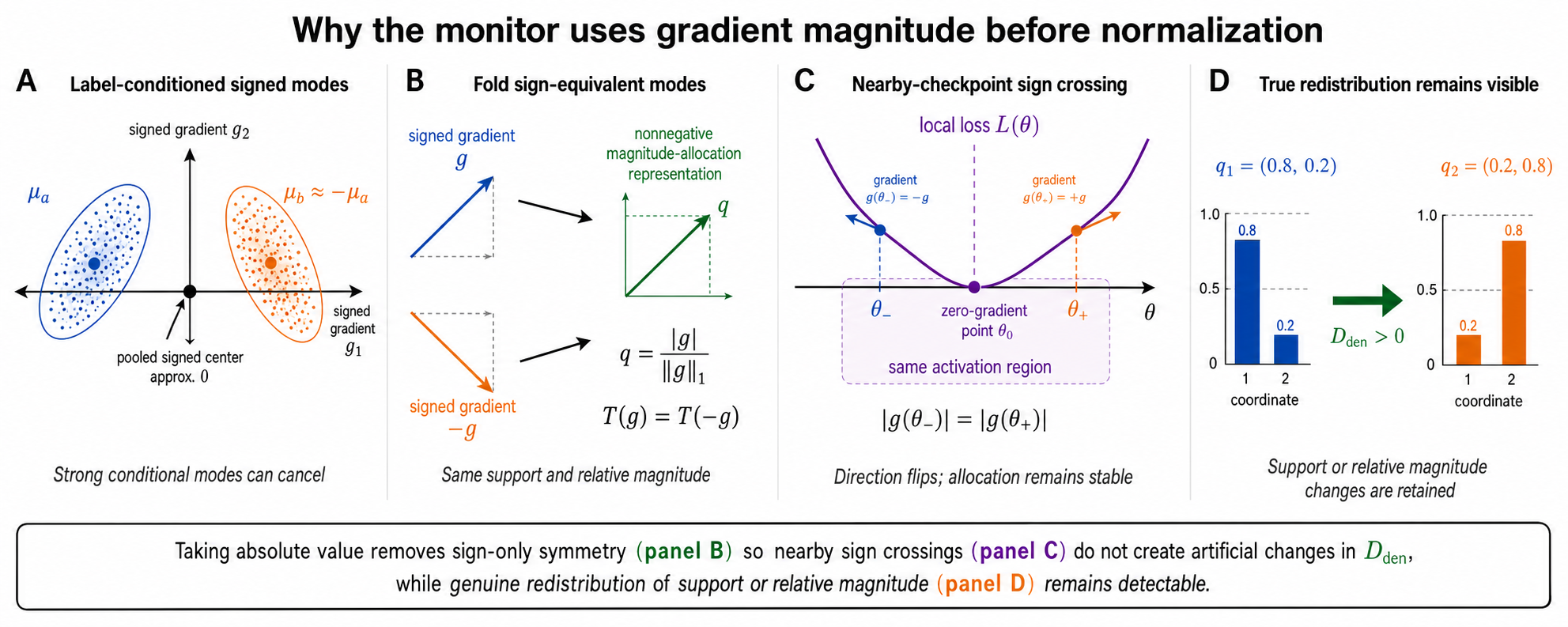}
\caption{Why gradient magnitudes are normalized before the dispersion is computed (schematic). A: two label-conditioned modes with nearly opposite centres in signed-gradient space. B: sign-equivalent gradients map to the same allocation $q=|g|/\|g\|_1$. C: a schematic sign reversal between checkpoints with the normalized magnitude allocation held fixed. D: a genuine redistribution across coordinates remains visible after the map.}
\label{fig:sign-jitter}
\end{figure}

\begin{proposition}[Sign invariance and stability of the allocation]
\label{prop:sign}
Let $a\in\R^C$ with $a_c\ge0$ and $\|a\|_1>0$, let $s,s'\in\{-1,+1\}^C$, and let $\odot$ denote the entrywise product. The allocations of $s\odot a$ and $s'\odot a$ coincide, $|s\odot a|/\|s\odot a\|_1=a/\|a\|_1$, whereas the signed normalized vectors $r_s=(s\odot a)/\|a\|_1$ satisfy
\begin{equation}
\|r_s-r_{s'}\|_2=\frac{2}{\|a\|_1}\Bigl(\sum_{c:\,s_c\ne s'_c}a_c^2\Bigr)^{1/2}.
\end{equation}
If the magnitude envelope changes from $a$ to $a+\delta$ with $a+\delta\ge0$ and both $\ell_1$ masses are at least $m>0$, then $\bigl\|(a+\delta)/\|a+\delta\|_1-a/\|a\|_1\bigr\|_2\le 2\|\delta\|_1/m$.
\end{proposition}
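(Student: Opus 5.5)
The proof splits into three elementary claims, and I would handle them in the order stated.

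\emph{Sign invariance.} Since $|s_c|=1$, we have $|s\odot a|=a$ entrywise because $a\ge0$. Hence $\|s\odot a\|_1=\|a\|_1$, and the allocation equals $a/\|a\|_1$ for every sign vector $s$, in particular for both $s$ and $s'$.

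\emph{Signed distance.} Coordinatewise, $(r_s-r_{s'})_c=(s_c-s'_c)a_c/\|a\|_1$. This is zero when $s_c=s'_c$ and equals $\pm 2a_c/\|a\|_1$ otherwise. Summing squares and taking the square root gives the displayed formula.

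\emph{Stability.} Write $A=\|a\|_1$ and $B=\|a+\delta\|_1$, both at least $m$. I would use the standard add-and-subtract decomposition
\begin{equation*}
\frac{a+\delta}{B}-\frac{a}{A}=\frac{\delta}{B}+a\Bigl(\frac1B-\frac1A\Bigr).
\end{equation*}
The first term has $\ell_2$ norm at most $\|\delta\|_2/B\le\|\delta\|_1/m$.

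For the second term, note that $|A-B|=\bigl|\|a\|_1-\|a+\delta\|_1\bigr|\le\|\delta\|_1$ by the reverse triangle inequality. Its norm is therefore
\begin{equation*}
\|a\|_2\,\frac{|A-B|}{AB}\le\frac{\|a\|_1\|\delta\|_1}{AB}=\frac{\|\delta\|_1}{B}\le\frac{\|\delta\|_1}{m},
\end{equation*}
using $\|a\|_2\le\|a\|_1$. Adding the two bounds gives the constant $2$.

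The only point needing care is to put $B$, not $A$, in the first term. With that choice the factor $\|a\|_1/A$ cancels exactly, and the bound needs only the lower bound $m$ on both masses. If the alternative decomposition were used, I would swap roles and get the same bound by symmetry.

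I expect no real obstacle. The stability estimate is the only step with any slack to manage, and it closes via $\|\cdot\|_2\le\|\cdot\|_1$ and the reverse triangle inequality.
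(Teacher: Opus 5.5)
Your proposal is correct and follows the paper's proof: the first two claims use $|s_c|=1$ and $s_c-s'_c\in\{0,\pm2\}$. The stability bound uses exactly the decomposition of Appendix~\ref{app:norm-stability}, $\mathcal N(x)-\mathcal N(y)=(x-y)/\|x\|_1+y(1/\|x\|_1-1/\|y\|_1)$ with $x=a+\delta$ and $y=a$, closed by $\|y\|_2\le\|y\|_1$ and the reverse triangle inequality.
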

\begin{proof}
The first identity holds because $|s_c|=1$. For the signed vectors, $r_s-r_{s'}=((s-s')\odot a)/\|a\|_1$, and $s_c-s'_c$ is $0$ where the signs agree and $\pm2$ where they differ. The last bound is the normalization stability bound of Appendix~\ref{app:norm-stability}.
\end{proof}

The allocation records the magnitude pattern across coordinates. The learning-order argument of Section~\ref{sec:theory} separately uses signed gradients to describe reinforcement and cancellation.

\subsection{What the Readouts Measure}
\label{sec:readout-scope}
The readouts measure per-example gradients before optimizer transformations. AdamW's moment history, adaptive scaling, and weight decay affect the parameter states $\theta_t$ reached during training; the operational quantity measured at each state is the allocation of loss sensitivity. The readouts depend on the monitored coordinates: their magnitudes are comparable only within one backbone and monitor, and the strength of their association with the gap varies with the monitored depth (Section~\ref{sec:sensitivity-summary}). Normalization removes each probe's total gradient size, which shrinks as the probe is fitted; the unnormalized companions retain it. Finally, the probes are a fixed sample of the training set; Appendix~\ref{app:finite-probe} gives the finite-probe properties of $\Dden$.

\section{A Conditional Local Theory}
\label{sec:theory}

The local model links the consumption of shared demand, dispersion of per-example allocations, and train--validation separation under explicit conditions. We first derive dispersion growth, then separate directional and scale changes of cross-entropy, and finally connect to the accuracy gap through finite-sample boundary-crossing counts. Table~\ref{tab:notation} distinguishes observed quantities from the latent terms of the model.

\begin{table}[t]
\centering
\caption{Notation of the local theory and the status of each quantity.}
\label{tab:notation}
\small
\begin{tabular}{p{0.2\linewidth}p{0.55\linewidth}p{0.15\linewidth}}
\toprule
Symbol & Meaning & Status\\
\midrule
$g_i(t)$, $q_i(t)$ & Signed monitored gradient and gradient allocation of probe $i$ & Observed\\
$\Dden(t)$, $\Dw(t)$ & Pooled and within-class allocation dispersion, Eqs.~\eqref{eq:dden}--\eqref{eq:within-between} & Observed\\
$A(t)$, $B(t)$ & Remaining loss demand addressable by shared and by residual support & Latent\\
$\lambda(t)$ & Residual fraction $B/(A+B)$ & Latent\\
$u(t)$, $v_i(t)$, $e_i(t)$ & Shared allocation shape, residual shape of probe $i$, approximation error & Latent\\
$\sigma_v(t)$, $\eta(t)$ & Cross-probe spread of residual shapes and of approximation errors & Latent\\
$R_\tr$, $R_\val$, $G$ & Training and validation cross-entropy and their difference & Observed\\
$\gamma(t)$, $\hat f_i(t)$ & Logit scale and scale-normalized logits & Not measured\\
$\Gamma^{\mathrm{dir}}$, $r^{\mathrm{dir}}$ & Directional rate of $G$ and directional training-loss reduction & Not measured\\
$\tau_u$, $\tau_v$ & Held-out transfer efficiencies of shared and residual support & Latent\\
\bottomrule
\end{tabular}
\end{table}

\subsection{Learning Order from Batch Coherence}
\label{sec:learning-order}
At a fixed time, decompose a randomly drawn signed per-example gradient as $\mathsf{G}=s+\xi$ with $s=\E[\mathsf{G}]$ and $\E[\xi]=0$. For a minibatch of $m$ independent draws with residual covariance $\Sigma$,
\begin{equation}
\bar{\mathsf{G}}_m=s+\frac1m\sum_{j=1}^m\xi_j,\qquad \E\Bigl\|\frac1m\sum_{j=1}^m\xi_j\Bigr\|_2^2=\frac{\tr(\Sigma)}{m}.
\end{equation}
A direction shared by many examples survives averaging, whereas approximately zero-mean example-specific directions are attenuated at rate $m^{-1/2}$ (Appendix~\ref{app:batch-coherence} gives a high-probability version). This is the optimization-side reason why broadly supported directions can dominate early updates \citep{chatterjee2020coherent,arpit2017closer}. It applies only to support that induces a non-negligible \emph{signed} mean after label conditioning; magnitude-level sharing alone does not guarantee it.

\subsection{Residual Demand Becomes Dominant}
Let $A(t)>0$ and $B(t)>0$ denote the remaining loss demand that can be reduced through shared and through residual support.
\begin{assumption}[Rate ordering]
\label{ass:rates}
On a local interval $J$, $\dot A=-\kappa_uA$ and $\dot B=-\kappa_vB$ with fixed rates $\kappa_u>\kappa_v>0$.
\end{assumption}
The ordering models the stronger aggregate updates received by coherent shared directions. For the residual fraction $\lambda(t)=B(t)/(A(t)+B(t))$, direct differentiation gives
\begin{equation}
\label{eq:lambda-dynamics}
\dot\lambda(t)=\lambda(t)\bigl(1-\lambda(t)\bigr)(\kappa_u-\kappa_v)>0 .
\end{equation}
Residual demand need not grow in absolute terms; it becomes relatively dominant because shared demand is consumed faster. Appendix~\ref{app:local-dynamics} gives the discrete-time version.

\subsection{Residual Weight and Allocation Dispersion}
Suppose that, on a short interval, the monitored gradient magnitude of probe $i$ is approximately the sum of a shared source with mass $A$ and allocation shape $u\in\DeltaC$ and a residual source with mass $B$ and probe-dependent shape $v_i\in\DeltaC$, $|g_i|\approx Au+Bv_i$. Normalizing gives the following two-source form.
\begin{assumption}[Two-source allocation]
\label{ass:mixture}
At time $t$, for every probe,
\begin{equation}
\label{eq:latent-mixture}
q_i(t)=\bigl(1-\lambda(t)\bigr)u(t)+\lambda(t)v_i(t)+e_i(t),\qquad u(t),v_i(t)\in\DeltaC,\ 0\le\lambda(t)\le1,
\end{equation}
where $e_i(t)$ collects the error of the two-source approximation, including interactions, normalization mismatch, and probe-level variation of the residual fraction.
\end{assumption}
With $\bar v=N^{-1}\sum_iv_i$ and $\bar e=N^{-1}\sum_ie_i$, define the spreads
\begin{equation}
\label{eq:residual-spreads}
\sigma_v^2(t)=\frac1N\sum_{i=1}^N\|v_i(t)-\bar v(t)\|_2^2,\qquad \eta^2(t)=\frac1N\sum_{i=1}^N\|e_i(t)-\bar e(t)\|_2^2 .
\end{equation}
The residual shapes differ across probes when sample-dependent processing produces different activation--error products $\delta_ih_i^\top$; if all residual footprints coincide, $\sigma_v=0$ and residual support contributes no dispersion.

\begin{theorem}[Allocation dispersion under residual growth]
\label{thm:observable-dispersion}
Under Assumption~\ref{ass:mixture} with $N\ge2$,
\begin{equation}
\bigl(\lambda(t)\sigma_v(t)-\eta(t)\bigr)_+^2\le\Dden(t)\le\bigl(\lambda(t)\sigma_v(t)+\eta(t)\bigr)^2,
\end{equation}
where $(x)_+=\max\{x,0\}$. If $e_i(t)=0$ for all $i$, then $\Dden(t)=\lambda^2(t)\sigma_v^2(t)$. For $t_1<t_2$, a sufficient condition for $\Dden(t_2)>\Dden(t_1)$ is $\lambda(t_2)\sigma_v(t_2)-\lambda(t_1)\sigma_v(t_1)>\eta(t_1)+\eta(t_2)$.
\end{theorem}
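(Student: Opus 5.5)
The plan is to write $\Dden(t)$ as the squared norm of a centred vector in an inner-product space and apply the triangle inequality there. Fix $t$ and drop it from the notation. Averaging Assumption~\ref{ass:mixture} over probes gives $\bar q=(1-\lambda)u+\lambda\bar v+\bar e$, because $u$ and $\lambda$ do not depend on $i$. The shared term therefore cancels on centring:
\begin{equation*}
q_i-\bar q=\lambda(v_i-\bar v)+(e_i-\bar e).
\end{equation*}

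Next we stack the centred vectors into the space $\R^{N\times C}$. We equip it with the normalized norm $\|X\|_N^2=N^{-1}\sum_i\|X_i\|_2^2$, which is a genuine Hilbert-space norm. Let $P$ have rows $v_i-\bar v$ and $E$ have rows $e_i-\bar e$. By Eq.~\eqref{eq:dden} and Eq.~\eqref{eq:residual-spreads}, we have $\Dden=\|\lambda P+E\|_N^2$, $\|\lambda P\|_N=\lambda\sigma_v$ (using $\lambda\ge0$), and $\|E\|_N=\eta$.

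The triangle and reverse triangle inequalities give
\begin{equation*}
\bigl|\lambda\sigma_v-\eta\bigr|\le\sqrt{\Dden}\le\lambda\sigma_v+\eta.
\end{equation*}
Squaring gives the upper bound. For the lower bound we use $|x|\ge(x)_+$ together with monotonicity of squaring on $[0,\infty)$. If every $e_i=0$, then $E=0$ and $\sqrt{\Dden}=\lambda\sigma_v$ exactly. The hypothesis $N\ge2$ plays no real role in the argument; it only ensures the dispersions are nontrivial.

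For the comparison statement, we chain the two bounds at different times. The hypothesis $\lambda(t_2)\sigma_v(t_2)-\lambda(t_1)\sigma_v(t_1)>\eta(t_1)+\eta(t_2)$ rearranges to
\begin{equation*}
\lambda(t_2)\sigma_v(t_2)-\eta(t_2)>\lambda(t_1)\sigma_v(t_1)+\eta(t_1)\ge0.
\end{equation*}
Hence the positive part at $t_2$ equals the quantity itself, and it strictly exceeds the nonnegative number $\lambda(t_1)\sigma_v(t_1)+\eta(t_1)$. Squaring preserves strict inequality between nonnegative numbers. Applying the lower bound at $t_2$ and the upper bound at $t_1$ then yields $\Dden(t_2)\ge(\lambda(t_2)\sigma_v(t_2)-\eta(t_2))^2>(\lambda(t_1)\sigma_v(t_1)+\eta(t_1))^2\ge\Dden(t_1)$.

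There is no substantive obstacle here. The only points needing care are that centring eliminates the common $u$ term, which requires $\lambda$ to be probe-independent; Assumption~\ref{ass:mixture} guarantees this by putting probe-level variation of the residual fraction into $e_i$. The other point is keeping track of signs when squaring the reverse triangle inequality, which the positive part handles.
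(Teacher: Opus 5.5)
Your proof is correct and follows the paper's argument essentially line for line. Centring cancels the shared term $(1-\lambda)u$; the triangle inequality in the normalized norm on $(\R^C)^N$ then bounds $\sqrt{\Dden}$ between $\lambda\sigma_v-\eta$ and $\lambda\sigma_v+\eta$; and the two-time condition chains the lower bound at $t_2$ with the upper bound at $t_1$. Your reverse-triangle form $\sqrt{\Dden}\ge|\lambda\sigma_v-\eta|$ is marginally stronger than the positive-part bound, though the theorem does not need the extra strength.
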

Centering the allocations in Eq.~\eqref{eq:latent-mixture} removes the shared shape $u$, leaving dispersion controlled by $\lambda\sigma_v$ and the approximation error $\eta$. The full proof is given in Appendix~\ref{app:proof-observable-dispersion}.
On a differentiable noiseless interval with $\lambda,\sigma_v>0$, combining $\Dden=\lambda^2\sigma_v^2$ with Eq.~\eqref{eq:lambda-dynamics} gives
\begin{equation}
\label{eq:relative-growth}
\frac{\dot\Dden}{2\Dden}=(1-\lambda)(\kappa_u-\kappa_v)+\frac{\dot\sigma_v}{\sigma_v},
\end{equation}
so the dispersion grows strictly if and only if $\dot\sigma_v/\sigma_v>-(1-\lambda)(\kappa_u-\kappa_v)$.

\subsection{Direction and Scale of the Cross-Entropy Gap}
\label{sec:direction-scale}
For a split $\bullet\in\{\tr,\val\}$ with evaluation examples $\mathcal E_\bullet$, $n_\bullet=|\mathcal E_\bullet|$, let $R_\bullet(t)=n_\bullet^{-1}\sum_{i\in\mathcal E_\bullet}\ell(f_i(t),y_i)$ be the mean cross-entropy, where $f_i(t)\in\R^{|\mathcal Y|}$ is the logit vector of example $i$, $p_i=\mathrm{softmax}(f_i)$, $\ell(f,y)=-\log\mathrm{softmax}(f)_y$, and $e_y$ is the one-hot vector of label $y$. The cross-entropy gap is $G=R_\val-R_\tr$, the quantity logged as $\GapLoss$. Cross-entropy changes in two ways: the logits can change their relative configuration, which can alter a prediction, or all logits can be rescaled together, which cannot. Fix a positive differentiable scale $\gamma(t)$, for example the root-mean-square norm of the centred logits of the training probes, and let $\hat f_i(t)=f_i(t)/\gamma(t)$.

\begin{lemma}[Direction--scale decomposition]
\label{lem:direction-scale}
Let $\omega_i=\hat f_{i,y_i}-p_i^\top\hat f_i$ and $\Omega_\bullet=n_\bullet^{-1}\sum_{i\in\mathcal E_\bullet}\omega_i$. Then
\begin{equation}
\label{eq:direction-scale}
\dot R_\bullet=\dot\Rdir_\bullet+\dot\Rsc_\bullet,\qquad
\dot\Rdir_\bullet=\frac{\gamma}{n_\bullet}\sum_{i\in\mathcal E_\bullet}(p_i-e_{y_i})^\top\dot{\hat f}_i,\qquad
\dot\Rsc_\bullet=-\dot\gamma\,\Omega_\bullet ,
\end{equation}
and consequently
\begin{equation}
\label{eq:gap-direction-scale}
\dot G=\Gamma^{\mathrm{dir}}+\dot\gamma\,(\Omega_\tr-\Omega_\val),\qquad \Gamma^{\mathrm{dir}}=\dot\Rdir_\val-\dot\Rdir_\tr .
\end{equation}
The predicted label of every example, and hence $\TrainAcc$, $\ValAcc$, and $\GapAcc$, depends on $\hat f_i$ alone and is unchanged by $\gamma$.
\end{lemma}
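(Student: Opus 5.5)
The plan is a chain-rule computation per example, followed by averaging over the split. The starting point is the standard softmax cross-entropy gradient: for fixed label $y$,
\begin{equation*}
\nabla_f\,\ell(f,y)=\mathrm{softmax}(f)-e_y,
\end{equation*}
so along the trajectory $\frac{\od}{\od t}\ell(f_i(t),y_i)=(p_i-e_{y_i})^\top\dot f_i$. This needs only differentiability of $f_i(t)$, which we take as given together with that of $\gamma$. We then substitute $f_i=\gamma\hat f_i$, which gives
\begin{equation*}
\dot f_i=\gamma\dot{\hat f}_i+\dot\gamma\hat f_i,
\end{equation*}
and splits the derivative into $\gamma(p_i-e_{y_i})^\top\dot{\hat f}_i$ plus $\dot\gamma\,(p_i-e_{y_i})^\top\hat f_i$.

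The second term is evaluated directly: $(p_i-e_{y_i})^\top\hat f_i=p_i^\top\hat f_i-\hat f_{i,y_i}=-\omega_i$. It is worth recording that $\omega_i$ is invariant to adding a constant vector to $\hat f_i$, because $p_i$ sums to one. This is consistent with the softmax shift invariance and shows that no centring convention is needed. Averaging over $\mathcal E_\bullet$ with weight $n_\bullet^{-1}$ then yields $\dot R_\bullet=\dot\Rdir_\bullet+\dot\Rsc_\bullet$, where $\dot\Rsc_\bullet=-\dot\gamma\,\Omega_\bullet$.

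Subtracting the training identity from the validation identity gives
\begin{equation*}
\dot G=(\dot\Rdir_\val-\dot\Rdir_\tr)-\dot\gamma\,\Omega_\val+\dot\gamma\,\Omega_\tr,
\end{equation*}
which is exactly Eq.~\eqref{eq:gap-direction-scale}. Both splits share the same scalar $\gamma(t)$, since it is a single function of time; here the scale is computed from the training probes but applied to all examples.

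For the claim about predicted labels, the plan is to use that $\arg\max_c f_{i,c}=\arg\max_c \gamma\hat f_{i,c}=\arg\max_c\hat f_{i,c}$ because $\gamma>0$. Ties should be resolved by the same deterministic rule, which preserves them. Hence every prediction, and with it $\TrainAcc$, $\ValAcc$ and $\GapAcc$ as averages of correctness indicators, is a function of $(\hat f_i)$ alone. It follows that any change in $\gamma$ with $\hat f$ held fixed leaves these quantities unchanged.

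I expect no real obstacle. The only points needing care are the sign bookkeeping in the scale term, so that $-\omega_i$ appears and not $+\omega_i$, and stating the regularity assumption (differentiable logits and $\gamma>0$) under which the chain rule applies.
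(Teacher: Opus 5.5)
Your proposal is correct and follows the paper's proof essentially step for step. Both use the softmax cross-entropy gradient with the product rule $\dot f_i=\dot\gamma\hat f_i+\gamma\dot{\hat f}_i$, the identity $(p_i-e_{y_i})^\top\hat f_i=-\omega_i$, averaging over each split and subtracting, and the fact that multiplying by $\gamma>0$ preserves the coordinate order and hence the maximizer; your extra remark that $\omega_i$ is invariant to constant shifts of $\hat f_i$ is correct but not needed.
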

\begin{proof}
The product rule gives $\dot f_i=\dot\gamma\hat f_i+\gamma\dot{\hat f}_i$, and $\nabla_f\ell(f,y)=\mathrm{softmax}(f)-e_y$. Hence $\frac{\od}{\od t}\ell(f_i,y_i)=\gamma(p_i-e_{y_i})^\top\dot{\hat f}_i+\dot\gamma(p_i-e_{y_i})^\top\hat f_i$, and $(p_i-e_{y_i})^\top\hat f_i=-\omega_i$. Averaging over $\mathcal E_\bullet$ gives Eq.~\eqref{eq:direction-scale}; subtracting the training identity from the validation identity gives Eq.~\eqref{eq:gap-direction-scale}. Multiplying $f_i$ by $\gamma>0$ preserves the order of its coordinates and hence its maximizer.
\end{proof}

The identity holds for every positive differentiable $\gamma$; the choice of $\gamma$ only fixes which part of the logit motion is called uniform growth. The scale coefficient has a definite sign structure (Figure~\ref{fig:direction-scale}). Since $p_i^\top\hat f_i$ is a probability-weighted average of the coordinates of $\hat f_i$, $\omega_i\ge0$ for every correctly classified example. In a binary task with margin $\varphi_i=f_{i,y_i}-f_{i,1-y_i}$, $\omega_i=\sigma(-\varphi_i)\varphi_i/\gamma$, so $\omega_i<0$ for every misclassified example. In general, let $\delta_i=\max_k\hat f_{i,k}-\hat f_{i,y_i}\ge0$ be the normalized deficit of the true class and $\Delta_\bullet$ its split mean. If $\hat f_i$ is held fixed with a unique maximizer and $\gamma\to\infty$, then $\omega_i\to-\delta_i$, so uniform growth of the logits raises each split's cross-entropy at rate $\dot\gamma\Delta_\bullet$ and the gap at rate $\dot\gamma(\Delta_\val-\Delta_\tr)$, while no prediction changes. This is the familiar late-training regime in which validation cross-entropy rises although validation accuracy does not fall \citep{guo2017calibration}. Figure~\ref{fig:endpoint-lemma} shows how the two channels of Eq.~\eqref{eq:gap-direction-scale} produce the different shapes of the accuracy and cross-entropy trajectories.

\begin{figure}[t]
\centering
\includegraphics[width=\linewidth]{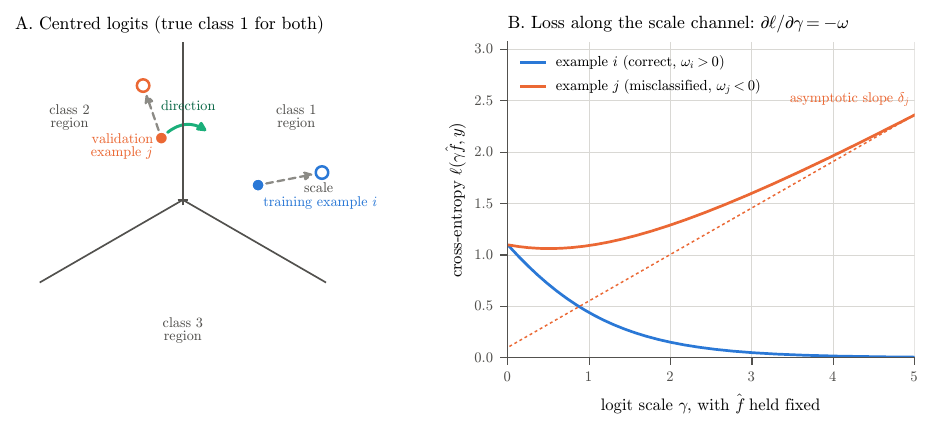}
\caption{The decomposition of Lemma~\ref{lem:direction-scale} for a three-class task (schematic). A: centred logits of a training example $i$ (blue) and a validation example $j$ (orange), both of true class 1; filled markers show $\hat f$, hollow markers the same direction at a larger scale. Decision boundaries are rays from the origin, so moving along a ray (dashed grey arrows, the scale channel $\dot\gamma\hat f$) never changes a prediction, whereas a change of $\hat f$ (green arrow, the direction channel) can. B: cross-entropy of the two examples as $\gamma$ grows with $\hat f$ fixed. It decreases for the correctly classified example ($\omega_i>0$) and increases for the misclassified one ($\omega_j<0$), approaching slope $\delta_j$, the deficit of its true class.}
\label{fig:direction-scale}
\end{figure}

\subsection{Directional Transfer and Separation}
Let $r^{\mathrm{dir}}(t)=-\dot\Rdir_\tr(t)$ be the directional reduction rate of the training loss.
\begin{assumption}[Directional transfer]
\label{ass:transfer}
On the interval $J$, there are fixed held-out transfer efficiencies $\tau_u>\tau_v$ and an error bound $\epsilon(t)\ge0$ such that
\begin{equation}
\label{eq:transfer}
\Bigl|-\dot\Rdir_\val(t)-\bigl[(1-\lambda(t))\tau_u+\lambda(t)\tau_v\bigr]r^{\mathrm{dir}}(t)\Bigr|\le\epsilon(t).
\end{equation}
\end{assumption}
The efficiencies measure held-out directional reduction per unit of training-side directional reduction; negative values are allowed. The assumption constrains only the directional channel. The scale channel is accounted for exactly by Lemma~\ref{lem:direction-scale} rather than being absorbed into $\epsilon(t)$, which collects Taylor remainders, minibatch noise, preconditioning effects, and variation of the efficiencies (Appendix~\ref{app:first-order-risk}).

\begin{proposition}[Lower residual transfer raises directional separation]
\label{prop:directional-transfer}
Under Assumption~\ref{ass:transfer} with $r^{\mathrm{dir}}(t)>0$,
\begin{equation}
\bigl|\Gamma^{\mathrm{dir}}(t)-r^{\mathrm{dir}}(t)\,\Pi(\lambda(t))\bigr|\le\epsilon(t),\qquad \Pi(\lambda)=(1-\tau_u)+\lambda(\tau_u-\tau_v),
\end{equation}
so the separation per unit of directional training improvement satisfies $\od\Pi/\od\lambda=\tau_u-\tau_v>0$, and
\begin{equation}
\label{eq:gap-growth-condition}
\Gamma^{\mathrm{dir}}(t)\ge r^{\mathrm{dir}}(t)\,\Pi(\lambda(t))-\epsilon(t)>0\quad\text{whenever}\quad r^{\mathrm{dir}}(t)\,\Pi(\lambda(t))>\epsilon(t).
\end{equation}
In the noiseless case $\Gamma^{\mathrm{dir}}>0$ is equivalent to
\begin{equation}
\label{eq:effective-transfer-condition}
\bigl(1-\lambda(t)\bigr)\tau_u+\lambda(t)\tau_v<1 .
\end{equation}
For the full cross-entropy gap, Lemma~\ref{lem:direction-scale} gives
\begin{equation}
\label{eq:full-gap-bound}
\dot G(t)\ge r^{\mathrm{dir}}(t)\,\Pi(\lambda(t))-\epsilon(t)+\dot\gamma(t)\bigl(\Omega_\tr(t)-\Omega_\val(t)\bigr).
\end{equation}
\end{proposition}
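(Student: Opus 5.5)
The argument is algebra around Assumption~\ref{ass:transfer}, the definition $\Gamma^{\mathrm{dir}}=\dot\Rdir_\val-\dot\Rdir_\tr$ from Lemma~\ref{lem:direction-scale}, and the exact decomposition Eq.~\eqref{eq:gap-direction-scale}.

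\textbf{Step 1: the approximation bound.} Substitute $-\dot\Rdir_\tr=r^{\mathrm{dir}}$ to write
\[
\Gamma^{\mathrm{dir}}=r^{\mathrm{dir}}-\bigl(-\dot\Rdir_\val\bigr).
\]
Set $T(\lambda)=(1-\lambda)\tau_u+\lambda\tau_v$, and observe that
\[
1-T(\lambda)=(1-\tau_u)+\lambda(\tau_u-\tau_v)=\Pi(\lambda).
\]
This gives the exact rearrangement
\[
\Gamma^{\mathrm{dir}}-r^{\mathrm{dir}}\Pi(\lambda)=-\bigl[-\dot\Rdir_\val-T(\lambda)\,r^{\mathrm{dir}}\bigr].
\]
The absolute value of the bracket is at most $\epsilon(t)$ by Eq.~\eqref{eq:transfer}, which proves the first bound. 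The derivative $\od\Pi/\od\lambda=\tau_u-\tau_v>0$ follows immediately from linearity in $\lambda$ and the assumption $\tau_u>\tau_v$.

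\textbf{Step 2: lower bound and positivity.} The lower bound $\Gamma^{\mathrm{dir}}\ge r^{\mathrm{dir}}\Pi-\epsilon$ is the lower half of the two-sided bound from Step~1. Strict positivity in Eq.~\eqref{eq:gap-growth-condition} holds whenever $r^{\mathrm{dir}}\Pi>\epsilon$.

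\textbf{Step 3: the noiseless case.} Take $\epsilon=0$. The bound then collapses to the identity $\Gamma^{\mathrm{dir}}=r^{\mathrm{dir}}\Pi(\lambda)$. Since $r^{\mathrm{dir}}>0$, the sign of $\Gamma^{\mathrm{dir}}$ is the sign of $\Pi(\lambda)$. Using $\Pi(\lambda)=1-T(\lambda)$, we get $\Gamma^{\mathrm{dir}}>0$ if and only if $T(\lambda)<1$, which is Eq.~\eqref{eq:effective-transfer-condition}. The hypothesis $r^{\mathrm{dir}}>0$ is used only here, to divide through without flipping the inequality.

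\textbf{Step 4: the full gap.} Insert the Step~2 lower bound into Eq.~\eqref{eq:gap-direction-scale}, $\dot G=\Gamma^{\mathrm{dir}}+\dot\gamma(\Omega_\tr-\Omega_\val)$. The scale term enters the identity exactly and is merely carried along, so this yields Eq.~\eqref{eq:full-gap-bound}.

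No step poses a real obstacle; the whole proof is bookkeeping. The one place that needs care is Step~1, where the sign conventions must line up: $r^{\mathrm{dir}}$ is defined as the negative of $\dot\Rdir_\tr$, and Assumption~\ref{ass:transfer} is stated for the validation-side reduction $-\dot\Rdir_\val$.
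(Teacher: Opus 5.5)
Your proposal is correct and follows the paper's proof. The paper likewise sets $\zeta=-\dot R^{\mathrm{dir}}_{\mathrm{val}}-[(1-\lambda)\tau_u+\lambda\tau_v]\,r^{\mathrm{dir}}$ with $|\zeta|\le\epsilon$, writes $\Gamma^{\mathrm{dir}}=r^{\mathrm{dir}}\Pi(\lambda)-\zeta$, and from this reads off the two-sided and lower bounds, the noiseless equivalence (using $r^{\mathrm{dir}}>0$), and the full-gap bound by adding the scale term of the direction--scale decomposition.
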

The proof is given in Appendix~\ref{app:proof-directional-transfer}.
When all normalized logits $\hat f_i$ are fixed, $r^{\mathrm{dir}}=0$, while the cross-entropy gap can still grow through the scale term without any change in predictions.

\subsection{Combined Local Result}
\begin{theorem}[Coupled growth of dispersion and directional separation]
\label{thm:combined}
On an interval $J$ where $A,B>0$, the logits and $\gamma$ are continuously differentiable, and $r^{\mathrm{dir}}>0$, suppose that (i) Assumption~\ref{ass:rates} holds; (ii) Assumption~\ref{ass:mixture} holds with $e_i=0$, $\sigma_v>0$ continuously differentiable, and $\dot\sigma_v/\sigma_v>-(1-\lambda)(\kappa_u-\kappa_v)$; (iii) Assumption~\ref{ass:transfer} holds with $\epsilon=0$; and (iv) $(1-\lambda)\tau_u+\lambda\tau_v<1$. Then $\dot\lambda>0$, $\dot\Dden>0$, $\Gamma^{\mathrm{dir}}>0$, and $\od\Pi/\od\lambda=\tau_u-\tau_v>0$ on $J$. If in addition $\dot\gamma(\Omega_\tr-\Omega_\val)\ge0$, then $\dot G>0$.
\end{theorem}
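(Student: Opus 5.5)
The proof assembles the earlier results pointwise on $J$, taking the four conclusions in order.

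\emph{Step 1: $\dot\lambda>0$.} Since $A,B>0$, we have $\lambda=B/(A+B)\in(0,1)$, and $\lambda$ is differentiable because $A,B$ solve the linear ODEs of Assumption~\ref{ass:rates}. Differentiating the quotient and using $\dot A=-\kappa_uA$, $\dot B=-\kappa_vB$ gives Eq.~\eqref{eq:lambda-dynamics}. All three factors $\lambda$, $1-\lambda$ and $\kappa_u-\kappa_v$ are positive, so $\dot\lambda>0$.

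\emph{Step 2: $\dot\Dden>0$.} With $e_i=0$, Theorem~\ref{thm:observable-dispersion} gives the exact identity $\Dden=\lambda^2\sigma_v^2$ on $J$. This is continuously differentiable and strictly positive, because $\lambda\in(0,1)$ and $\sigma_v>0$. Logarithmic differentiation yields Eq.~\eqref{eq:relative-growth}:
\[
\frac{\dot\Dden}{2\Dden}=\frac{\dot\lambda}{\lambda}+\frac{\dot\sigma_v}{\sigma_v},\qquad \frac{\dot\lambda}{\lambda}=(1-\lambda)(\kappa_u-\kappa_v).
\]
Hypothesis~(ii) says exactly that the right-hand side is positive. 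Since $\Dden>0$, it follows that $\dot\Dden>0$.

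\emph{Step 3: $\Gamma^{\mathrm{dir}}>0$ and $\od\Pi/\od\lambda>0$.} Proposition~\ref{prop:directional-transfer} with $\epsilon=0$ gives $\Gamma^{\mathrm{dir}}=r^{\mathrm{dir}}\Pi(\lambda)$. The identity is elementary: $-\dot\Rdir_\val=[(1-\lambda)\tau_u+\lambda\tau_v]r^{\mathrm{dir}}$ and $\dot\Rdir_\tr=-r^{\mathrm{dir}}$, so
\[
\Gamma^{\mathrm{dir}}=r^{\mathrm{dir}}\bigl(1-(1-\lambda)\tau_u-\lambda\tau_v\bigr)=r^{\mathrm{dir}}\Pi(\lambda).
\]
Hypothesis~(iv) gives $\Pi(\lambda)>0$, and $r^{\mathrm{dir}}>0$ by assumption, so $\Gamma^{\mathrm{dir}}>0$. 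The derivative $\od\Pi/\od\lambda=\tau_u-\tau_v$ is positive by Assumption~\ref{ass:transfer}.

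\emph{Step 4: $\dot G>0$.} The logits and $\gamma$ are $C^1$, so Lemma~\ref{lem:direction-scale} applies and gives
\[
\dot G=\Gamma^{\mathrm{dir}}+\dot\gamma(\Omega_\tr-\Omega_\val).
\]
The first term is strictly positive by Step~3, and the second is nonnegative by the additional hypothesis, so $\dot G>0$.

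The only point needing care is regularity. Each derivative used must exist: $\lambda$ is $C^1$ via $A,B$; $\Dden$ is $C^1$ via $\lambda$ and $\sigma_v$; and $\Rdir$, $\Rsc$ are $C^1$ because the logits and $\gamma>0$ are $C^1$. Strict positivity of $\lambda$ and $\sigma_v$ is also required, so that the logarithmic derivative in Step~2 is defined. Otherwise the result is a direct chaining of Theorem~\ref{thm:observable-dispersion}, Proposition~\ref{prop:directional-transfer} and Lemma~\ref{lem:direction-scale}, with no substantive obstacle.
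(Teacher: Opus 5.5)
Your proposal is correct and follows the paper's proof step for step. The paper also argues that $0<\lambda<1$ and Eq.~\eqref{eq:lambda-dynamics} give $\dot\lambda>0$, and that the noiseless identity $\Dden=\lambda^2\sigma_v^2$ with Eq.~\eqref{eq:relative-growth} and condition (ii) gives $\dot\Dden>0$; it then obtains $\Gamma^{\mathrm{dir}}=r^{\mathrm{dir}}\Pi>0$ from Proposition~\ref{prop:directional-transfer} with $\epsilon=0$ and condition (iv), and concludes $\dot G>0$ via Eq.~\eqref{eq:gap-direction-scale}, with your regularity remarks only making explicit what the paper leaves implicit.
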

The proof is given in Appendix~\ref{app:proof-combined}.

\subsection{From the Theory to the Measurements}
\label{sec:predictions}

\paragraph{Accuracy records decision-boundary crossings.}
Positive uniform rescaling of the logits preserves predictions, while motion of the normalized logits changes accuracy only when it changes prediction correctness (Lemma~\ref{lem:direction-scale}). Thus $\GapAcc$ is the primary endpoint that is unaffected by uniform scale growth; $\GapLoss$ also records changes in the loss during training. The results below specify the additional margin and boundary-crossing conditions for accuracy-gap growth.

\paragraph{Binary allocations measure margin sensitivity.}
\begin{proposition}[Confidence-weight invariance of binary allocations]
\label{prop:binary-confidence}
In a binary task, let $\varphi_i(\theta)=f_{y_i}(x_i;\theta)-f_{1-y_i}(x_i;\theta)$ be the margin of probe $i$, so that $\ell(z_i;\theta)=\log(1+e^{-\varphi_i(\theta)})$. Then
\begin{equation}
\nabla_{\theta_{\mathcal C}}\ell(z_i;\theta)=-\sigma(-\varphi_i)\,\nabla_{\theta_{\mathcal C}}\varphi_i,\qquad S_i=\sigma(-\varphi_i)\,\|\nabla_{\theta_{\mathcal C}}\varphi_i\|_1,
\end{equation}
and, whenever $\nabla_{\theta_{\mathcal C}}\varphi_i\ne0$, $q_i=|\nabla_{\theta_{\mathcal C}}\varphi_i|/\|\nabla_{\theta_{\mathcal C}}\varphi_i\|_1$.
\end{proposition}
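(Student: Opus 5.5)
The plan is a direct chain-rule computation followed by the normalization step from Eq.~\eqref{eq:allocation}.

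\emph{Step 1: rewrite the loss in terms of the margin.} For a binary task with logits $f_0,f_1$ and label $y_i$, write $\mathrm{softmax}(f)_{y_i}=e^{f_{y_i}}/(e^{f_{y_i}}+e^{f_{1-y_i}})$. Dividing numerator and denominator by $e^{f_{y_i}}$ gives $\ell(z_i;\theta)=\log(1+e^{-\varphi_i(\theta)})$. The loss therefore depends on the monitored parameters only through the scalar $\varphi_i$.

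\emph{Step 2: differentiate.} Apply the chain rule with respect to $\theta_{\mathcal C}$, using
\[
\frac{\od}{\od\varphi}\log(1+e^{-\varphi})=-\frac{e^{-\varphi}}{1+e^{-\varphi}}=-\sigma(-\varphi).
\]
This yields $g_i=-\sigma(-\varphi_i)\nabla_{\theta_{\mathcal C}}\varphi_i$. Since $\sigma(-\varphi_i)>0$, taking entrywise absolute values gives
\[
|g_i|=\sigma(-\varphi_i)\,|\nabla_{\theta_{\mathcal C}}\varphi_i|.
\]
Summing over $c\in\mathcal C$ gives the stated formula for $S_i$.

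\emph{Step 3: normalize.} If $\nabla_{\theta_{\mathcal C}}\varphi_i\neq0$, then its $\ell_1$ norm is positive, and with $\sigma(-\varphi_i)>0$ this gives $S_i>0$. So $q_i$ is defined by Eq.~\eqref{eq:allocation}. The positive scalar $\sigma(-\varphi_i)$ cancels between numerator and denominator, which leaves the claimed expression for $q_i$. This is the same scale-invariance that underlies Proposition~\ref{prop:sign}, here applied with a positive scalar rather than sign flips.

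\emph{Main obstacle.} There is no real difficulty; the only point needing care is the definitional one. The cancellation in Step 3 requires the confidence weight $\sigma(-\varphi_i)$ to be strictly positive, which holds for every finite margin. It also requires $\nabla_{\theta_{\mathcal C}}\varphi_i\ne0$, since otherwise $S_i=0$ and the allocation is undefined. The hypothesis of the proposition is stated precisely to cover this case.
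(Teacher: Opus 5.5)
Your proposal is correct and follows the paper's proof essentially step for step. You rewrite the binary cross-entropy as $\log(1+e^{-\varphi_i})$, apply the chain rule with derivative $-\sigma(-\varphi_i)$, and cancel the positive scalar between the entrywise absolute value and the $\ell_1$ norm; your explicit check that $\sigma(-\varphi_i)>0$ and $\nabla_{\theta_{\mathcal C}}\varphi_i\ne0$ together give $S_i>0$ is a small addition that the paper leaves implicit.
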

\begin{proof}
For two classes, $-\log\mathrm{softmax}(f)_y=\log(1+e^{-(f_y-f_{1-y})})$, whose derivative with respect to the margin is $-\sigma(-\varphi)$. The chain rule gives the gradient; the positive scalar $\sigma(-\varphi_i)$ factors out of the entrywise absolute value and the $\ell_1$ norm, so it cancels in $q_i$ and remains in $S_i$.
\end{proof}
The weight $\sigma(-\varphi_i)=1-p_{i,y_i}$ is the only route through which a probe's confidence enters its cross-entropy gradient. In the binary language tasks, $q_i$ therefore describes how sensitivity of the margin is allocated over the monitored coordinates, not how confident the probe is, and uniform logit growth changes $q_i$ only through the change of $\nabla\varphi_i$. Confidence scales the mass: since $\sigma(-\varphi_i)\le e^{-\varphi_i}$, the confidence weight suppresses $S_i$ exponentially for a fixed margin-gradient norm, which can produce the near-zero tail treated in Appendix~\ref{app:norm-stability}. With more than two classes the gradient $\sum_k(p_{i,k}-e_{y_i,k})\nabla f_{i,k}$ does not factor through one scalar, so this invariance is exact only for binary tasks.

\paragraph{Accuracy changes across checkpoints.}
Proposition~\ref{prop:binary-confidence} connects the language readout to margin sensitivity. The next result describes how changes in those margins affect empirical accuracy, for binary and multiclass tasks alike.

\begin{proposition}[Finite-sample accuracy-gap change]
\label{prop:accuracy-crossings}
For two checkpoints $t_0<t_1$ and fixed finite, nonempty evaluation sets $\mathcal E_S$, $S\in\{\tr,\val\}$, let $C_S^+$ count examples changing from incorrect to correct and $C_S^-$ count examples changing from correct to incorrect, using a fixed tie-breaking rule. With $n_S=|\mathcal E_S|$ and $\Delta\GapAcc=\GapAcc(t_1)-\GapAcc(t_0)$,
\begin{equation}
\label{eq:accuracy-crossings}
\Delta\GapAcc=\frac{C_\tr^+-C_\tr^-}{n_\tr}-\frac{C_\val^+-C_\val^-}{n_\val}.
\end{equation}
For a margin-based sufficient condition, set $m_i^r=\hat f_{i,y_i}(t_r)-\max_{k\ne y_i}\hat f_{i,k}(t_r)$ for $r\in\{0,1\}$ and $\Delta m_i=m_i^1-m_i^0$, and assume $m_i^r\ne0$ at both endpoints. Define the initial fraction of incorrectly classified examples within distance $h\ge0$ of the boundary by
\begin{equation}
B_S(h)=\frac{1}{n_S}\#\{i\in\mathcal E_S:-h<m_i^0\le0\}.
\end{equation}
Suppose $a>0$, $b\ge0$, and (i) $\Delta m_i\ge0$ for every training example; (ii) $\Delta m_i\ge a$ for every training example with $-a<m_i^0\le0$; and (iii) $\Delta m_i\le b$ for every validation example. Then
\begin{equation}
\label{eq:accuracy-margin-bound}
\Delta\GapAcc\ge B_\tr(a)-B_\val(b).
\end{equation}
In particular, $B_\tr(a)>B_\val(b)$ implies a growing accuracy gap.
\end{proposition}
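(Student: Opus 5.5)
The plan is to prove the identity first by counting correctness indicators, and then to bound each crossing count using the margin hypotheses. Write $c_i(t)\in\{0,1\}$ for the indicator that example $i$ is correctly classified at checkpoint $t$ under the fixed tie-breaking rule. Then $\TrainAcc(t_r)=n_\tr^{-1}\sum_{i\in\mathcal E_\tr}c_i(t_r)$, and the same holds for validation.

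\textbf{Step 1: the identity.} Each difference $c_i(t_1)-c_i(t_0)$ lies in $\{-1,0,+1\}$. It equals $+1$ exactly for the incorrect-to-correct examples and $-1$ exactly for the correct-to-incorrect ones. Summing over $\mathcal E_S$ and dividing by $n_S$ gives $\Delta\mathrm{Acc}_S=(C_S^+-C_S^-)/n_S$. Subtracting the validation expression from the training expression gives Eq.~\eqref{eq:accuracy-crossings}. The sets $\mathcal E_S$ are fixed, so no other terms appear.

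\textbf{Step 2: link correctness to margin sign.} By Lemma~\ref{lem:direction-scale}, predictions depend only on $\hat f_i$. Since $m_i^r\neq0$, the maximizer of $\hat f_i(t_r)$ is unique whenever $y_i$ is involved in a potential tie. Hence $c_i(t_r)=1$ if and only if $m_i^r>0$, and the tie-breaking rule never matters. This is the one point I expect to need care. The hypothesis $m_i^r\ne0$ is what removes ties. It also turns the condition $-h<m_i^0\le0$ in $B_S(h)$ into $-h<m_i^0<0$, so every example counted in $B_S(h)$ is initially incorrect.

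\textbf{Step 3: training side.} If $m_i^0>0$, then (i) gives $m_i^1\ge m_i^0>0$, so $C_\tr^-=0$. If $-a<m_i^0<0$, then (ii) gives $m_i^1\ge m_i^0+a>0$, so every such example crosses to correct. Therefore $C_\tr^+\ge n_\tr B_\tr(a)$.

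\textbf{Step 4: validation side.} Drop $C_\val^-\ge0$, which only lowers the subtracted term, so it is enough to bound $C_\val^+$ from above. Take a validation example that crosses to correct. It has $m_i^0<0<m_i^1\le m_i^0+b$ by (iii), so $-b<m_i^0<0$. Hence $C_\val^+\le n_\val B_\val(b)$. When $b=0$ both sides vanish, so the bound still holds.

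Combining Steps 3 and 4 in Eq.~\eqref{eq:accuracy-crossings} gives $\Delta\GapAcc\ge B_\tr(a)-B_\val(b)$. The final claim about a growing accuracy gap follows immediately.
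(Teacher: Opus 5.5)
Your proof is correct and follows the paper's own argument essentially step for step. You use the indicator-counting identity for $\Delta\GapAcc$ and the no-tie equivalence of correctness with $m_i^r>0$, which yield $C_\tr^-=0$ and $C_\tr^+\ge n_\tr B_\tr(a)$ from (i)--(ii), and $C_\val^+\le n_\val B_\val(b)$ from (iii) after discarding $C_\val^-\ge0$.
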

The proof is given in Appendix~\ref{app:boundary}. The conditions specify when training improvements correct more examples than the corresponding validation improvements: the size of the margin shifts and the number of examples near the boundary both matter. Combined with the dispersion-growth condition of Theorem~\ref{thm:observable-dispersion}, the strict inequality above gives joint increases of $\Dden$ and $\GapAcc$ across the same checkpoints. Assumption~\ref{ass:transfer} concerns average directional cross-entropy reduction; the margin conditions specify the additional link to prediction changes.

\paragraph{Levels and local changes.}
Theorem~\ref{thm:combined} gives local co-growth of dispersion and directional cross-entropy separation; Proposition~\ref{prop:accuracy-crossings} supplies the boundary conditions for accuracy changes. Level correlations examine whether the readout and the measured gap accumulate together along a trajectory. The language study additionally uses changes over matched intervals to test whether larger readout increments accompany larger gap increments. This tests co-variation in increment magnitudes, which is not implied by joint positive increments. Time controls and change analyses test whether the association extends beyond a shared training trend.

\paragraph{Prediction for the cross-entropy gap.}
The scale channel can change $\GapLoss$ without changing any prediction. The account therefore imposes no co-movement of the readouts with this part of the gap, and it allows their association with $\GapLoss$ to be weaker than with $\GapAcc$. This motivates reporting validation loss alongside accuracy in the controlled hierarchy and comparing the readout associations with both gap endpoints, including one-epoch changes, in the language study. A rise in validation cross-entropy without a decline in accuracy is compatible with the scale channel; this joint behaviour alone does not identify that channel.

\paragraph{Falsification conditions.}
The account would be contradicted in the constructed hierarchy if a larger private share did not enlarge the accuracy gap or the validation-side private burden of the masked inputs, or if the readouts lost their positive association with the gap at some private share. In real models, the observable link would be weakened if the within-class association disappeared, if the association vanished after removing a linear time trend, or if the changes of the readouts were unrelated to changes of the accuracy gap. With approximation error, Theorem~\ref{thm:observable-dispersion} supplies the margin $\eta$ and Assumption~\ref{ass:transfer} the margin $\epsilon$. These checks examine the predicted patterns in private support, within-class structure, and local training dynamics.

\section{Experimental Design}
\label{sec:design}

The experiments test the account in three settings (Table~\ref{tab:cohorts}). A constructed feature hierarchy makes the sharing structure explicit, so we can vary the private share and inspect the contribution of each level. Language fine-tuning then tests whether the training-only readouts $\Dden$ and $\Dw$ follow the accuracy gap on real datasets. A vision study tests whether the same observable relationship extends to a different data domain. Each condition has five independent runs. Appendix~\ref{app:repro} collects the construction details, monitoring and evaluation protocol, hyperparameter selection, and training settings. Appendix~\ref{app:history} retains the earlier experimental summaries and the comparisons with the values reported during the discussion.

\begin{table}[htbp]
\centering
\caption{Experimental studies and the questions they address. The normalized readouts $\Dden$ and $\Dw$ are computed from training probes in all three studies.}
\label{tab:cohorts}
\small
\begin{tabular}{@{}p{0.16\linewidth}p{0.33\linewidth}p{0.44\linewidth}@{}}
\toprule
Study & Testbed & Question and measurements\\
\midrule
Controlled hierarchy & Pretrained ResMLP-24; five constructed feature levels & Private-share effects on the gap, unnormalized dispersion, level-wise masked cross-entropy, and the primary normalized readouts\\
Language & RoBERTa-base, DeBERTa-base, Qwen3-1.7B; six language tasks & Whether $\Dden$ and $\Dw$ follow the accuracy gap during real-data fine-tuning\\
Vision & Pretrained ResNet-18; CIFAR-10, CIFAR-100, STL-10 & Whether the same two readouts follow the gap across data domains\\
\bottomrule
\end{tabular}
\end{table}

\subsection{Controlled Five-Level Hierarchy}
\label{sec:design-hierarchy}
The hierarchy provides a controlled test in which the support available to each example is known. We ask how the private share affects the gap, whether unnormalized gradient dispersion grows together with the gap, and how the relative cross-entropy burden of the feature levels evolves on the training and validation splits. We then test whether the normalized readouts $\Dden$ and $\Dw$ capture this evolution from training probes alone.

Each synthetic example of class $y$ concatenates five feature blocks,
\begin{equation}
x=\Bigl[\sqrt{w_g}\,u_{y};\ \sqrt{w_l}\,v^{(l)}_{c_l,y};\ \sqrt{w_m}\,v^{(m)}_{c_m,y};\ \sqrt{w_s}\,v^{(s)}_{c_s,y};\ \sqrt{w_p}\,e_{x}\Bigr]+\eta ,
\end{equation}
where $u_y$ is a class prototype shared by all examples of that class, $v^{(l)}_{c_l,y}$, $v^{(m)}_{c_m,y}$, and $v^{(s)}_{c_s,y}$ are prototypes shared within large, medium, and small communities, $e_x$ is a coordinate owned by the example alone, and $\eta$ is feature noise. The five levels thus range from broadly shared class structure through progressively narrower communities to sample-private support. No private coordinate is shared between a training example and a validation example.

We vary the target private share $p\in\{.3,.5,.7\}$ while keeping the four non-private levels in the ratio $1{:}2{:}3{:}4$:
\begin{equation}
\label{eq:target-mixture}
\bar w(p)=\Bigl(\tfrac{1-p}{10},\tfrac{2(1-p)}{10},\tfrac{3(1-p)}{10},\tfrac{4(1-p)}{10},p\Bigr).
\end{equation}
This construction tests the response of the gap and the readouts to the amount of private support while holding the relative composition of reusable support fixed. The feature vector is mapped to the input of a pretrained ResMLP-24 \citep{touvron2021resmlp}; the generator, split, and input mapping are specified in Appendix~\ref{app:hierarchy-construction}.

To inspect the feature-level loss burden, we evaluate each split with only one level retained and all other blocks set to zero. For level $h$, split $S$, and epoch $t$, let $\bar L^{(S)}_{h,t}$ be the masked cross-entropy averaged over the five runs; its share is $s^{(S)}_{h,t}=\bar L^{(S)}_{h,t}/\sum_{h'}\bar L^{(S)}_{h',t}$. Comparing these shares tests whether private-only inputs retain a larger relative loss on validation examples as training proceeds. The shares describe losses under single-level inputs; they are not additive components of the unmasked loss. The unnormalized companions $U_{\mathrm{den}}$ and $\Uw$ retain gradient magnitude, while $\Dden$ and $\Dw$ measure the dispersion of its normalized allocation (Section~\ref{sec:measurement}).

\subsection{Tests on Real Language Datasets}
\label{sec:design-real}
The real-data study asks whether the same two normalized readouts follow train--validation separation when feature sharing is not constructed or labelled. We fine-tune RoBERTa-base \citep{liu2019roberta}, DeBERTa-base \citep{he2020deberta}, and Qwen3-1.7B \citep{yang2025qwen3} on BoolQ, CoLA, MRPC, RTE, SST-2, and WiC from GLUE and SuperGLUE \citep{wang2019glue,wang2019superglue}. The three backbones and six tasks yield 90 runs.

At each checkpoint, $\Dden$ and $\Dw$ are computed on fixed training probes and paired with the accuracy gap of that run. The pooled readout tests the overall allocation change; the weighted within-class readout tests whether the association remains after removing differences between class centres. Class-specific and cross-entropy analyses provide supplementary checks. This study tests the observable consequences of the account on real data; it does not require assigning real-model features to the constructed hierarchy levels.

\subsection{Cross-Domain Vision Extension}
\label{sec:design-vision}
We apply the same readouts to a pretrained ResNet-18 on CIFAR-10, CIFAR-100, and STL-10. This study asks whether the association extends from language to image classification. Both $\Dden$ and $\Dw$ use all classes of each dataset, and are compared with the corresponding accuracy gap. Section~\ref{sec:results-vision} gives the main results; Appendix~\ref{app:results-vision} gives the individual-run evidence, and Appendix~\ref{app:history-vision} documents the class-aggregation correction.

\subsection{Additional Checks}
\label{sec:design-checks}
The supplementary analyses examine whether the main findings depend on the readout or on common features of training trajectories. Time-trend controls, and correlations of matched changes in the language study, test the association beyond a shared training trend. The cross-entropy checks examine the distinction between changes of prediction and changes of logit scale in Lemma~\ref{lem:direction-scale}. The module-level readout $\Dmod$ tests what is retained when coordinate allocations are grouped by projection. Sensitivity analyses vary monitoring depth and probe count, exclude near-zero-gradient probes, and compare numerical precision, including FP32 and BF16. Sections~\ref{sec:results-checks} and~\ref{sec:sensitivity-summary} summarize these results; Appendices~\ref{app:results} and~\ref{app:sensitivity} give the detailed analyses.

\subsection{Statistical Analysis}
\label{sec:design-statistics}
Pearson correlations are computed within each run across its checkpoints and then averaged with equal run weights. We report means, between-run standard deviations, and the number of runs with a positive association; checkpoints within a run are not treated as independent replicates. Raw trajectories and separately LOWESS-smoothed trajectories provide two views of the levels. Linear-time controls correlate residuals after removing a linear epoch trend, and the language change analyses correlate changes over matched temporal windows. LOWESS settings, temporal pairing, additional trend controls, and exact intervals for the positive-run fraction are specified in Appendix~\ref{app:statistics-details}.

\section{Results}
\label{sec:results}

We first test the evidence chain in the controlled hierarchy: the private share, the two gaps, the loss of single-level masked inputs, and the normalized and unnormalized readouts (Section~\ref{sec:results-resmlp}). We then examine the two primary readouts in language fine-tuning (Section~\ref{sec:results-nlp}) and their extension to vision (Section~\ref{sec:results-vision}). Section~\ref{sec:results-checks} brings together the time controls, changes over matched intervals, cross-entropy-gap analysis, module-level ablation, and sensitivity checks. Appendix~\ref{app:final-run-metrics} reports the final training and validation accuracies and losses, $\Dden$, and $\Dw$ for every main-study run.

\subsection{Controlled Hierarchy}
\label{sec:results-resmlp}

\paragraph{The private share sets the size of the gap.}
Raising the target private share from $.3$ to $.5$ to $.7$ raises the final accuracy gap from $\NumResGapThree$ to $\NumResGapFive$ to $\NumResGapSeven$ and the final cross-entropy gap from $\NumResLossGapThree$ to $\NumResLossGapFive$ to $\NumResLossGapSeven$ (Table~\ref{tab:resmlp-main}). Training accuracy approaches one in all conditions, while final validation accuracy falls from $\NumResValAccThree\%$ to $\NumResValAccFive\%$ to $\NumResValAccSeven\%$ (Figure~\ref{fig:resmlp-behaviour}A).

\begin{table}[htbp]
\centering
\caption{Controlled hierarchy. Final gaps are five-run means at epoch 25. Correlations are within-run Pearson $r$ between each readout and the accuracy gap, averaged over the five runs, as raw / LOWESS, with the number of runs whose raw correlation is positive.}
\label{tab:resmlp-main}
\small
\resizebox{\linewidth}{!}{
\begin{tabular}{ccccc}
\toprule
Private share $p$ & Final $\GapAcc$ & Final $\GapLoss$ & $D_{\mathrm{within}}^{\mathrm{wt}}$: raw / LOWESS (positive) & $D_{\mathrm{den}}$: raw / LOWESS (positive)\\
\midrule
.3 & .185 & 1.398 & .703 / .757 (5/5) & .911 / .951 (5/5)\\
.5 & .331 & 2.251 & .632 / .679 (5/5) & .777 / .812 (5/5)\\
.7 & .527 & 3.878 & .891 / .911 (5/5) & .972 / .984 (5/5)\\
\bottomrule
\end{tabular}
}
\end{table}

\begin{figure}[!htbp]
\centering
\includegraphics[width=\linewidth]{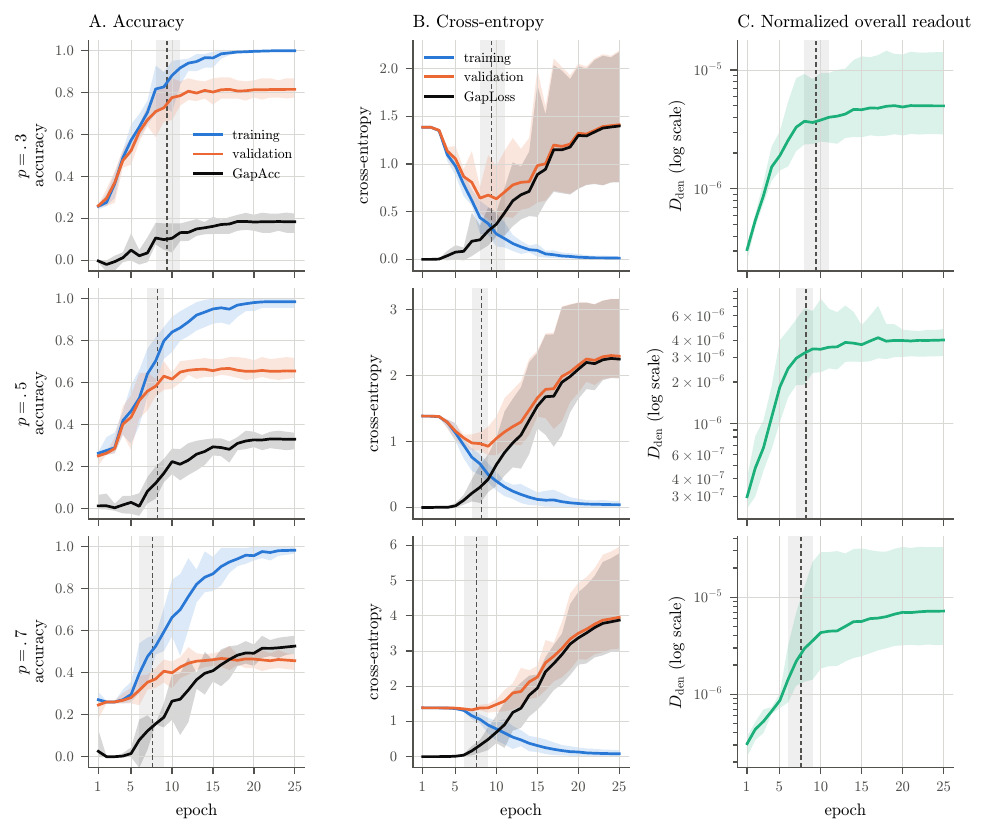}
\caption{Controlled hierarchy, one row per private share. Lines are five-run means (geometric mean for $\Dden$) and shaded bands the range over runs. The grey band marks the range of the epoch $t^\star$ at which validation cross-entropy is minimal, and the dashed line its mean. A: accuracies and $\GapAcc$. B: cross-entropies and $\GapLoss$. C: $\Dden$ on a logarithmic scale.}
\label{fig:resmlp-behaviour}
\end{figure}

\paragraph{The remaining validation loss shifts to the private level.}
Figure~\ref{fig:resmlp-masked} shows, for each split, how the loss of the single-level masked inputs is distributed over the five levels. At epoch 1 every level carries one fifth. During training the private share rises more on the validation split than on the training split at every private share, so the two splits separate (Table~\ref{tab:resmlp-masked}); the widening is largest at $p=.7$, where the training-side private share even falls. When only private features are visible, validation examples thus retain relatively more loss than training examples. This is the split-specific burden the account predicts for support that cannot transfer, measured directly on constructed levels.

\begin{figure}[!htbp]
\centering
\includegraphics[width=\linewidth]{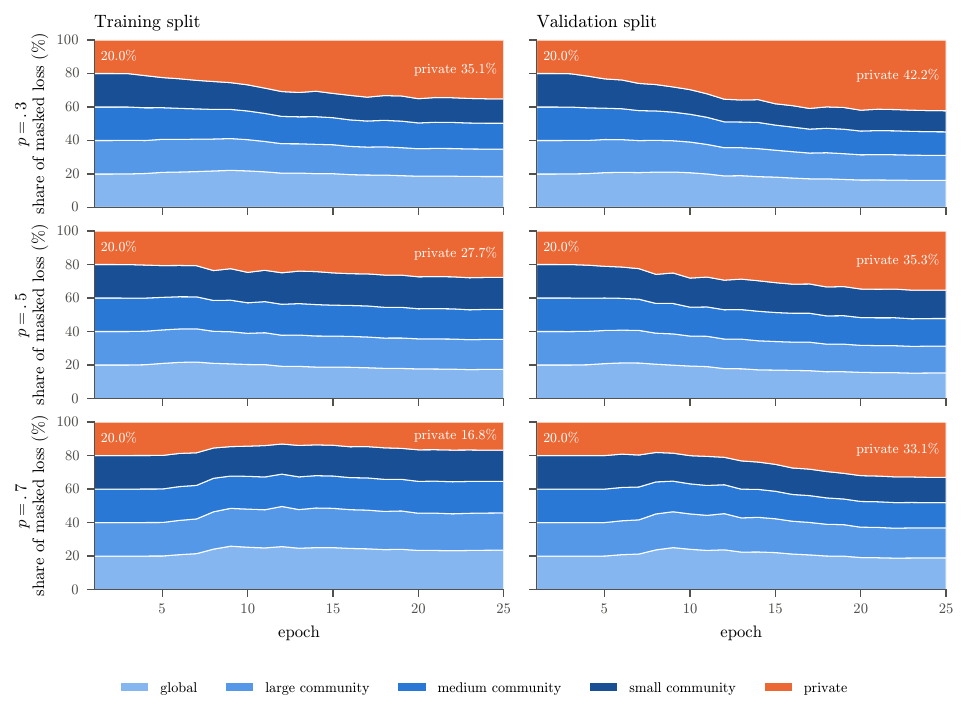}
\caption{Share of masked-input cross-entropy carried by each feature level, averaged over five runs before normalizing. Rows: $p=.3,.5,.7$; left: training split; right: validation split. Labels give the private share at epochs 1 and 25.}
\label{fig:resmlp-masked}
\end{figure}

\paragraph{The readouts follow the gap.}
Both readouts are positively correlated with the accuracy gap in all 15 runs, under the raw and the LOWESS view (Table~\ref{tab:resmlp-main}). Over the 25 epochs, $\Dden$ increases in all 15 runs, as do the unnormalized companions $U_{\mathrm{den}}$ and $\Uw$, whose mean increments grow with the private share (Table~\ref{tab:resmlp-unnorm}). The time-trend controls are reported in Section~\ref{sec:results-controls}.

\begin{table}[htbp]
\centering
\caption{Increments of the unnormalized companions from epoch 1 to epoch 25: mean / median over five runs. Every increment is positive.}
\label{tab:resmlp-unnorm}
\small
\begin{tabular}{ccc}
\toprule
Private share $p$ & $\Delta U_{\mathrm{within}}^{\mathrm{wt}}$: mean / median & $\Delta U_{\mathrm{den}}$: mean / median\\
\midrule
.3 & 151.500 / 57.530 & 156.173 / 60.294\\
.5 & 302.045 / 120.888 & 312.150 / 126.111\\
.7 & 639.187 / 514.815 & 677.525 / 539.137\\
\bottomrule
\end{tabular}

\end{table}

\FloatBarrier
\subsection{Language Fine-Tuning}
\label{sec:results-nlp}
Across the 90 runs, the accuracy gap grows from $\NumNlpGapEpochTwo\%$ near epoch 2 to $\NumNlpGapFinal\%$ at epoch 10 on average, with training accuracy reaching $\NumNlpTrainAccFinal\%$ (Appendix Figure~\ref{fig:nlp-trajectories}). Figure~\ref{fig:nlp-distribution} summarizes the principal run-level correlation analyses.

\paragraph{Association overall and within classes.}
Both readouts are positively correlated with the accuracy gap in all 90 runs under both views, with means of $\NumNlpDdenRaw$ (raw) and $\NumNlpDdenLowess$ (LOWESS) for $\Dden$ and $\NumNlpWRaw$ and $\NumNlpWLowess$ for $\Dw$ (Table~\ref{tab:nlp-level}). Because $\Dw$ compares probes that share a label, its association cannot be produced by movement of the class centres alone. The class-specific readouts and their comparisons with the accuracy gaps of their own classes are reported in Appendix~\ref{app:results-nlp}.

\begin{table}[htbp]
\centering
\caption{Language runs: within-run correlation of each readout with the accuracy gap, as mean raw $r$ / mean LOWESS $r$ / runs positive under both. $\Dw$ and $\Dden$ are paired with the overall gap. Thirty runs per backbone.}
\label{tab:nlp-level}
\small
\begin{tabular}{lcc}
\toprule
Backbone & $D_{\mathrm{within}}^{\mathrm{wt}}$ vs.\ $\GapAcc$ & $D_{\mathrm{den}}$ vs.\ $\GapAcc$\\
\midrule
RoBERTa & .785 / .872 / 30/30 & .877 / .922 / 30/30\\
DeBERTa & .783 / .883 / 30/30 & .883 / .949 / 30/30\\
Qwen3-1.7B & .842 / .886 / 30/30 & .822 / .867 / 30/30\\
\midrule
All 90 runs & .803 / .880 / 90/90 & .861 / .913 / 90/90\\
\bottomrule
\end{tabular}

\end{table}

\FloatBarrier
\subsection{Cross-Domain Vision}
\label{sec:results-vision}
The vision extension tests whether the association also appears outside the language and constructed-hierarchy settings. Across CIFAR-10, CIFAR-100, and STL-10, both $\Dden$ and $\Dw$ are positively correlated with the accuracy gap under the raw and LOWESS views in all 15 ResNet-18 runs (Table~\ref{tab:vision}). The within-class readout includes every class, so this result also tests the association within labels in a multiclass setting. All five CIFAR-10 runs have a negative final accuracy gap alongside positive full-trajectory correlations with both readouts; the sign of the gap and its temporal association are distinct (Appendix~\ref{app:results-vision}). Individual-run results are in Appendix~\ref{app:results-vision}; the class-aggregation correction is documented in Appendix~\ref{app:history-vision}.

\begin{table}[htbp]
\centering
\caption{ResNet-18 runs: within-run correlation of each readout with the accuracy gap over 40 epochs. Five runs per dataset; means with the number of runs positive under both views.}
\label{tab:vision}
\small
\begin{tabular}{llccc}
\toprule
Dataset & Readout & Raw $r$ & LOWESS $r$ & Positive in both\\
\midrule
\multirow{2}{*}{CIFAR-10} & $D_{\mathrm{within}}^{\mathrm{wt}}$ & .936 & .964 & 5/5\\
 & $D_{\mathrm{den}}$ & .925 & .952 & 5/5\\
\midrule
\multirow{2}{*}{CIFAR-100} & $D_{\mathrm{within}}^{\mathrm{wt}}$ & .972 & .982 & 5/5\\
 & $D_{\mathrm{den}}$ & .969 & .980 & 5/5\\
\midrule
\multirow{2}{*}{STL-10} & $D_{\mathrm{within}}^{\mathrm{wt}}$ & .865 & .983 & 5/5\\
 & $D_{\mathrm{den}}$ & .860 & .983 & 5/5\\
\bottomrule
\end{tabular}

\end{table}

\FloatBarrier
\subsection{Temporal Controls, Loss Checks, and Ablations}
\label{sec:results-checks}
We next examine common training trends, the different behavior of cross-entropy and accuracy, and the effects of monitoring resolution and numerical settings. Detailed dataset results are in Appendix~\ref{app:results}; measurement ablations are in Appendix~\ref{app:sensitivity}.

\subsubsection{Time Controls and Local Changes}
\label{sec:results-controls}
Table~\ref{tab:nlp-temporal-summary} summarizes the language time controls and one-epoch change correlations. The association of $\Dden$ with the accuracy gap remains positive in $\NumNlpDdenPartialPos$/90 runs after removing a linear time trend, and in $\NumNlpDdenPartialQuadPos$/90 and $\NumNlpDdenPartialLogPos$/90 runs after quadratic and logarithmic controls. Raw one-epoch increments are positively correlated in $\NumNlpDdenDiffPos$/90 runs for $\Dden$ and $\NumNlpWDiffPos$/90 for $\Dw$. Figure~\ref{fig:nlp-distribution} shows the variation across runs; Appendix Tables~\ref{tab:nlp-time} and~\ref{tab:nlp-diff} give the backbone results. The association thus also appears in residuals after the fitted time trends and in the magnitudes of local changes.

In the controlled hierarchy, the linear-time-controlled association of $\Dden$ with the accuracy gap is positive in $\NumResDdenAccPartialPos$/15 runs, with mean $\NumResDdenAccPartial$ (Appendix Table~\ref{tab:resmlp-local}).

\begin{table}[htbp]
\centering\small
\caption{Time controls and one-epoch changes across 90 language runs. Entries are mean within-run correlations, with positive-run counts in parentheses. Each readout is paired with the accuracy gap or its corresponding change.}
\label{tab:nlp-temporal-summary}
\begin{tabular}{lcc}
\toprule
Check & $\Dden$ & $\Dw$\\
\midrule
Linear trend removed & $\NumNlpDdenPartial$ ($\NumNlpDdenPartialPos$/90) & $\NumNlpWPartial$ ($\NumNlpWPartialPos$/90)\\
Quadratic trend removed & $\NumNlpDdenPartialQuad$ ($\NumNlpDdenPartialQuadPos$/90) & $\NumNlpWPartialQuad$ ($\NumNlpWPartialQuadPos$/90)\\
Logarithmic trend removed & $\NumNlpDdenPartialLog$ ($\NumNlpDdenPartialLogPos$/90) & $\NumNlpWPartialLog$ ($\NumNlpWPartialLogPos$/90)\\
One-epoch changes: raw & $\NumNlpDdenDiff$ ($\NumNlpDdenDiffPos$/90) & $\NumNlpWDiff$ ($\NumNlpWDiffPos$/90)\\
One-epoch changes: LOWESS & $\NumNlpDdenDiffLowess$ ($\NumNlpDdenDiffLowessPos$/90) & $\NumNlpWDiffLowess$ ($\NumNlpWDiffLowessPos$/90)\\
\bottomrule
\end{tabular}
\end{table}

\begin{figure}[!htbp]
\centering
\includegraphics[width=\linewidth]{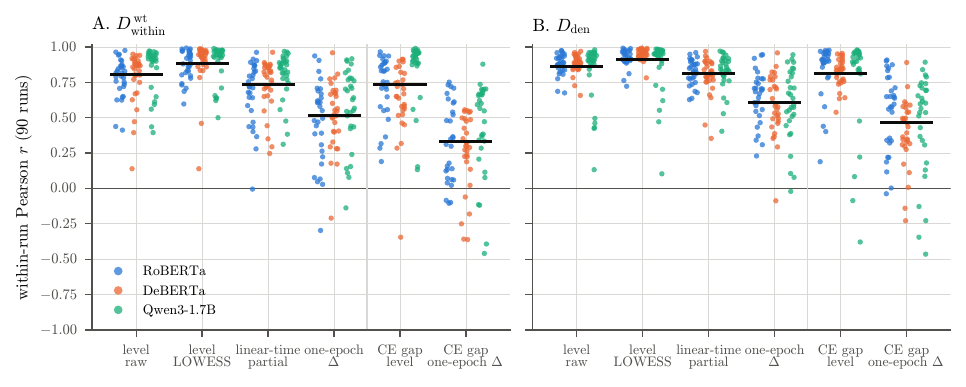}
\caption{Run-level correlations in the 90 language runs, for $\Dw$ (A) and $\Dden$ (B). Each dot is one run, coloured by backbone; black bars are means. The four left columns pair the readout with the accuracy gap; the two right columns with the cross-entropy gap (CE). ``One-epoch $\Delta$'' correlates changes over matched one-epoch intervals.}
\label{fig:nlp-distribution}
\end{figure}

\subsubsection{Cross-Entropy Gap and the Validation-Loss Minimum}
\label{sec:results-resmlp-loss}
In the controlled hierarchy, after the validation-loss minimum, mean validation cross-entropy rises from $\NumTurnValLossStart$ to $\NumTurnValLossEnd$ while mean validation accuracy rises from $\NumTurnValAccStart$ to $\NumTurnValAccEnd$; accuracy does not fall in $\NumTurnValAccNotLower$/15 runs (Appendix Table~\ref{tab:resmlp-turning}). This illustrates the different late-training behavior of cross-entropy and accuracy. Lemma~\ref{lem:direction-scale} supplies a decomposition that permits such behavior; Proposition~\ref{prop:accuracy-crossings} expresses accuracy changes directly through prediction-correctness transitions.

In the language runs, the level association of $\Dden$ with the cross-entropy gap is positive in $\NumNlpLossDdenLevelPos$/90 runs (mean raw $r$ $\NumNlpLossDdenLevel$), compared with 90/90 (mean $\NumNlpDdenRaw$) for the accuracy gap. The cross-entropy association agrees in direction and is weaker, as the decomposition allows. In the hierarchy, its mean linear-time-controlled correlation is negative ($\NumResDdenLossPartial$), despite a positive mean level correlation ($\NumResDdenLossRaw$). Hierarchy results, per-run behavior around the loss minimum, and language loss-change correlations are given in Appendix~\ref{app:results-resmlp} and Table~\ref{tab:nlp-loss}.

\subsubsection{Module-Level Ablation}
\label{sec:results-module}
Aggregating the coordinate allocations into module shares gives $\Dmod$ (Eq.~\eqref{eq:dmod}). In the language runs its mean raw correlation is $\NumModNlpRaw$, with $\NumModNlpJoint$/90 runs positive under both views, compared with $\NumNlpDdenRaw$ and 90/90 for $\Dden$. The difference is concentrated in the encoder backbones; the two readouts are closer in Qwen3-1.7B and in the hierarchy. Allocation within projections thus carries additional structural information associated with gap development. Appendix~\ref{app:results-module} gives the full comparison and time controls.

\subsubsection{Sensitivity and Numerical Checks}
\label{sec:sensitivity-summary}
Appendix~\ref{app:sensitivity} tests monitor depth, probe count, near-zero gradients, and precision. Association strength varies with depth; in the additional BoolQ runs, both primary readouts remain positively associated at every tested depth. With 64, 128, 256, or 512 hierarchy probes, each condition retains positive associations in 5/5 runs. Excluding the lowest 1\%, 5\%, or 10\% of gradient masses changes the mean raw correlation of $\Dden$ by at most about $.002$ and its mean LOWESS correlation by at most $.005$; FP32 and BF16 checks also retain positive associations. The appendix states the runs covered by each check, invalid-probe counts, and the language gradient-mass results, separating structural changes from numerical effects.

\FloatBarrier
\section{Discussion}
\label{sec:discussion}

\subsection{What the Evidence Supports}
The controlled hierarchy tests the response to a specified amount of non-transferable private support. Increasing that share raises the accuracy gap and widens the validation-minus-training difference in private masked-loss share, while the normalized and unnormalized readouts grow with the gap. In the language and vision studies, the two primary readouts are positively associated with the accuracy gap in every run, including within classes. The module-level comparison further shows that coordinate allocation contains information beyond its division among monitored projections (Appendix~\ref{app:results-module}). The real-model experiments test observable relationships predicted by the account; they do not measure or uniquely identify the latent support decomposition. The controls below examine several alternative sources of these relationships.

\subsection{Alternative Explanations}
\paragraph{A common dependence on training time.} Level correlations describe accumulated co-variation. In the language runs, the association of $\Dden$ with the gap remains positive after removing a linear, quadratic, or logarithmic trend in the epoch in $\NumNlpDdenPartialPos$, $\NumNlpDdenPartialQuadPos$, and $\NumNlpDdenPartialLogPos$ of 90 runs, and one-epoch changes of the readout and of the gap co-vary positively in $\NumNlpDdenDiffPos$ of 90 runs (Appendix~\ref{app:results-nlp}). Together, these analyses test the association beyond the fitted training-time trends and at the scale of local changes.

\paragraph{Separation of class centres.} Class-dependent gradient patterns can raise $\Dden$ through the between-class term. The weighted within-class readout $\Dw$ removes that term and is positively associated with the gap in all 90 language, 15 hierarchy, and 15 vision runs. Thus movement of class centres alone does not explain the observed association. The class-specific language checks are reported in Appendix~\ref{app:results-nlp}.

\paragraph{Changing confidence.} In binary classification, normalization cancels the scalar confidence weight in each probe's cross-entropy gradient (Proposition~\ref{prop:binary-confidence}). The resulting allocation depends on the margin-gradient geometry. Low-mass exclusion checks test the numerical stability of this normalization as gradients shrink: the associations change little in the hierarchy. The exact scalar cancellation is specific to binary classification; the four-class hierarchy is covered by these empirical numerical checks.

\paragraph{Example difficulty.} Each normalized probe contributes equally to $\Dden$, so a hard example affects the readout through its allocation pattern rather than its total gradient magnitude. In the hierarchy, the private-share intervention controls how much of each example can be fitted from shared structure. Raising the private share increases the demand for example-specific fitting while preserving the relative composition of the reusable levels. In real models, differences in example difficulty can accompany the structural redistribution measured by the readout.

\paragraph{Representation specialization.} More selective or differentiated processing of examples can separate their gradient allocations. The support-transfer account connects this differentiation to the held-out benefit of the newly used structure. The controlled hierarchy tests that consequence by varying private support; the real-model readouts observe the corresponding differentiation during adaptation.

\paragraph{The choice of monitor.} Readout magnitude and association strength depend on the monitored depth (Section~\ref{sec:sensitivity-summary}). The language monitor is calibrated once per backbone on BoolQ and then fixed across task-specific training searches and the main runs. Appendix~\ref{app:monitor} specifies those fixed depths, and Appendix~\ref{app:sensitivity} reports depth sensitivity in the hierarchy and in additional BoolQ trajectories.

\subsection{Scope and Further Tests}
The theory describes local dynamics under explicit sharing and transfer conditions. Proposition~\ref{prop:accuracy-crossings} adds the finite-sample margin conditions for a growing accuracy gap; these conditions complement the directional cross-entropy result. The experiments cover adaptation from pretrained backbones. Estimating latent transfer efficiencies and directly measuring the direction--scale contributions would provide further tests of the account. Numerical checks cover the hierarchy and the three language runs with recorded gradient masses. Training from random initialization provides a further setting in which to examine feature formation and subsequent support transfer.

\section{Conclusion}
\label{sec:conclusion}
We developed a dynamical account of train--validation separation during pretrained fine-tuning: training increasingly relies on narrow structure whose benefits transfer less to held-out examples. The theory gives conditions linking this shift to allocation dispersion and directional cross-entropy separation. An exact direction--scale decomposition and a finite-sample boundary-crossing result clarify how loss and accuracy changes relate. In the controlled hierarchy, increasing private support enlarges the gap, and private-only inputs carry an increasing relative loss on the validation side. In 90 language and 15 vision runs, the pooled and weighted within-class readouts computed from training probes alone are positively associated with the accuracy gap; temporal analyses in the language study provide complementary evidence. Together, these results support the proposed connection between evolving gradient allocation and the developing empirical gap, and provide a way to observe this structural evolution using training data alone.

\bibliographystyle{tmlr}

\appendix
\clearpage
\section*{Guide to the Appendices}
The appendices follow the argument from proofs to current results, ablations, reproducibility, historical comparisons, and the AI assistance disclosure. Current results use the final run records adopted in this paper; earlier experiments and comparisons with rebuttal values are collected in Appendix~\ref{app:history}.
\begin{center}\small
\begin{tabular}{@{}p{.22\linewidth}p{.72\linewidth}@{}}
\toprule
Location & Contents\\
\midrule
Appendix~\ref{app:theory} & Proofs, the CE--accuracy boundary-crossing argument, normalization, and class decomposition.\\
Appendix~\ref{app:results} & Current dataset results, trajectories, loss checks, and final metrics for all 120 runs (\ref{app:final-run-metrics}).\\
Appendix~\ref{app:sensitivity} & Depth, probe count, near-zero gradients, precision, and module-projection ablations.\\
Appendix~\ref{app:repro} & Data construction, HPO, monitoring, statistical methods, training settings, and hardware.\\
Appendix~\ref{app:history} & Earlier 25-epoch NLP and ResMLP studies, four-run updates, and the vision class-aggregation correction.\\
Appendix~\ref{app:ai-disclosure} & AI assistance disclosure.\\
\bottomrule
\end{tabular}\end{center}
\clearpage

\section{Proofs and Technical Results}
\label{app:theory}

\subsection{Normalization Stability}
\label{app:norm-stability}
Let $\mathcal N(x)=x/\|x\|_1$ for $x\in\R^C$ with $x_c\ge0$ and $\|x\|_1>0$. If $\|x\|_1\ge m$ and $\|y\|_1\ge m$, then
\begin{equation}
\|\mathcal N(x)-\mathcal N(y)\|_2\le\frac{2}{m}\|x-y\|_1 ,
\end{equation}
because $\mathcal N(x)-\mathcal N(y)=(x-y)/\|x\|_1+y\,(1/\|x\|_1-1/\|y\|_1)$, the first term has $\ell_2$ norm at most $\|x-y\|_1/m$, and the second has norm at most $\|y\|_2\,|\|y\|_1-\|x\|_1|/(\|x\|_1\|y\|_1)\le\|x-y\|_1/m$. The implementation uses $\widehat{\mathcal N}(x)=x/(\|x\|_1+\varepsilon)$; since $\widehat{\mathcal N}(x)-\mathcal N(x)=-\mathcal N(x)\,\varepsilon/(\|x\|_1+\varepsilon)$ and $\|\mathcal N(x)\|_2\le1$,
\begin{equation}
\|\widehat{\mathcal N}(x)-\mathcal N(x)\|_2\le\frac{\varepsilon}{\|x\|_1}.
\end{equation}
Away from near-zero monitored mass, the logged allocation therefore differs from the exact one by a negligible amount. For a probe with $S_i\le\varepsilon$, the stabilized vector is far from the simplex; such probes are excluded as invalid.

\subsection{Class Decomposition of the Pooled Dispersion}
\label{app:class-decomposition}
With $I_y$, $N_y$, $\pi_y$, $\bar q_y$, and $D_y$ as in Section~\ref{sec:measurement}, write $q_i-\bar q=(q_i-\bar q_y)+(\bar q_y-\bar q)$ for $i\in I_y$. Summing the squared norm over $i\in I_y$, the cross term vanishes because $\sum_{i\in I_y}(q_i-\bar q_y)=0$, so $\sum_{i\in I_y}\|q_i-\bar q\|_2^2=N_yD_y+N_y\|\bar q_y-\bar q\|_2^2$. Summing over classes and dividing by $N$ gives
\begin{equation}
\Dden=\sum_y\pi_yD_y+\sum_y\pi_y\|\bar q_y-\bar q\|_2^2=\Dw+\Db .
\end{equation}
The weights must be the probe counts of the same probes that enter $\Dden$; with equal class weights $1/|\mathcal Y|$ in place of $\pi_y$, the identity does not hold in general for unbalanced probe classes. If two class-conditional signed modes differ only in sign and have the same allocation, their between-class contribution is zero.

\subsection{Finite-Probe Estimation}
\label{app:finite-probe}
Let $Q\in\DeltaC$ be the allocation of a probe drawn from the training population at a fixed model state, $\mu=\E Q$, and $D^\star=\E\|Q-\mu\|_2^2$. For $N\ge2$ independent draws, $D_N=N^{-1}\sum_i\|Q_i-\bar Q\|_2^2$ satisfies $\E[D_N]=\frac{N-1}{N}D^\star$, and
\begin{equation}
\widetilde D_N=\frac{N}{N-1}D_N=\frac{1}{2N(N-1)}\sum_{i\ne j}\|Q_i-Q_j\|_2^2
\end{equation}
is unbiased because $\E\|Q_i-Q_j\|_2^2=2D^\star$ for $i\ne j$. For simplex vectors the kernel $\|Q_i-Q_j\|_2^2/2$ lies in $[0,1]$; replacing one probe changes $\widetilde D_N$ by at most $2/N$, and the bounded-difference inequality gives $\Pr(|\widetilde D_N-D^\star|>\varepsilon)\le2\exp(-N\varepsilon^2/2)$. This bound holds at a fixed model state for independent sampling. The experiments use one fixed, stratified probe set along each trajectory, so the bound is a reference for the size of sampling error rather than a confidence band for the logged series; the probe-count analysis in Appendix~\ref{app:sensitivity} examines the dependence on $N$ directly.

\subsection{Batch Coherence}
\label{app:batch-coherence}
\begin{proposition}[Batch coherence suppresses zero-mean residuals]
\label{prop:batch-coherence}
Let $\mathsf{G}_j=s+\xi_j$, $j=1,\ldots,m$, with independent residuals, $\E[\xi_j]=0$, and $\E[\xi_j\xi_j^\top]=\Sigma$. Then $\bar{\mathsf{G}}=s+m^{-1}\sum_j\xi_j$ and $\E\|m^{-1}\sum_j\xi_j\|_2^2=\tr(\Sigma)/m$. If in addition $\E\exp(a^\top\xi_j)\le\exp(a^\top\Sigma a/2)$ for all $a\in\R^C$ and all $j$, then for every $\delta\in(0,1)$, with probability at least $1-\delta$,
\begin{equation}
\Bigl\|\frac1m\sum_{j=1}^m\xi_j\Bigr\|_2\le\sqrt{3}\sqrt{\frac{\tr(\Sigma)+\|\Sigma\|_{\mathrm{op}}\log(1/\delta)}{m}} .
\end{equation}
\end{proposition}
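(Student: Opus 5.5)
The first claim is a direct computation. I will expand $\|\sum_j\xi_j\|_2^2=\sum_j\|\xi_j\|_2^2+\sum_{j\ne k}\xi_j^\top\xi_k$. Independence and zero mean give $\E[\xi_j^\top\xi_k]=\E[\xi_j]^\top\E[\xi_k]=0$ for $j\ne k$, and $\E\|\xi_j\|_2^2=\tr(\Sigma)$. Dividing by $m^2$ then yields $\tr(\Sigma)/m$.

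For the high-probability bound I will set $Z=m^{-1/2}\sum_j\xi_j$. By independence, its moment generating function satisfies
\begin{equation}
\E\exp(a^\top Z)=\prod_j\E\exp\bigl((a/\sqrt m)^\top\xi_j\bigr)\le\exp(a^\top\Sigma a/2),
\end{equation}
so $Z$ is sub-Gaussian with the same proxy covariance $\Sigma$. The target then becomes
\begin{equation}
\|Z\|_2^2\le3\bigl(\tr(\Sigma)+\|\Sigma\|_{\mathrm{op}}\log(1/\delta)\bigr)
\end{equation}
with probability at least $1-\delta$.

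I will obtain this through a Gaussian-decoupling bound on $\E\exp(\lambda\|Z\|_2^2)$. Let $W\sim\mathcal N(0,I)$ be independent of $Z$. Then $\E_W\exp(\sqrt{2\lambda}\,W^\top Z)=\exp(\lambda\|Z\|_2^2)$. Swapping the order of expectation and applying the MGF bound gives
\begin{equation}
\E\exp(\lambda\|Z\|_2^2)\le\E_W\exp(\lambda W^\top\Sigma W)=\prod_k(1-2\lambda s_k)^{-1/2},
\end{equation}
where the $s_k$ are the eigenvalues of $\Sigma$, valid for $2\lambda\|\Sigma\|_{\mathrm{op}}<1$. I will choose $\lambda=1/(4\|\Sigma\|_{\mathrm{op}})$, so that $2\lambda s_k\le 1/2$. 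The elementary inequality $-\tfrac12\log(1-x)\le x$ for $x\in[0,1/2]$ then gives $\E\exp(\lambda\|Z\|_2^2)\le\exp(2\lambda\tr\Sigma)$. Markov's inequality yields
\begin{equation}
\Pr\bigl(\|Z\|_2^2>2\tr\Sigma+4\|\Sigma\|_{\mathrm{op}}\log(1/\delta)\bigr)\le\delta.
\end{equation}

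The main obstacle is the constant. This route gives $2\tr\Sigma+4\|\Sigma\|_{\mathrm{op}}\log(1/\delta)$, but the statement needs the factor $3$ in front of both terms. To fix this I will tune $\lambda$ to a different value, say $2\lambda\|\Sigma\|_{\mathrm{op}}=x_0$. I will then bound $-\tfrac12\log(1-x)\le cx$ on $[0,x_0]$ with $c=-\log(1-x_0)/(2x_0)$. This produces the threshold $2c\,\tr\Sigma+(2/x_0)\|\Sigma\|_{\mathrm{op}}\log(1/\delta)$, and I need both coefficients to be at most $3$. That requires $x_0\ge2/3$ together with $-\log(1-x_0)/x_0\le3$. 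At $x_0=2/3$ the second condition reads $1.5\log 3\approx1.65\le3$, so both hold.

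Finally I will check the degenerate case $\Sigma=0$. There the MGF condition forces $\xi_j=0$ almost surely, and the bound holds trivially. In the nondegenerate case the proof concludes by taking square roots and recalling $m^{-1}\sum_j\xi_j=Z/\sqrt m$.
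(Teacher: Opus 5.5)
Your proof is correct. It shares the paper's core machinery: you average the sub-Gaussian MGF over an auxiliary standard Gaussian direction to bound $\E\exp(\lambda\|Z\|_2^2)$ by the determinant $\prod_k(1-2\lambda s_k)^{-1/2}$, and then apply the exponential Markov inequality.

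The two routes diverge only in how the log-determinant is handled. The paper expands it to second order and lets $\lambda$ depend on the confidence level, which is the standard Hsu--Kakade--Zhang argument. That yields $\|m^{-1}\sum_j\xi_j\|_2^2\le[\tr(\Sigma)+2\sqrt{\tr(\Sigma^2)x}+2\|\Sigma\|_{\mathrm{op}}x]/m$ with probability at least $1-e^{-x}$. The paper then relaxes this to $[2\tr(\Sigma)+3\|\Sigma\|_{\mathrm{op}}x]/m$ using $\tr(\Sigma^2)\le\|\Sigma\|_{\mathrm{op}}\tr(\Sigma)$ and $2\sqrt{ab}\le a+b$.

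You instead fix $2\lambda\|\Sigma\|_{\mathrm{op}}=2/3$ and bound $-\tfrac12\log(1-x)$ by its convexity chord. This lands directly at $\tfrac32\log 3\,\tr(\Sigma)+3\|\Sigma\|_{\mathrm{op}}\log(1/\delta)$. Your version is shorter and fully self-contained, since it needs no optimization over $\lambda$. Its final trace coefficient ($\approx1.65$) is even slightly better than the paper's $2$.

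The paper's intermediate inequality is the sharper statement. Its leading term has coefficient $1$ on $\tr(\Sigma)$, so it matches the mean $\tr(\Sigma)/m$ from the first part of the proposition. That sharpness is discarded anyway to reach the stated constant $\sqrt3$, so for this statement your simpler route loses nothing.
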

\begin{proof}
Linearity gives the mean, and independence with zero means removes the cross terms. For $Z=m^{-1}\sum_j\xi_j$, independence gives $\E e^{a^\top Z}\le e^{a^\top\Sigma a/(2m)}$. For $0<\lambda<m/\|\Sigma\|_{\mathrm{op}}$, averaging this bound over an auxiliary standard Gaussian direction gives $\E e^{\lambda\|Z\|_2^2/2}\le\det(I-\lambda\Sigma/m)^{-1/2}$. Expanding the log-determinant and applying the exponential Markov inequality yields $\|Z\|_2^2\le[\tr(\Sigma)+2\sqrt{\tr(\Sigma^2)x}+2\|\Sigma\|_{\mathrm{op}}x]/m$ with probability at least $1-e^{-x}$. Set $x=\log(1/\delta)$ and use $\tr(\Sigma^2)\le\|\Sigma\|_{\mathrm{op}}\tr(\Sigma)$ and $2\sqrt{ab}\le a+b$.
\end{proof}
The decomposition into $s$ and $\xi$ is the arithmetic mean and residual of signed per-example gradients; it is not a decomposition into semantic feature levels.

\subsection{Discrete-Time Demand Dynamics}
\label{app:local-dynamics}
With step size $\alpha>0$, suppose $A_{t+1}=(1-\alpha\kappa_u)A_t$ and $B_{t+1}=(1-\alpha\kappa_v)B_t$ with $0<\alpha\kappa_v<\alpha\kappa_u<1$. Then $B_t/A_t=(B_0/A_0)\bigl((1-\alpha\kappa_v)/(1-\alpha\kappa_u)\bigr)^t$ increases with $t$, and so does $\lambda_t=B_t/(A_t+B_t)$. In continuous time, $\frac{\od}{\od t}\log(B/A)=\kappa_u-\kappa_v>0$, which gives Eq.~\eqref{eq:lambda-dynamics}.

\subsection{Proofs of the Local Growth Results}

\subsubsection{Allocation Dispersion}
\label{app:proof-observable-dispersion}
\begin{proof}[Proof of Theorem~\ref{thm:observable-dispersion}]
Averaging Eq.~\eqref{eq:latent-mixture} over probes gives $\bar q=(1-\lambda)u+\lambda\bar v+\bar e$, hence $q_i-\bar q=\lambda(v_i-\bar v)+(e_i-\bar e)$. On $(\R^C)^N$ with the norm $\|(w_i)_i\|_N=(N^{-1}\sum_i\|w_i\|_2^2)^{1/2}$, the three centred terms have norms $\sqrt{\Dden}$, $\lambda\sigma_v$, and $\eta$. The triangle inequality gives $|\sqrt{\Dden}-\lambda\sigma_v|\le\eta$; squaring, and using $\sqrt{\Dden}\ge0$ for the lower bound, gives the bounds. The two-time condition implies $\sqrt{\Dden(t_2)}\ge\lambda\sigma_v(t_2)-\eta(t_2)>\lambda\sigma_v(t_1)+\eta(t_1)\ge\sqrt{\Dden(t_1)}$.
\end{proof}

\subsubsection{Directional Transfer}
\label{app:proof-directional-transfer}
\begin{proof}[Proof of Proposition~\ref{prop:directional-transfer}]
Let $\zeta=-\dot\Rdir_\val-[(1-\lambda)\tau_u+\lambda\tau_v]r^{\mathrm{dir}}$, so $|\zeta|\le\epsilon$. Then $\Gamma^{\mathrm{dir}}=\dot\Rdir_\val+r^{\mathrm{dir}}=r^{\mathrm{dir}}\Pi(\lambda)-\zeta$, which gives the first bound and the lower bound. Differentiating $\Pi$ at fixed efficiencies gives $\tau_u-\tau_v$. With $\epsilon=0$ and $r^{\mathrm{dir}}>0$, $\Gamma^{\mathrm{dir}}>0$ is equivalent to $\Pi>0$, which is Eq.~\eqref{eq:effective-transfer-condition}. Adding the scale term of Eq.~\eqref{eq:gap-direction-scale} gives Eq.~\eqref{eq:full-gap-bound}.
\end{proof}

\subsubsection{Coupled Local Growth}
\label{app:proof-combined}
\begin{proof}[Proof of Theorem~\ref{thm:combined}]
Positivity of $A$ and $B$ gives $0<\lambda<1$, and Eq.~\eqref{eq:lambda-dynamics} gives $\dot\lambda>0$. By Theorem~\ref{thm:observable-dispersion}, $\Dden=\lambda^2\sigma_v^2>0$, and Eq.~\eqref{eq:relative-growth} with condition (ii) gives $\dot\Dden>0$. Conditions (iii) and (iv) give $\Pi>0$ and hence $\Gamma^{\mathrm{dir}}=r^{\mathrm{dir}}\Pi>0$ by Proposition~\ref{prop:directional-transfer}. Under the additional sign condition, Eq.~\eqref{eq:gap-direction-scale} gives $\dot G\ge\Gamma^{\mathrm{dir}}>0$.
\end{proof}

\subsection{First-Order Surrogate-Risk Changes}
\label{app:first-order-risk}
For split $S\in\{\tr,\val\}$ and step $\delta\theta_t=\theta_{t+1}-\theta_t$, if $R_S$ has Hessian operator norm at most $H_S$ on the segment between $\theta_t$ and $\theta_{t+1}$, Taylor's theorem gives $R_S(\theta_{t+1})=R_S(\theta_t)+\nabla R_S(\theta_t)^\top\delta\theta_t+\mathcal R_{S,t}$ with $|\mathcal R_{S,t}|\le H_S\|\delta\theta_t\|_2^2/2$. A first-order transfer relation between training and validation reductions is therefore meaningful for small steps. In Assumption~\ref{ass:transfer}, $\epsilon(t)$ collects such remainders together with minibatch noise, preconditioning, interactions not represented by the two-source allocation, and temporal variation of the efficiencies. It does not collect uniform growth of the logits, which Lemma~\ref{lem:direction-scale} separates exactly.

\subsection{Finite-Sample Accuracy Changes}
\label{app:boundary}
\begin{proof}[Proof of Proposition~\ref{prop:accuracy-crossings}]
Let $c_i^r\in\{0,1\}$ indicate whether example $i$ is classified correctly at checkpoint $t_r$, with the same tie-breaking rule at both checkpoints. Checking the four possibilities for $(c_i^0,c_i^1)$ gives
\begin{equation}
c_i^1-c_i^0=\mathbf 1\{c_i^0=0,c_i^1=1\}-\mathbf 1\{c_i^0=1,c_i^1=0\}.
\end{equation}
Summing over a fixed evaluation set and dividing by its positive size yields $\mathrm{Acc}_S(t_1)-\mathrm{Acc}_S(t_0)=(C_S^+-C_S^-)/n_S$. Subtracting the validation identity from the training identity proves Eq.~\eqref{eq:accuracy-crossings}. This identity concerns endpoint correctness and permits arbitrary intermediate motion, including repeated boundary crossings.

For the margin bound, the no-tie assumption makes correctness equivalent to $m_i^r>0$. Condition (i) ensures that an initially correct training example remains correct, so $C_\tr^-=0$. Every training example with $-a<m_i^0\le0$ satisfies $m_i^1=m_i^0+\Delta m_i\ge m_i^0+a>0$ by condition (ii), and hence changes from incorrect to correct. Thus $C_\tr^+\ge n_\tr B_\tr(a)$. If a validation example changes from incorrect to correct, then $m_i^0\le0$ and $m_i^0+\Delta m_i>0$. By condition (iii), $m_i^0+b\ge m_i^0+\Delta m_i>0$, so $-b<m_i^0\le0$. Therefore $C_\val^+\le n_\val B_\val(b)$; any validation regressions only decrease the net validation improvement. Substitution in Eq.~\eqref{eq:accuracy-crossings} gives
\begin{equation}
\Delta\GapAcc\ge B_\tr(a)-\frac{C_\val^+}{n_\val}\ge B_\tr(a)-B_\val(b),
\end{equation}
which proves Eq.~\eqref{eq:accuracy-margin-bound} and its strict-positivity consequence.
\end{proof}

The counting identity applies with any fixed tie-breaking rule. The no-tie assumption is used only for the margin bound, so that the sign of a margin determines correctness. Positive uniform rescaling of the logits preserves every predicted label and all the counts in Eq.~\eqref{eq:accuracy-crossings}. This explains the different roles of directional margin motion and uniform scale growth in Figure~\ref{fig:endpoint-lemma}.

\begin{figure}[t]
\centering
\includegraphics[width=\linewidth]{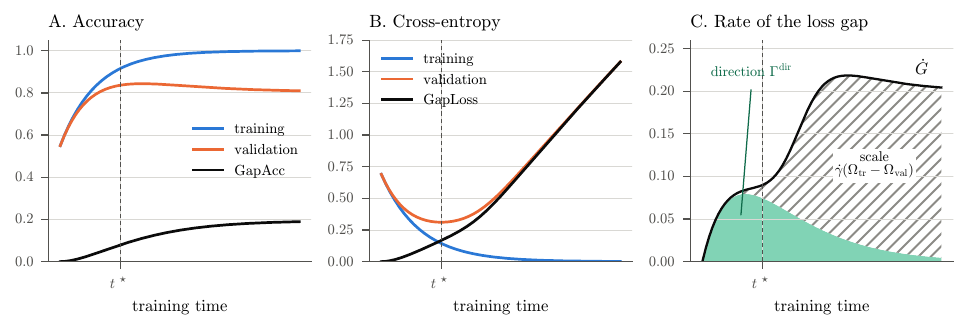}
\caption{Trajectory-level reading of Lemma~\ref{lem:direction-scale} (schematic curves, not measurements). A: accuracies and $\GapAcc$ depend on the normalized logits only, so they respond to the direction channel alone. B: training and validation cross-entropy; $t^\star$ marks the minimum of validation cross-entropy. C: the rate $\dot G$ of the cross-entropy gap, Eq.~\eqref{eq:gap-direction-scale}, split into the direction channel $\Gamma^{\mathrm{dir}}$ (green) and the scale channel $\dot\gamma(\Omega_\tr-\Omega_\val)$ (hatched). In the illustrated regime, net decision-boundary crossings subside while the scale channel stays positive, so the accuracy gap levels off and the cross-entropy gap continues to grow. Section~\ref{sec:results-resmlp-loss} examines this pattern in measured trajectories.}
\label{fig:endpoint-lemma}
\end{figure}

\section{Detailed Results for the Current Studies}
\label{app:results}

\subsection{Language Runs}
\label{app:results-nlp}
Figure~\ref{fig:nlp-distribution} summarizes the run-level correlations of the language runs, including the linear-time controls and one-epoch changes detailed here.

\paragraph{Class-specific and overall level associations.}
The class-specific readouts $D_0$ and $D_1$, each paired with the accuracy gap of its own class, are positive under both views in $\NumNlpDzeroJoint$ and $\NumNlpDoneJoint$ of 90 runs. Table~\ref{tab:nlp-class-summary} gives these readouts alongside the two primary readouts.

\begin{table}[htbp]
\centering
\caption{Language runs: class-specific and overall level associations, as mean raw $r$ / mean LOWESS $r$ / runs positive under both. $D_0$ and $D_1$ are paired with the accuracy gap of their own class; $\Dw$ and $\Dden$ with the overall gap.}
\label{tab:nlp-class-summary}
\small
\resizebox{\linewidth}{!}{
\begin{tabular}{lcccc}
\toprule
Backbone & $D_0$ vs.\ class-0 gap & $D_1$ vs.\ class-1 gap & $D_{\mathrm{within}}^{\mathrm{wt}}$ vs.\ $\GapAcc$ & $D_{\mathrm{den}}$ vs.\ $\GapAcc$\\
\midrule
RoBERTa & .675 / .776 / 30/30 & .613 / .743 / 30/30 & .785 / .872 / 30/30 & .877 / .922 / 30/30\\
DeBERTa & .727 / .837 / 30/30 & .675 / .794 / 29/30 & .783 / .883 / 30/30 & .883 / .949 / 30/30\\
Qwen3-1.7B & .717 / .838 / 30/30 & .727 / .799 / 29/30 & .842 / .886 / 30/30 & .822 / .867 / 30/30\\
\midrule
All 90 runs & .706 / .817 / 90/90 & .672 / .778 / 88/90 & .803 / .880 / 90/90 & .861 / .913 / 90/90\\
\bottomrule
\end{tabular}
}
\end{table}

\paragraph{Between-class contribution.}
The exact decomposition in Eq.~\eqref{eq:within-between} also separates the class-centre contribution $\Db$. Its mean raw correlation with the overall accuracy gap is $\NumNlpBetweenRaw$, with $\NumNlpBetweenJoint$/90 runs positive under both raw and LOWESS views. Averaging the per-run ratio $\Db/\Dden$ gives $\NumNlpBetweenRatioFirst\%$ at the first recorded checkpoint and $\NumNlpBetweenRatioLast\%$ at the final checkpoint. Thus, class-centre separation contributes to the pooled readout, while the positive association of $\Dw$ shows that the relation also persists among probes of the same class.

\paragraph{Time trends.}
Across the 90 language runs, the mean within-run correlation between epoch and the accuracy gap is $\NumNlpTimeGap$. This describes the time--gap association; the following controls examine the readout--gap relationship after removing fitted temporal trends. Table~\ref{tab:nlp-time} removes a linear, quadratic, or logarithmic trend in the epoch from both series and correlates the residuals. With a linear trend removed, the association remains positive in $\NumNlpDdenPartialPos$/90 runs for $\Dden$ (mean $\NumNlpDdenPartial$) and $\NumNlpWPartialPos$/90 for $\Dw$ (mean $\NumNlpWPartial$). With a quadratic or logarithmic trend removed, which also absorbs the saturating shape of both curves, the association of $\Dden$ remains positive in $\NumNlpDdenPartialQuadPos$/90 and $\NumNlpDdenPartialLogPos$/90 runs (means $\NumNlpDdenPartialQuad$ and $\NumNlpDdenPartialLog$), and that of $\Dw$ in $\NumNlpWPartialQuadPos$/90 and $\NumNlpWPartialLogPos$/90.

\paragraph{One-epoch changes.}
Correlating changes over matched one-epoch intervals asks whether larger local changes in a readout accompany larger local changes in the gap. The changes of the readouts and of the gap are positively correlated in $\NumNlpDdenDiffPos$/90 runs for $\Dden$ (mean $\NumNlpDdenDiff$) and $\NumNlpWDiffPos$/90 for $\Dw$ (mean $\NumNlpWDiff$), and in $\NumNlpDdenDiffLowessPos$/90 and $\NumNlpWDiffLowessPos$/90 runs after LOWESS smoothing (Table~\ref{tab:nlp-diff}). The intervals between the paired checkpoints range from $\NumNlpIntervalMin$ to $\NumNlpIntervalMax$ epochs. For the cross-entropy gap, the one-epoch change correlation of $\Dden$ is positive in $\NumNlpLossDdenDiffPos$/90 runs (mean $\NumNlpLossDdenDiff$; Table~\ref{tab:nlp-loss}).

\begin{table}[htbp]
\centering
\caption{Language runs: correlation of each readout with the accuracy gap after removing a linear, quadratic, or logarithmic trend in the epoch from both series. Means with the number of positive runs.}
\label{tab:nlp-time}
\small
\begin{tabular}{lcccccc}
\toprule
 & \multicolumn{3}{c}{$D_{\mathrm{within}}^{\mathrm{wt}}$} & \multicolumn{3}{c}{$D_{\mathrm{den}}$}\\
\cmidrule(lr){2-4}\cmidrule(lr){5-7}
Backbone & Linear & Quadratic & Logarithmic & Linear & Quadratic & Logarithmic\\
\midrule
RoBERTa & .674 (29/30) & .323 (22/30) & .356 (25/30) & .823 (30/30) & .504 (28/30) & .470 (28/30)\\
DeBERTa & .727 (30/30) & .496 (29/30) & .509 (29/30) & .793 (30/30) & .542 (30/30) & .503 (28/30)\\
Qwen3-1.7B & .797 (30/30) & .420 (27/30) & .438 (28/30) & .821 (30/30) & .477 (28/30) & .510 (29/30)\\
\midrule
All 90 runs & .733 (89/90) & .413 (78/90) & .434 (82/90) & .812 (90/90) & .508 (86/90) & .495 (85/90)\\
\bottomrule
\end{tabular}

\end{table}

\begin{table}[htbp]
\centering
\caption{Language runs: correlation of changes of each readout with changes of the accuracy gap over matched one-epoch intervals, raw or after LOWESS smoothing of both series. Means with the number of positive runs.}
\label{tab:nlp-diff}
\small
\begin{tabular}{lcccc}
\toprule
 & \multicolumn{2}{c}{$\Delta D_{\mathrm{within}}^{\mathrm{wt}}$ vs.\ $\Delta\GapAcc$} & \multicolumn{2}{c}{$\Delta D_{\mathrm{den}}$ vs.\ $\Delta\GapAcc$}\\
\cmidrule(lr){2-3}\cmidrule(lr){4-5}
Backbone & Raw & LOWESS & Raw & LOWESS\\
\midrule
RoBERTa & .466 (29/30) & .661 (29/30) & .646 (30/30) & .828 (30/30)\\
DeBERTa & .514 (29/30) & .761 (29/30) & .587 (29/30) & .815 (30/30)\\
Qwen3-1.7B & .568 (29/30) & .742 (29/30) & .596 (29/30) & .796 (30/30)\\
\midrule
All 90 runs & .516 (87/90) & .722 (87/90) & .610 (88/90) & .813 (90/90)\\
\bottomrule
\end{tabular}

\end{table}

\begin{table}[htbp]
\centering
\caption{Language runs: association of each readout with the cross-entropy gap, for levels and for one-epoch changes. Means with the number of positive runs.}
\label{tab:nlp-loss}
\small
\begin{tabular}{lcccc}
\toprule
 & \multicolumn{2}{c}{$D_{\mathrm{within}}^{\mathrm{wt}}$ vs.\ $\GapLoss$} & \multicolumn{2}{c}{$D_{\mathrm{den}}$ vs.\ $\GapLoss$}\\
\cmidrule(lr){2-3}\cmidrule(lr){4-5}
Backbone & Level, raw & One-epoch $\Delta$ & Level, raw & One-epoch $\Delta$\\
\midrule
RoBERTa & .722 (30/30) & .328 (27/30) & .840 (30/30) & .545 (29/30)\\
DeBERTa & .636 (29/30) & .258 (25/30) & .817 (30/30) & .391 (28/30)\\
Qwen3-1.7B & .847 (30/30) & .418 (26/30) & .782 (28/30) & .452 (26/30)\\
\midrule
All 90 runs & .735 (89/90) & .335 (78/90) & .813 (88/90) & .463 (83/90)\\
\bottomrule
\end{tabular}

\end{table}

Figure~\ref{fig:nlp-trajectories} shows the trajectories of $\Dden$ and of the accuracy gap for every backbone and dataset. Tables~\ref{tab:app-nlp-dataset-level} and~\ref{tab:app-nlp-dataset-temporal} report the dataset-level associations behind Tables~\ref{tab:nlp-level} and~\ref{tab:nlp-time}--\ref{tab:nlp-loss}.

\begin{figure}[htbp]
\centering
\includegraphics[width=\linewidth]{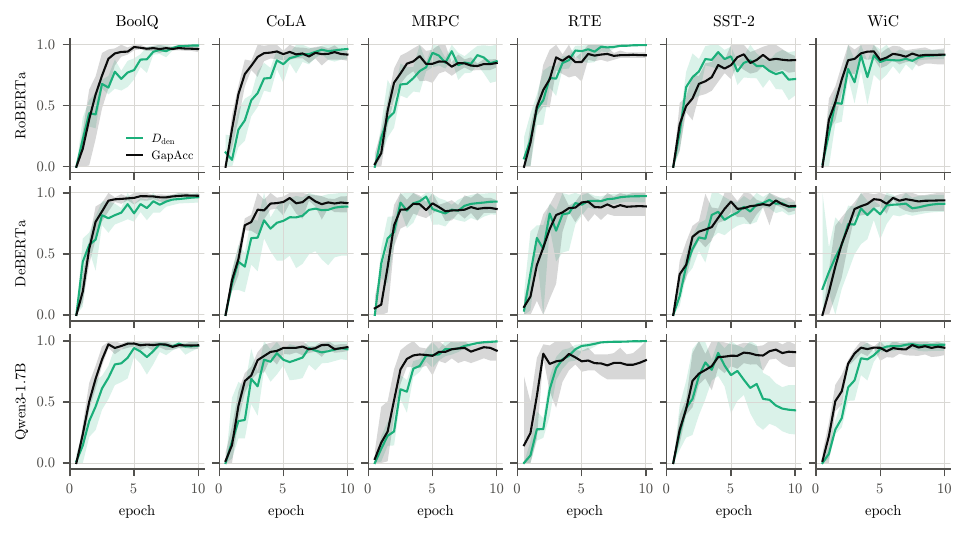}
\caption{Language runs: $\Dden$ (green) and $\GapAcc$ (black) over training for each backbone (rows) and dataset (columns). Each series is min--max scaled within its run before averaging; lines are five-run means and bands the range over runs.}
\label{fig:nlp-trajectories}
\end{figure}

\FloatBarrier
\begingroup\footnotesize
\setlength{\tabcolsep}{3pt}\renewcommand{\arraystretch}{1.08}
\begin{longtable}{@{}llcccc@{}}
\caption{NLP associations by dataset. Each cell gives the mean raw / LOWESS within-run Pearson $r$ over five runs and, in parentheses, the number of runs positive under both. $D_0$ and $D_1$ are paired with the accuracy gap of their own class; $W=D_{\mathrm{within}}^{\mathrm{wt}}$ and $D_{\mathrm{den}}$ with the overall accuracy gap.}\label{tab:app-nlp-dataset-level}\\
\toprule
Backbone & Dataset & $D_0$ & $D_1$ & $W$ & $D_{\mathrm{den}}$\\
\midrule
\endfirsthead
\toprule
Backbone & Dataset & $D_0$ & $D_1$ & $W$ & $D_{\mathrm{den}}$\\
\midrule
\endhead
RoBERTa & BoolQ & $.763/.854$ (5/5) & $.736/.803$ (5/5) & $.859/.910$ (5/5) & $.912/.935$ (5/5)\\
RoBERTa & CoLA & $.530/.646$ (5/5) & $.537/.689$ (5/5) & $.759/.849$ (5/5) & $.847/.900$ (5/5)\\
RoBERTa & MRPC & $.610/.625$ (5/5) & $.453/.566$ (5/5) & $.620/.773$ (5/5) & $.852/.914$ (5/5)\\
RoBERTa & RTE & $.688/.797$ (5/5) & $.752/.859$ (5/5) & $.822/.900$ (5/5) & $.934/.963$ (5/5)\\
RoBERTa & SST-2 & $.656/.801$ (5/5) & $.455/.628$ (5/5) & $.742/.822$ (5/5) & $.797/.851$ (5/5)\\
RoBERTa & WiC & $.802/.934$ (5/5) & $.742/.912$ (5/5) & $.906/.976$ (5/5) & $.923/.969$ (5/5)\\
\midrule
DeBERTa & BoolQ & $.851/.939$ (5/5) & $.793/.939$ (5/5) & $.889/.971$ (5/5) & $.924/.972$ (5/5)\\
DeBERTa & CoLA & $.781/.919$ (5/5) & $.751/.884$ (5/5) & $.817/.928$ (5/5) & $.870/.944$ (5/5)\\
DeBERTa & MRPC & $.617/.757$ (5/5) & $.686/.822$ (5/5) & $.710/.877$ (5/5) & $.864/.955$ (5/5)\\
DeBERTa & RTE & $.683/.808$ (5/5) & $.703/.887$ (5/5) & $.808/.931$ (5/5) & $.871/.946$ (5/5)\\
DeBERTa & SST-2 & $.731/.834$ (5/5) & $.395/.475$ (4/5) & $.685/.753$ (5/5) & $.912/.967$ (5/5)\\
DeBERTa & WiC & $.700/.763$ (5/5) & $.725/.755$ (5/5) & $.788/.840$ (5/5) & $.859/.912$ (5/5)\\
\midrule
Qwen3-1.7B & BoolQ & $.865/.926$ (5/5) & $.838/.929$ (5/5) & $.942/.957$ (5/5) & $.942/.957$ (5/5)\\
Qwen3-1.7B & CoLA & $.808/.915$ (5/5) & $.819/.919$ (5/5) & $.924/.962$ (5/5) & $.914/.961$ (5/5)\\
Qwen3-1.7B & MRPC & $.794/.902$ (5/5) & $.847/.925$ (5/5) & $.917/.951$ (5/5) & $.932/.963$ (5/5)\\
Qwen3-1.7B & RTE & $.511/.693$ (5/5) & $.270/.248$ (4/5) & $.578/.679$ (5/5) & $.650/.758$ (5/5)\\
Qwen3-1.7B & SST-2 & $.584/.721$ (5/5) & $.663/.803$ (5/5) & $.747/.803$ (5/5) & $.550/.601$ (5/5)\\
Qwen3-1.7B & WiC & $.741/.873$ (5/5) & $.924/.968$ (5/5) & $.945/.963$ (5/5) & $.943/.964$ (5/5)\\
\bottomrule
\end{longtable}

\begin{longtable}{@{}llcccccc@{}}
\caption{Temporal and cross-entropy checks by dataset. Cells give the mean within-run Pearson $r$ over five runs and, in parentheses, the number of positive runs. Partial: residuals after a linear epoch fit. $\Delta$: changes paired at approximately one-epoch intervals. The last two columns pair $D_{\mathrm{den}}$ with the cross-entropy gap $G^{\mathrm{loss}}$.}\label{tab:app-nlp-dataset-temporal}\\
\toprule
 & & \multicolumn{2}{c}{Partial, vs.\ $\GapAcc$} & \multicolumn{2}{c}{$\Delta$, vs.\ $\Delta\GapAcc$} & \multicolumn{2}{c}{$D_{\mathrm{den}}$ vs.\ $G^{\mathrm{loss}}$}\\
\cmidrule(lr){3-4}\cmidrule(lr){5-6}\cmidrule(lr){7-8}
Backbone & Dataset & $W$ & $D_{\mathrm{den}}$ & $\Delta W$ & $\Delta D_{\mathrm{den}}$ & Level & $\Delta$\\
\midrule
\endfirsthead
\toprule
Backbone & Dataset & $W$ & $D_{\mathrm{den}}$ & $\Delta W$ & $\Delta D_{\mathrm{den}}$ & Level & $\Delta$\\
\midrule
\endhead
RoBERTa & BoolQ & .728 (5) & .865 (5) & .552 (5) & .731 (5) & .934 (5) & .463 (4)\\
RoBERTa & CoLA & .518 (5) & .742 (5) & .221 (5) & .530 (5) & .943 (5) & .594 (5)\\
RoBERTa & MRPC & .510 (4) & .770 (5) & .244 (4) & .486 (5) & .906 (5) & .672 (5)\\
RoBERTa & RTE & .652 (5) & .864 (5) & .438 (5) & .720 (5) & .927 (5) & .585 (5)\\
RoBERTa & SST-2 & .795 (5) & .832 (5) & .621 (5) & .676 (5) & .455 (5) & .174 (5)\\
RoBERTa & WiC & .840 (5) & .865 (5) & .722 (5) & .733 (5) & .876 (5) & .783 (5)\\
\midrule
DeBERTa & BoolQ & .852 (5) & .856 (5) & .723 (5) & .736 (5) & .834 (5) & .454 (5)\\
DeBERTa & CoLA & .727 (5) & .791 (5) & .404 (5) & .468 (5) & .854 (5) & .338 (4)\\
DeBERTa & MRPC & .593 (5) & .790 (5) & .404 (5) & .601 (5) & .820 (5) & .605 (5)\\
DeBERTa & RTE & .691 (5) & .745 (5) & .435 (4) & .505 (4) & .850 (5) & .407 (5)\\
DeBERTa & SST-2 & .729 (5) & .794 (5) & .457 (5) & .554 (5) & .784 (5) & .143 (4)\\
DeBERTa & WiC & .772 (5) & .782 (5) & .661 (5) & .658 (5) & .763 (5) & .402 (5)\\
\midrule
Qwen3-1.7B & BoolQ & .922 (5) & .918 (5) & .810 (5) & .781 (5) & .916 (5) & .553 (5)\\
Qwen3-1.7B & CoLA & .835 (5) & .811 (5) & .530 (5) & .496 (5) & .878 (5) & .396 (5)\\
Qwen3-1.7B & MRPC & .831 (5) & .878 (5) & .617 (5) & .728 (5) & .968 (5) & .667 (5)\\
Qwen3-1.7B & RTE & .486 (5) & .620 (5) & .154 (4) & .308 (5) & .956 (5) & .776 (5)\\
Qwen3-1.7B & SST-2 & .792 (5) & .800 (5) & .524 (5) & .536 (4) & .064 (3) & \ensuremath{-.207} (1)\\
Qwen3-1.7B & WiC & .915 (5) & .900 (5) & .770 (5) & .725 (5) & .910 (5) & .530 (5)\\
\bottomrule
\end{longtable}

\endgroup

\subsection{Controlled Hierarchy}
\label{app:results-resmlp}
Table~\ref{tab:resmlp-unnorm} reports the increments of the unnormalized companions, Table~\ref{tab:app-resmlp-runs} the epoch-1 and epoch-25 values of every run, Table~\ref{tab:app-resmlp-turning-runs} the per-run validation-loss minima and the corresponding loss, accuracy, and accuracy-gap values, and Figure~\ref{fig:resmlp-runs} the run-level relation between the readouts and the gap.

\paragraph{Private masked-loss shares.}
Table~\ref{tab:resmlp-masked} quantifies the training--validation difference shown in Figure~\ref{fig:resmlp-masked}.

\begin{table}[htbp]
\centering
\caption{Change of the private masked-loss share from epoch 1 to epoch 25, in percentage points. The last column is the validation change minus the training change.}
\label{tab:resmlp-masked}
\small
\begin{tabular}{cccc}
\toprule
Private share $p$ & Training split & Validation split & Validation minus training\\
\midrule
.3 & +15.13 pp & +22.25 pp & +7.12 pp\\
.5 & +7.70 pp & +15.26 pp & +7.56 pp\\
.7 & \ensuremath{-}3.19 pp & +13.10 pp & +16.28 pp\\
\bottomrule
\end{tabular}

\end{table}

\paragraph{Cross-Entropy Gap and the Validation-Loss Minimum.}
\label{app:results-resmlp-loss}

\begin{table}[htbp]
\centering
\caption{Time-controlled and cross-entropy associations in the controlled hierarchy. Partial: correlation after removing a linear epoch trend from both series. Level: raw correlation of the recorded values. Entries are five-run (or fifteen-run) means with the number of positive runs.}
\label{tab:resmlp-local}
\small
\begin{tabular}{llccc}
\toprule
 & & Accuracy gap & \multicolumn{2}{c}{Cross-entropy gap}\\
\cmidrule(lr){4-5}
$p$ & Readout & Partial $r$ & Level, raw $r$ & Partial $r$\\
\midrule
\multirow{2}{*}{.3} & $D_{\mathrm{within}}^{\mathrm{wt}}$ & .514 (5/5) & .511 (5/5) & \ensuremath{-.420} (0/5)\\
 & $D_{\mathrm{den}}$ & .593 (5/5) & .843 (5/5) & \ensuremath{-.259} (1/5)\\
\midrule
\multirow{2}{*}{.5} & $D_{\mathrm{within}}^{\mathrm{wt}}$ & .240 (3/5) & .535 (5/5) & \ensuremath{-.344} (1/5)\\
 & $D_{\mathrm{den}}$ & .343 (3/5) & .713 (5/5) & \ensuremath{-.218} (2/5)\\
\midrule
\multirow{2}{*}{.7} & $D_{\mathrm{within}}^{\mathrm{wt}}$ & .509 (5/5) & .793 (5/5) & \ensuremath{-.333} (1/5)\\
 & $D_{\mathrm{den}}$ & .719 (5/5) & .935 (5/5) & .042 (3/5)\\
\midrule
\multirow{2}{*}{All} & $D_{\mathrm{within}}^{\mathrm{wt}}$ & .421 (13/15) & .613 (15/15) & \ensuremath{-.366} (2/15)\\
 & $D_{\mathrm{den}}$ & .552 (13/15) & .830 (15/15) & \ensuremath{-.145} (6/15)\\
\bottomrule
\end{tabular}

\end{table}

The level of $\Dden$ is positively correlated with the cross-entropy gap in all 15 hierarchy runs (mean $\NumResDdenLossRaw$). After removing a linear trend, the mean correlation is $\NumResDdenLossPartial$ ($\NumResDdenLossPartialPos$ of 15 runs positive); $\Dw$ also has a positive level association and a negative mean partial correlation (Table~\ref{tab:resmlp-local}). To describe how loss and accuracy evolve together, we summarize each run at the epoch $t^\star$ of minimal validation cross-entropy and at epoch 25.

\begin{table}[htbp]
\centering
\caption{Behaviour around the validation-loss minimum $t^\star$. Growth after $t^\star$: share of the epoch-1-to-25 increase of each gap that occurs after $t^\star$. Validation accuracy: mean at $t^\star$ and at epoch 25, with the number of runs in which it did not fall. Per-run loss, accuracy, and accuracy-gap values are in Table~\ref{tab:app-resmlp-turning-runs}.}
\label{tab:resmlp-turning}
\small
\begin{tabular}{ccccc}
\toprule
 & & \multicolumn{2}{c}{Growth after $t^\star$} & \\
\cmidrule(lr){3-4}
$p$ & $t^\star$ & $\GapLoss$ & $\GapAcc$ & $\ValAcc(t^\star)\to\ValAcc(25)$\\
\midrule
.3 & 8--11 & 80\% & 30\% & .766$\to$.815 (4/5)\\
.5 & 7--9 & 87\% & 64\% & .615$\to$.655 (3/5)\\
.7 & 6--9 & 95\% & 75\% & .379$\to$.457 (5/5)\\
All & 6--11 & 87\% & 56\% & .587$\to$.643 (12/15)\\
\bottomrule
\end{tabular}

\end{table}

The validation loss is minimal between epochs $\NumTurnMin$ and $\NumTurnMax$ (mean $\NumTurnMean$). After this point mean validation cross-entropy rises from $\NumTurnValLossStart$ to $\NumTurnValLossEnd$, whereas mean validation accuracy rises from $\NumTurnValAccStart$ to $\NumTurnValAccEnd$ and does not fall in $\NumTurnValAccNotLower$ of 15 runs (Table~\ref{tab:resmlp-turning}). On average $\NumTurnLossGrowth\%$ of the growth of the cross-entropy gap and $\NumTurnAccGrowth\%$ of the growth of the accuracy gap occurs after $t^\star$. The rising validation loss is thus not accompanied by a decline in mean validation accuracy, consistent with Lemma~\ref{lem:direction-scale}, which allows cross-entropy to change through the scale channel without changing predictions.

\FloatBarrier
\begingroup\footnotesize
\setlength{\tabcolsep}{4pt}
\begin{longtable}{@{}cccccc@{}}
\caption{Individual ResMLP runs: epoch-1 and epoch-25 values of the normalized readouts ($D$) and their unnormalized companions ($U$). Rows with the same private share and run label refer to the same training run.}\label{tab:app-resmlp-runs}\\
\toprule
$p$ & Run & $D_{\mathrm{within}}^{\mathrm{wt}}$ & $D_{\mathrm{den}}$ & $U_{\mathrm{within}}^{\mathrm{wt}}$ & $U_{\mathrm{den}}$\\
\midrule
\endfirsthead
\toprule
$p$ & Run & $D_{\mathrm{within}}^{\mathrm{wt}}$ & $D_{\mathrm{den}}$ & $U_{\mathrm{within}}^{\mathrm{wt}}$ & $U_{\mathrm{den}}$\\
\midrule
\endhead
.3 & 1 & \(1.892\mathrm{e}{-7}\)$\to$\(2.355\mathrm{e}{-6}\) & \(2.498\mathrm{e}{-7}\)$\to$\(4.472\mathrm{e}{-6}\) & \(1.762\mathrm{e}{-5}\)$\to$\(4.231\mathrm{e}{2}\) & \(4.122\mathrm{e}{-5}\)$\to$\(4.325\mathrm{e}{2}\)\\
.3 & 2 & \(1.829\mathrm{e}{-7}\)$\to$\(2.362\mathrm{e}{-6}\) & \(3.402\mathrm{e}{-7}\)$\to$\(4.545\mathrm{e}{-6}\) & \(2.995\mathrm{e}{-5}\)$\to$\(4.627\mathrm{e}{1}\) & \(1.420\mathrm{e}{-4}\)$\to$\(4.864\mathrm{e}{1}\)\\
.3 & 3 & \(2.063\mathrm{e}{-7}\)$\to$\(1.871\mathrm{e}{-6}\) & \(3.169\mathrm{e}{-7}\)$\to$\(3.735\mathrm{e}{-6}\) & \(4.426\mathrm{e}{-5}\)$\to$\(5.573\mathrm{e}{0}\) & \(1.446\mathrm{e}{-4}\)$\to$\(5.774\mathrm{e}{0}\)\\
.3 & 4 & \(1.757\mathrm{e}{-7}\)$\to$\(2.369\mathrm{e}{-6}\) & \(3.113\mathrm{e}{-7}\)$\to$\(1.414\mathrm{e}{-5}\) & \(1.383\mathrm{e}{-5}\)$\to$\(2.250\mathrm{e}{2}\) & \(7.174\mathrm{e}{-5}\)$\to$\(2.337\mathrm{e}{2}\)\\
.3 & 5 & \(2.030\mathrm{e}{-7}\)$\to$\(1.022\mathrm{e}{-6}\) & \(3.350\mathrm{e}{-7}\)$\to$\(2.874\mathrm{e}{-6}\) & \(2.645\mathrm{e}{-5}\)$\to$\(5.753\mathrm{e}{1}\) & \(7.976\mathrm{e}{-5}\)$\to$\(6.029\mathrm{e}{1}\)\\
\addlinespace[2pt]
.5 & 1 & \(2.384\mathrm{e}{-7}\)$\to$\(2.753\mathrm{e}{-6}\) & \(2.870\mathrm{e}{-7}\)$\to$\(4.477\mathrm{e}{-6}\) & \(1.738\mathrm{e}{-5}\)$\to$\(1.132\mathrm{e}{3}\) & \(3.505\mathrm{e}{-5}\)$\to$\(1.172\mathrm{e}{3}\)\\
.5 & 2 & \(2.543\mathrm{e}{-7}\)$\to$\(3.433\mathrm{e}{-6}\) & \(4.046\mathrm{e}{-7}\)$\to$\(4.804\mathrm{e}{-6}\) & \(3.382\mathrm{e}{-5}\)$\to$\(2.505\mathrm{e}{2}\) & \(1.058\mathrm{e}{-4}\)$\to$\(2.556\mathrm{e}{2}\)\\
.5 & 3 & \(1.708\mathrm{e}{-7}\)$\to$\(2.217\mathrm{e}{-6}\) & \(2.443\mathrm{e}{-7}\)$\to$\(3.936\mathrm{e}{-6}\) & \(2.919\mathrm{e}{-5}\)$\to$\(5.800\mathrm{e}{0}\) & \(1.435\mathrm{e}{-4}\)$\to$\(6.116\mathrm{e}{0}\)\\
.5 & 4 & \(1.801\mathrm{e}{-7}\)$\to$\(2.415\mathrm{e}{-6}\) & \(3.257\mathrm{e}{-7}\)$\to$\(3.987\mathrm{e}{-6}\) & \(8.258\mathrm{e}{-6}\)$\to$\(1.209\mathrm{e}{2}\) & \(4.356\mathrm{e}{-5}\)$\to$\(1.261\mathrm{e}{2}\)\\
.5 & 5 & \(1.536\mathrm{e}{-7}\)$\to$\(1.240\mathrm{e}{-6}\) & \(2.503\mathrm{e}{-7}\)$\to$\(3.064\mathrm{e}{-6}\) & \(6.968\mathrm{e}{-6}\)$\to$\(9.757\mathrm{e}{-1}\) & \(5.317\mathrm{e}{-5}\)$\to$\(1.082\mathrm{e}{0}\)\\
\addlinespace[2pt]
.7 & 1 & \(2.182\mathrm{e}{-7}\)$\to$\(1.890\mathrm{e}{-6}\) & \(3.113\mathrm{e}{-7}\)$\to$\(3.214\mathrm{e}{-6}\) & \(6.284\mathrm{e}{-6}\)$\to$\(1.935\mathrm{e}{1}\) & \(1.719\mathrm{e}{-5}\)$\to$\(2.059\mathrm{e}{1}\)\\
.7 & 2 & \(1.972\mathrm{e}{-7}\)$\to$\(4.708\mathrm{e}{-6}\) & \(3.354\mathrm{e}{-7}\)$\to$\(1.347\mathrm{e}{-5}\) & \(9.551\mathrm{e}{-6}\)$\to$\(6.389\mathrm{e}{2}\) & \(5.667\mathrm{e}{-5}\)$\to$\(6.887\mathrm{e}{2}\)\\
.7 & 3 & \(1.891\mathrm{e}{-7}\)$\to$\(2.314\mathrm{e}{-6}\) & \(2.272\mathrm{e}{-7}\)$\to$\(3.480\mathrm{e}{-6}\) & \(2.384\mathrm{e}{-5}\)$\to$\(2.403\mathrm{e}{2}\) & \(1.409\mathrm{e}{-4}\)$\to$\(2.585\mathrm{e}{2}\)\\
.7 & 4 & \(2.169\mathrm{e}{-7}\)$\to$\(2.216\mathrm{e}{-6}\) & \(3.668\mathrm{e}{-7}\)$\to$\(3.862\mathrm{e}{-6}\) & \(1.725\mathrm{e}{-5}\)$\to$\(1.783\mathrm{e}{3}\) & \(6.572\mathrm{e}{-5}\)$\to$\(1.881\mathrm{e}{3}\)\\
.7 & 5 & \(2.007\mathrm{e}{-7}\)$\to$\(8.586\mathrm{e}{-6}\) & \(3.234\mathrm{e}{-7}\)$\to$\(3.286\mathrm{e}{-5}\) & \(1.134\mathrm{e}{-5}\)$\to$\(5.148\mathrm{e}{2}\) & \(4.944\mathrm{e}{-5}\)$\to$\(5.391\mathrm{e}{2}\)\\
\bottomrule
\end{longtable}

\endgroup

\begingroup\footnotesize
\setlength{\tabcolsep}{2.5pt}
\begin{longtable}{@{}cccccc@{}}
\caption{Validation-loss minimum $t^\star$ in each ResMLP run. Each loss, accuracy, and accuracy-gap cell gives the value at $t^\star$ and at epoch 25.}\label{tab:app-resmlp-turning-runs}\\
\toprule
$p$ & Run & $t^\star$ & $\ValLoss$ & $\ValAcc$ & $\GapAcc$\\
\midrule
\endfirsthead
\toprule
$p$ & Run & $t^\star$ & $\ValLoss$ & $\ValAcc$ & $\GapAcc$\\
\midrule
\endhead
.3 & 1 & 10 & .516$\to$1.252 & .834$\to$.805 & .085$\to$.195\\
.3 & 2 & 11 & .446$\to$1.046 & .789$\to$.838 & .168$\to$.162\\
.3 & 3 & 10 & .376$\to$.811 & .868$\to$.868 & .083$\to$.132\\
.3 & 4 & 8 & .581$\to$1.760 & .750$\to$.789 & .180$\to$.211\\
.3 & 5 & 8 & .742$\to$2.189 & .587$\to$.777 & .110$\to$.223\\
.5 & 1 & 8 & .875$\to$2.237 & .537$\to$.654 & .063$\to$.336\\
.5 & 2 & 9 & .793$\to$2.073 & .642$\to$.637 & .125$\to$.320\\
.5 & 3 & 7 & .866$\to$1.971 & .649$\to$.717 & .113$\to$.283\\
.5 & 4 & 9 & .822$\to$2.021 & .632$\to$.623 & .168$\to$.361\\
.5 & 5 & 8 & .970$\to$3.166 & .617$\to$.646 & .200$\to$.354\\
.7 & 1 & 6 & 1.349$\to$3.823 & .385$\to$.468 & .155$\to$.532\\
.7 & 2 & 9 & 1.291$\to$5.962 & .353$\to$.451 & .150$\to$.517\\
.7 & 3 & 9 & 1.251$\to$3.069 & .463$\to$.488 & .153$\to$.491\\
.7 & 4 & 7 & 1.328$\to$3.422 & .348$\to$.466 & .165$\to$.518\\
.7 & 5 & 7 & 1.298$\to$3.522 & .345$\to$.413 & .153$\to$.577\\
\bottomrule
\end{longtable}

\endgroup

\begin{figure}[htbp]
\centering
\includegraphics[width=0.94\linewidth]{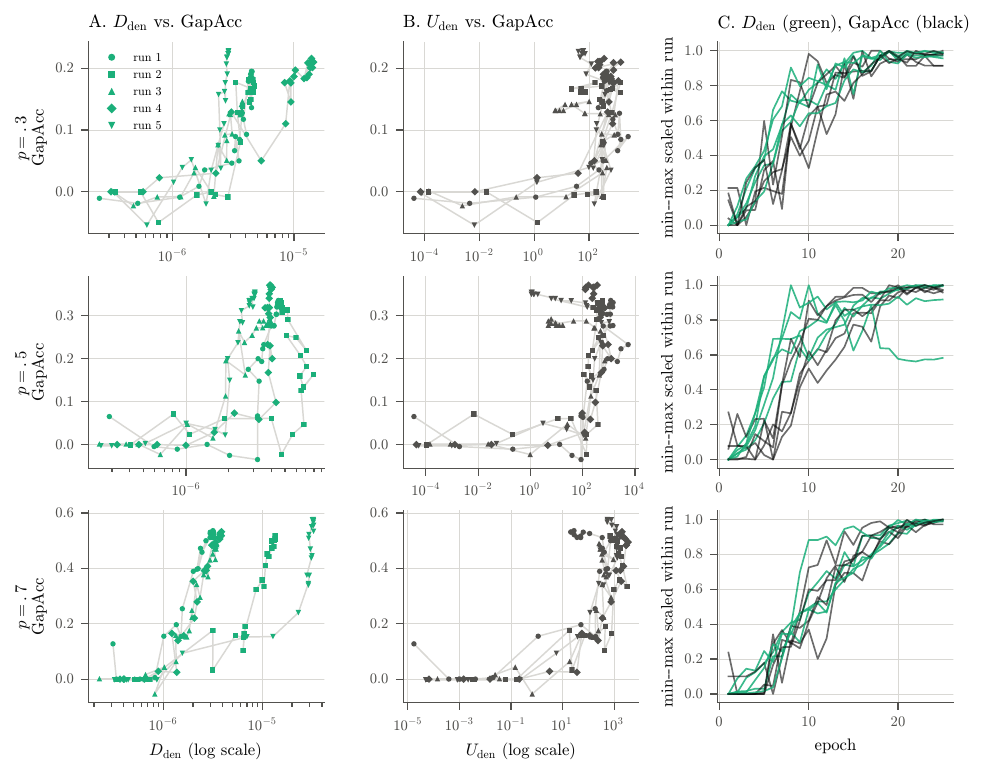}
\caption{Controlled hierarchy, run level. Rows: $p=.3,.5,.7$. A and B: every checkpoint of every run, $\GapAcc$ against $\Dden$ and against its unnormalized companion $U_{\mathrm{den}}$; grey lines connect consecutive checkpoints of one run, and marker shapes identify the five runs. C: $\Dden$ (green) and $\GapAcc$ (black) of each run, min--max scaled within the run.}
\label{fig:resmlp-runs}
\end{figure}

\FloatBarrier
\subsection{Extension to Vision}
\label{app:results-vision}
As a cross-domain extension, we fine-tune an ImageNet-pretrained ResNet-18 on CIFAR-10, CIFAR-100, and STL-10 for 40 epochs, five runs per dataset, recording the readouts at every epoch end. The probes are class-balanced, 10 per class on CIFAR-10 and STL-10 and 2 per class on CIFAR-100, and the within-class readout weights all 10 or 100 classes by their probe counts; for these balanced probe sets it coincides with the unweighted class average. The monitor covers the last convolution and normalization of each of the final two basic blocks ($k=2$, calibrated on STL-10; Table~\ref{tab:monitors}). All runs share one training setting (Appendix~\ref{app:hpo}): AdamW with learning rate $3\times10^{-4}$ (selected from $\{2,3,5\}\times10^{-4}$) and weight decay $0.01$, batch size 64 (evaluation batch size 128), no warmup, FP16 mixed precision, $224\times224$ inputs, and training-time augmentation. Validation accuracy and loss are evaluated on the full official test split. Training accuracy is evaluated on training images under the training-time augmentation, so the gap can be negative (Table~\ref{tab:app-vision-runs}).

Both readouts are positively correlated with the accuracy gap under both views in all 15 runs (Table~\ref{tab:vision}). The process records of the data package contain, for every run and checkpoint, the dispersion and probe count of each class, from which both readouts can be recomputed.

\paragraph{Negative gaps and temporal associations.} All five CIFAR-10 runs finish with a negative accuracy gap (Table~\ref{tab:app-vision-runs}), while their full-trajectory mean raw correlations are $\NumCvTenDdenRaw$ for $\Dden$ and $\NumCvTenWRaw$ for $\Dw$ (Table~\ref{tab:vision}). Training accuracy is measured with augmentation and validation accuracy on the unaugmented official test split. A positive correlation describes temporal co-variation and does not require a positive gap. Validation accuracy, the sign of $\GapAcc=\TrainAcc-\ValAcc$, and the readout--gap correlation are distinct quantities.

The class-aggregation correction and its numerical effect are documented in Appendix~\ref{app:history-vision}; the current results include all classes with probe-count weights.

\begin{table}[htbp]
\centering
\caption{ResNet-18 runs individually: final accuracy gap and raw / LOWESS correlation of each readout with the accuracy gap.}
\label{tab:app-vision-runs}
\small
\begin{tabular}{lcccc}
\toprule
Dataset & Run & Final $\GapAcc$ & $D_{\mathrm{within}}^{\mathrm{wt}}$: raw / LOWESS & $D_{\mathrm{den}}$: raw / LOWESS\\
\midrule
CIFAR-10 & 1 & \ensuremath{-.022} & .923 / .949 & .908 / .934\\
CIFAR-10 & 2 & \ensuremath{-.022} & .945 / .967 & .939 / .959\\
CIFAR-10 & 3 & \ensuremath{-.020} & .923 / .960 & .910 / .948\\
CIFAR-10 & 4 & \ensuremath{-.023} & .939 / .972 & .930 / .962\\
CIFAR-10 & 5 & \ensuremath{-.023} & .952 / .970 & .936 / .956\\
\midrule
CIFAR-100 & 1 & .079 & .969 / .981 & .964 / .977\\
CIFAR-100 & 2 & .080 & .970 / .979 & .968 / .977\\
CIFAR-100 & 3 & .079 & .970 / .978 & .965 / .974\\
CIFAR-100 & 4 & .069 & .977 / .988 & .972 / .987\\
CIFAR-100 & 5 & .080 & .976 / .984 & .975 / .982\\
\midrule
STL-10 & 1 & .018 & .843 / .958 & .844 / .971\\
STL-10 & 2 & .034 & .887 / .994 & .874 / .990\\
STL-10 & 3 & .034 & .858 / .983 & .866 / .987\\
STL-10 & 4 & .030 & .846 / .987 & .835 / .983\\
STL-10 & 5 & .027 & .894 / .991 & .882 / .984\\
\bottomrule
\end{tabular}

\end{table}

\FloatBarrier

\clearpage
\subsection{Final Checkpoint Metrics for Every Main-Study Run}
\label{app:final-run-metrics}
Tables~\ref{tab:final-nlp-runs}--\ref{tab:final-vision-runs} report the final recorded checkpoint of all 120 current main-study runs: 90 language runs, 15 controlled-hierarchy runs, and 15 vision runs. Every row contains training accuracy, validation accuracy, training cross-entropy, validation cross-entropy, the pooled $\Dden$, and its probe-count-weighted within-class part $\Dw$, all taken from the same checkpoint. Structural values are the normalized-allocation readouts of Section~\ref{sec:measurement}, before any rescaling used to plot trajectories. Accuracies are percentages. The training and evaluation subsets are specified in Appendix~\ref{app:monitor}.

Run numbers identify the five independent runs within each condition and follow the same ordering as the individual-run tables in Appendix~\ref{app:results}. The four updated runs are identified by condition and Run number in Appendix~\ref{app:history-updates}; their entries here use the final records adopted in the current analyses.

\subsubsection{Language Fine-Tuning}
\label{app:final-nlp}
\begingroup\footnotesize
\setlength{\tabcolsep}{3pt}\renewcommand{\arraystretch}{1.05}
\begin{longtable}{@{}lllrrrrrr@{}}
\caption{Final recorded checkpoint of every run in the current 10-epoch NLP cohort. Accuracies are percentages; loss columns give cross-entropy. The two structural columns give the pooled and probe-count-weighted within-class readouts. Run numbers identify the five runs within each backbone--dataset pair.}\label{tab:final-nlp-runs}\\
\toprule
Backbone & Dataset & Run & \multicolumn{2}{c}{Accuracy (\%)} & \multicolumn{2}{c}{Cross-entropy} & $D_{\mathrm{den}}$ & $D_{\mathrm{within}}^{\mathrm{wt}}$\\
\cmidrule(lr){4-5}\cmidrule(lr){6-7}
 &  &  & Train & Validation & Train & Validation &  & \\
\midrule
\endfirsthead
\toprule
Backbone & Dataset & Run & \multicolumn{2}{c}{Accuracy (\%)} & \multicolumn{2}{c}{Cross-entropy} & $D_{\mathrm{den}}$ & $D_{\mathrm{within}}^{\mathrm{wt}}$\\
\cmidrule(lr){4-5}\cmidrule(lr){6-7}
 &  &  & Train & Validation & Train & Validation &  & \\
\midrule
\endhead
RoBERTa & BoolQ & 1 & 100.0000 & 80.1529 & 0.000349028 & 1.25114 & \(5.372\mathrm{e}{-7}\) & \(3.200\mathrm{e}{-7}\)\\
RoBERTa & BoolQ & 2 & 99.7879 & 80.5199 & 0.00429359 & 1.22107 & \(4.838\mathrm{e}{-7}\) & \(3.456\mathrm{e}{-7}\)\\
RoBERTa & BoolQ & 3 & 100.0000 & 80.6422 & 0.000143248 & 1.30162 & \(5.395\mathrm{e}{-7}\) & \(4.166\mathrm{e}{-7}\)\\
RoBERTa & BoolQ & 4 & 100.0000 & 77.8287 & 0.000482196 & 1.30888 & \(5.000\mathrm{e}{-7}\) & \(4.233\mathrm{e}{-7}\)\\
RoBERTa & BoolQ & 5 & 100.0000 & 80.7951 & 0.000155179 & 1.27091 & \(5.055\mathrm{e}{-7}\) & \(4.014\mathrm{e}{-7}\)\\
\midrule
RoBERTa & CoLA & 1 & 99.7664 & 84.1802 & 0.00221374 & 0.908722 & \(4.900\mathrm{e}{-7}\) & \(3.488\mathrm{e}{-7}\)\\
RoBERTa & CoLA & 2 & 100.0000 & 84.6596 & 0.00122905 & 0.894204 & \(5.047\mathrm{e}{-7}\) & \(3.636\mathrm{e}{-7}\)\\
RoBERTa & CoLA & 3 & 99.7664 & 83.5091 & 0.00310888 & 0.9793 & \(4.447\mathrm{e}{-7}\) & \(3.143\mathrm{e}{-7}\)\\
RoBERTa & CoLA & 4 & 99.6495 & 83.9885 & 0.00514728 & 0.905385 & \(4.419\mathrm{e}{-7}\) & \(3.473\mathrm{e}{-7}\)\\
RoBERTa & CoLA & 5 & 99.8832 & 84.3720 & 0.00245764 & 0.908267 & \(4.025\mathrm{e}{-7}\) & \(3.154\mathrm{e}{-7}\)\\
\midrule
RoBERTa & MRPC & 1 & 100.0000 & 89.4608 & 0.00146322 & 0.533829 & \(3.955\mathrm{e}{-7}\) & \(2.475\mathrm{e}{-7}\)\\
RoBERTa & MRPC & 2 & 100.0000 & 88.4804 & 0.0027473 & 0.620421 & \(4.397\mathrm{e}{-7}\) & \(3.047\mathrm{e}{-7}\)\\
RoBERTa & MRPC & 3 & 99.4550 & 89.2157 & 0.0101085 & 0.618045 & \(3.885\mathrm{e}{-7}\) & \(2.887\mathrm{e}{-7}\)\\
RoBERTa & MRPC & 4 & 100.0000 & 90.1961 & 0.0019446 & 0.521408 & \(4.420\mathrm{e}{-7}\) & \(2.736\mathrm{e}{-7}\)\\
RoBERTa & MRPC & 5 & 99.7275 & 89.2157 & 0.00278402 & 0.528813 & \(3.829\mathrm{e}{-7}\) & \(2.591\mathrm{e}{-7}\)\\
\midrule
RoBERTa & RTE & 1 & 100.0000 & 75.8123 & 0.000451056 & 1.19988 & \(4.336\mathrm{e}{-7}\) & \(2.841\mathrm{e}{-7}\)\\
RoBERTa & RTE & 2 & 99.5984 & 77.6173 & 0.00577501 & 1.21311 & \(4.377\mathrm{e}{-7}\) & \(2.589\mathrm{e}{-7}\)\\
RoBERTa & RTE & 3 & 100.0000 & 76.5343 & 0.00035315 & 1.2167 & \(4.606\mathrm{e}{-7}\) & \(3.184\mathrm{e}{-7}\)\\
RoBERTa & RTE & 4 & 100.0000 & 78.7004 & 0.000391016 & 1.13897 & \(4.390\mathrm{e}{-7}\) & \(3.038\mathrm{e}{-7}\)\\
RoBERTa & RTE & 5 & 99.5984 & 77.2563 & 0.0172711 & 1.14672 & \(4.227\mathrm{e}{-7}\) & \(2.334\mathrm{e}{-7}\)\\
\midrule
RoBERTa & SST-2 & 1 & 99.9555 & 94.2661 & 0.00151656 & 0.44246 & \(1.569\mathrm{e}{-7}\) & \(1.263\mathrm{e}{-7}\)\\
RoBERTa & SST-2 & 2 & 99.9703 & 94.1514 & 0.000699285 & 0.407494 & \(1.493\mathrm{e}{-7}\) & \(1.309\mathrm{e}{-7}\)\\
RoBERTa & SST-2 & 3 & 99.9555 & 93.2339 & 0.000864471 & 0.443833 & \(1.556\mathrm{e}{-7}\) & \(1.350\mathrm{e}{-7}\)\\
RoBERTa & SST-2 & 4 & 99.9555 & 94.1514 & 0.00149465 & 0.423883 & \(1.627\mathrm{e}{-7}\) & \(1.335\mathrm{e}{-7}\)\\
RoBERTa & SST-2 & 5 & 99.9852 & 93.9220 & 0.00108634 & 0.45863 & \(1.615\mathrm{e}{-7}\) & \(1.356\mathrm{e}{-7}\)\\
\midrule
RoBERTa & WiC & 1 & 100.0000 & 71.0031 & 0.00023054 & 1.76334 & \(1.461\mathrm{e}{-7}\) & \(1.218\mathrm{e}{-7}\)\\
RoBERTa & WiC & 2 & 100.0000 & 68.3386 & 0.000264803 & 1.90283 & \(1.327\mathrm{e}{-7}\) & \(1.126\mathrm{e}{-7}\)\\
RoBERTa & WiC & 3 & 100.0000 & 68.8088 & 0.000206183 & 1.85545 & \(1.563\mathrm{e}{-7}\) & \(1.255\mathrm{e}{-7}\)\\
RoBERTa & WiC & 4 & 100.0000 & 68.4953 & 0.000204987 & 1.99697 & \(1.426\mathrm{e}{-7}\) & \(1.183\mathrm{e}{-7}\)\\
RoBERTa & WiC & 5 & 100.0000 & 69.2790 & 0.000208679 & 1.84129 & \(1.530\mathrm{e}{-7}\) & \(1.148\mathrm{e}{-7}\)\\
\midrule
DeBERTa & BoolQ & 1 & 100.0000 & 82.0489 & 0.000723287 & 1.1249 & \(1.163\mathrm{e}{-7}\) & \(1.028\mathrm{e}{-7}\)\\
DeBERTa & BoolQ & 2 & 100.0000 & 82.2018 & 0.000954584 & 1.09873 & \(1.192\mathrm{e}{-7}\) & \(1.065\mathrm{e}{-7}\)\\
DeBERTa & BoolQ & 3 & 100.0000 & 82.2018 & 0.000149904 & 1.17523 & \(1.212\mathrm{e}{-7}\) & \(1.040\mathrm{e}{-7}\)\\
DeBERTa & BoolQ & 4 & 100.0000 & 81.1621 & 0.00087868 & 1.15909 & \(1.244\mathrm{e}{-7}\) & \(1.087\mathrm{e}{-7}\)\\
DeBERTa & BoolQ & 5 & 100.0000 & 81.6208 & 0.000436976 & 1.12353 & \(1.200\mathrm{e}{-7}\) & \(1.078\mathrm{e}{-7}\)\\
\midrule
DeBERTa & CoLA & 1 & 99.7664 & 83.3174 & 0.00193567 & 0.861684 & \(2.710\mathrm{e}{-7}\) & \(1.659\mathrm{e}{-7}\)\\
DeBERTa & CoLA & 2 & 100.0000 & 84.8514 & 0.00113117 & 0.796737 & \(1.029\mathrm{e}{-7}\) & \(9.033\mathrm{e}{-8}\)\\
DeBERTa & CoLA & 3 & 99.7664 & 84.2761 & 0.00271651 & 0.876833 & \(1.092\mathrm{e}{-7}\) & \(9.477\mathrm{e}{-8}\)\\
DeBERTa & CoLA & 4 & 99.6495 & 85.6184 & 0.00448548 & 0.826907 & \(1.032\mathrm{e}{-7}\) & \(9.095\mathrm{e}{-8}\)\\
DeBERTa & CoLA & 5 & 99.8832 & 83.7009 & 0.00176441 & 0.98918 & \(1.036\mathrm{e}{-7}\) & \(9.016\mathrm{e}{-8}\)\\
\midrule
DeBERTa & MRPC & 1 & 100.0000 & 90.6863 & 0.00190394 & 0.485362 & \(1.027\mathrm{e}{-7}\) & \(8.962\mathrm{e}{-8}\)\\
DeBERTa & MRPC & 2 & 100.0000 & 90.4412 & 0.00160005 & 0.524898 & \(9.864\mathrm{e}{-8}\) & \(7.976\mathrm{e}{-8}\)\\
DeBERTa & MRPC & 3 & 100.0000 & 88.9706 & 0.00262497 & 0.559347 & \(1.029\mathrm{e}{-7}\) & \(8.781\mathrm{e}{-8}\)\\
DeBERTa & MRPC & 4 & 100.0000 & 89.2157 & 0.000999806 & 0.524864 & \(9.661\mathrm{e}{-8}\) & \(8.234\mathrm{e}{-8}\)\\
DeBERTa & MRPC & 5 & 100.0000 & 89.9510 & 0.000455548 & 0.511355 & \(9.805\mathrm{e}{-8}\) & \(8.282\mathrm{e}{-8}\)\\
\midrule
DeBERTa & RTE & 1 & 100.0000 & 77.9783 & 0.000837575 & 0.928425 & \(1.080\mathrm{e}{-7}\) & \(8.528\mathrm{e}{-8}\)\\
DeBERTa & RTE & 2 & 100.0000 & 74.3682 & 0.000552609 & 1.32221 & \(9.979\mathrm{e}{-8}\) & \(8.371\mathrm{e}{-8}\)\\
DeBERTa & RTE & 3 & 100.0000 & 78.3394 & 0.000473645 & 1.16197 & \(1.119\mathrm{e}{-7}\) & \(8.722\mathrm{e}{-8}\)\\
DeBERTa & RTE & 4 & 100.0000 & 76.8953 & 0.001102 & 0.993483 & \(9.699\mathrm{e}{-8}\) & \(8.264\mathrm{e}{-8}\)\\
DeBERTa & RTE & 5 & 100.0000 & 77.6173 & 0.000590181 & 0.998765 & \(1.133\mathrm{e}{-7}\) & \(9.348\mathrm{e}{-8}\)\\
\midrule
DeBERTa & SST-2 & 1 & 99.9852 & 93.3486 & 0.00068339 & 0.485707 & \(5.205\mathrm{e}{-7}\) & \(3.473\mathrm{e}{-7}\)\\
DeBERTa & SST-2 & 2 & 100.0000 & 93.1193 & 0.000274231 & 0.519295 & \(4.986\mathrm{e}{-7}\) & \(3.203\mathrm{e}{-7}\)\\
DeBERTa & SST-2 & 3 & 99.9555 & 93.8073 & 0.00124173 & 0.419316 & \(4.931\mathrm{e}{-7}\) & \(3.300\mathrm{e}{-7}\)\\
DeBERTa & SST-2 & 4 & 99.9852 & 93.3486 & 0.000811524 & 0.48535 & \(4.183\mathrm{e}{-7}\) & \(2.769\mathrm{e}{-7}\)\\
DeBERTa & SST-2 & 5 & 99.9555 & 93.6927 & 0.000626589 & 0.471433 & \(4.684\mathrm{e}{-7}\) & \(3.365\mathrm{e}{-7}\)\\
\midrule
DeBERTa & WiC & 1 & 100.0000 & 68.4953 & 0.000616226 & 1.93402 & \(7.874\mathrm{e}{-8}\) & \(6.402\mathrm{e}{-8}\)\\
DeBERTa & WiC & 2 & 100.0000 & 68.8088 & 0.000279071 & 1.98032 & \(6.371\mathrm{e}{-8}\) & \(5.727\mathrm{e}{-8}\)\\
DeBERTa & WiC & 3 & 100.0000 & 70.2194 & 0.000238491 & 2.08737 & \(7.296\mathrm{e}{-8}\) & \(6.372\mathrm{e}{-8}\)\\
DeBERTa & WiC & 4 & 100.0000 & 70.0627 & 0.000258878 & 1.87312 & \(7.225\mathrm{e}{-8}\) & \(6.064\mathrm{e}{-8}\)\\
DeBERTa & WiC & 5 & 100.0000 & 70.0627 & 0.000814341 & 1.77052 & \(6.081\mathrm{e}{-8}\) & \(5.521\mathrm{e}{-8}\)\\
\midrule
Qwen3-1.7B & BoolQ & 1 & 100.0000 & 85.3211 & 0.00082768 & 1.02536 & \(7.729\mathrm{e}{-7}\) & \(6.195\mathrm{e}{-7}\)\\
Qwen3-1.7B & BoolQ & 2 & 100.0000 & 84.6483 & 0.00132187 & 1.1307 & \(8.603\mathrm{e}{-7}\) & \(7.455\mathrm{e}{-7}\)\\
Qwen3-1.7B & BoolQ & 3 & 100.0000 & 84.9847 & 9.72817e-05 & 1.13387 & \(8.815\mathrm{e}{-7}\) & \(7.131\mathrm{e}{-7}\)\\
Qwen3-1.7B & BoolQ & 4 & 100.0000 & 84.9235 & 0.000156565 & 1.04893 & \(1.011\mathrm{e}{-6}\) & \(7.112\mathrm{e}{-7}\)\\
Qwen3-1.7B & BoolQ & 5 & 100.0000 & 85.4128 & 0.000117134 & 1.0067 & \(8.574\mathrm{e}{-7}\) & \(7.321\mathrm{e}{-7}\)\\
\midrule
Qwen3-1.7B & CoLA & 1 & 99.8832 & 84.4679 & 0.00198925 & 0.957146 & \(7.612\mathrm{e}{-7}\) & \(4.968\mathrm{e}{-7}\)\\
Qwen3-1.7B & CoLA & 2 & 99.8832 & 83.4132 & 0.00144142 & 1.06361 & \(7.098\mathrm{e}{-7}\) & \(6.355\mathrm{e}{-7}\)\\
Qwen3-1.7B & CoLA & 3 & 99.6495 & 83.8926 & 0.00381097 & 1.07089 & \(7.482\mathrm{e}{-7}\) & \(5.895\mathrm{e}{-7}\)\\
Qwen3-1.7B & CoLA & 4 & 99.6495 & 84.3720 & 0.00456941 & 0.880359 & \(1.032\mathrm{e}{-6}\) & \(7.390\mathrm{e}{-7}\)\\
Qwen3-1.7B & CoLA & 5 & 100.0000 & 82.9338 & 0.00082352 & 1.14944 & \(6.806\mathrm{e}{-7}\) & \(6.134\mathrm{e}{-7}\)\\
\midrule
Qwen3-1.7B & MRPC & 1 & 100.0000 & 87.5000 & 0.00124111 & 0.795683 & \(3.377\mathrm{e}{-7}\) & \(2.981\mathrm{e}{-7}\)\\
Qwen3-1.7B & MRPC & 2 & 100.0000 & 86.5196 & 0.000217924 & 0.841098 & \(3.401\mathrm{e}{-7}\) & \(3.000\mathrm{e}{-7}\)\\
Qwen3-1.7B & MRPC & 3 & 100.0000 & 87.5000 & 0.000206326 & 0.69584 & \(2.950\mathrm{e}{-7}\) & \(2.527\mathrm{e}{-7}\)\\
Qwen3-1.7B & MRPC & 4 & 100.0000 & 86.7647 & 0.000352512 & 0.69986 & \(3.380\mathrm{e}{-7}\) & \(3.139\mathrm{e}{-7}\)\\
Qwen3-1.7B & MRPC & 5 & 100.0000 & 87.0098 & 0.000131487 & 0.803078 & \(3.220\mathrm{e}{-7}\) & \(2.758\mathrm{e}{-7}\)\\
\midrule
Qwen3-1.7B & RTE & 1 & 100.0000 & 86.2816 & 0.000366462 & 0.775726 & \(8.752\mathrm{e}{-7}\) & \(5.296\mathrm{e}{-7}\)\\
Qwen3-1.7B & RTE & 2 & 100.0000 & 87.7256 & 9.64961e-05 & 0.764372 & \(7.705\mathrm{e}{-7}\) & \(6.418\mathrm{e}{-7}\)\\
Qwen3-1.7B & RTE & 3 & 100.0000 & 84.1155 & 0.000295404 & 0.812362 & \(1.036\mathrm{e}{-6}\) & \(6.497\mathrm{e}{-7}\)\\
Qwen3-1.7B & RTE & 4 & 100.0000 & 84.8375 & 0.000480105 & 0.843333 & \(7.352\mathrm{e}{-7}\) & \(5.885\mathrm{e}{-7}\)\\
Qwen3-1.7B & RTE & 5 & 100.0000 & 89.5307 & 0.000156471 & 0.687988 & \(6.893\mathrm{e}{-7}\) & \(4.785\mathrm{e}{-7}\)\\
\midrule
Qwen3-1.7B & SST-2 & 1 & 100.0000 & 95.1835 & 4.09856e-05 & 0.512685 & \(2.762\mathrm{e}{-6}\) & \(1.960\mathrm{e}{-6}\)\\
Qwen3-1.7B & SST-2 & 2 & 100.0000 & 96.1009 & 4.44003e-06 & 0.464817 & \(2.140\mathrm{e}{-6}\) & \(1.884\mathrm{e}{-6}\)\\
Qwen3-1.7B & SST-2 & 3 & 99.9852 & 96.2156 & 0.000367619 & 0.459512 & \(2.589\mathrm{e}{-6}\) & \(2.051\mathrm{e}{-6}\)\\
Qwen3-1.7B & SST-2 & 4 & 99.9852 & 95.6422 & 0.000338219 & 0.391191 & \(6.693\mathrm{e}{-6}\) & \(2.720\mathrm{e}{-6}\)\\
Qwen3-1.7B & SST-2 & 5 & 100.0000 & 96.1009 & 9.68524e-05 & 0.362871 & \(2.365\mathrm{e}{-6}\) & \(1.881\mathrm{e}{-6}\)\\
\midrule
Qwen3-1.7B & WiC & 1 & 100.0000 & 71.1599 & 8.84378e-05 & 2.04542 & \(1.015\mathrm{e}{-6}\) & \(5.764\mathrm{e}{-7}\)\\
Qwen3-1.7B & WiC & 2 & 100.0000 & 71.0031 & 9.18411e-05 & 2.52354 & \(7.125\mathrm{e}{-7}\) & \(5.715\mathrm{e}{-7}\)\\
Qwen3-1.7B & WiC & 3 & 100.0000 & 69.4357 & 0.000122925 & 2.22586 & \(7.241\mathrm{e}{-7}\) & \(5.560\mathrm{e}{-7}\)\\
Qwen3-1.7B & WiC & 4 & 100.0000 & 70.5329 & 8.61298e-05 & 2.21311 & \(8.089\mathrm{e}{-7}\) & \(5.679\mathrm{e}{-7}\)\\
Qwen3-1.7B & WiC & 5 & 100.0000 & 73.0408 & 4.18091e-05 & 2.37831 & \(7.277\mathrm{e}{-7}\) & \(6.155\mathrm{e}{-7}\)\\
\bottomrule
\end{longtable}

\endgroup

\subsubsection{Controlled Hierarchy}
\label{app:final-resmlp}
\begingroup\footnotesize
\setlength{\tabcolsep}{3pt}\renewcommand{\arraystretch}{1.05}
\begin{longtable}{@{}llrrrrrr@{}}
\caption{Final recorded checkpoint of every run in the current 25-epoch controlled hierarchy. Accuracies are percentages; loss columns give cross-entropy. The normalized structural readouts are paired with the behavioral measurements from the same checkpoint.}\label{tab:final-resmlp-runs}\\
\toprule
Private share $p$ & Run & \multicolumn{2}{c}{Accuracy (\%)} & \multicolumn{2}{c}{Cross-entropy} & $D_{\mathrm{den}}$ & $D_{\mathrm{within}}^{\mathrm{wt}}$\\
\cmidrule(lr){3-4}\cmidrule(lr){5-6}
 &  & Train & Validation & Train & Validation &  & \\
\midrule
\endfirsthead
\toprule
Private share $p$ & Run & \multicolumn{2}{c}{Accuracy (\%)} & \multicolumn{2}{c}{Cross-entropy} & $D_{\mathrm{den}}$ & $D_{\mathrm{within}}^{\mathrm{wt}}$\\
\cmidrule(lr){3-4}\cmidrule(lr){5-6}
 &  & Train & Validation & Train & Validation &  & \\
\midrule
\endhead
.3 & 1 & 100.0000 & 80.4878 & 0.0308979 & 1.25231 & \(4.472\mathrm{e}{-6}\) & \(2.355\mathrm{e}{-6}\)\\
.3 & 2 & 100.0000 & 83.8235 & 0.0115706 & 1.04563 & \(4.545\mathrm{e}{-6}\) & \(2.362\mathrm{e}{-6}\)\\
.3 & 3 & 100.0000 & 86.8293 & 0.00416103 & 0.810944 & \(3.735\mathrm{e}{-6}\) & \(1.871\mathrm{e}{-6}\)\\
.3 & 4 & 100.0000 & 78.9216 & 0.00439673 & 1.75954 & \(1.414\mathrm{e}{-5}\) & \(2.369\mathrm{e}{-6}\)\\
.3 & 5 & 100.0000 & 77.6699 & 0.0138152 & 2.18861 & \(2.874\mathrm{e}{-6}\) & \(1.022\mathrm{e}{-6}\)\\
\midrule
.5 & 1 & 98.9189 & 65.3659 & 0.0430272 & 2.2372 & \(4.477\mathrm{e}{-6}\) & \(2.753\mathrm{e}{-6}\)\\
.5 & 2 & 95.6757 & 63.7255 & 0.101287 & 2.0734 & \(4.804\mathrm{e}{-6}\) & \(3.433\mathrm{e}{-6}\)\\
.5 & 3 & 100.0000 & 71.7073 & 0.00317956 & 1.97123 & \(3.936\mathrm{e}{-6}\) & \(2.217\mathrm{e}{-6}\)\\
.5 & 4 & 98.3784 & 62.2549 & 0.0646098 & 2.02132 & \(3.987\mathrm{e}{-6}\) & \(2.415\mathrm{e}{-6}\)\\
.5 & 5 & 100.0000 & 64.5631 & 0.00406322 & 3.16627 & \(3.064\mathrm{e}{-6}\) & \(1.240\mathrm{e}{-6}\)\\
\midrule
.7 & 1 & 100.0000 & 46.8293 & 0.00947923 & 3.82337 & \(3.214\mathrm{e}{-6}\) & \(1.890\mathrm{e}{-6}\)\\
.7 & 2 & 96.7568 & 45.0980 & 0.186086 & 5.96216 & \(1.347\mathrm{e}{-5}\) & \(4.708\mathrm{e}{-6}\)\\
.7 & 3 & 97.8378 & 48.7805 & 0.0727181 & 3.06903 & \(3.480\mathrm{e}{-6}\) & \(2.314\mathrm{e}{-6}\)\\
.7 & 4 & 98.3784 & 46.5686 & 0.0566307 & 3.42248 & \(3.862\mathrm{e}{-6}\) & \(2.216\mathrm{e}{-6}\)\\
.7 & 5 & 98.9189 & 41.2621 & 0.0816254 & 3.52193 & \(3.286\mathrm{e}{-5}\) & \(8.586\mathrm{e}{-6}\)\\
\bottomrule
\end{longtable}

\endgroup

\subsubsection{Vision Extension}
\label{app:final-vision}
\begingroup\footnotesize
\setlength{\tabcolsep}{3pt}\renewcommand{\arraystretch}{1.05}
\begin{longtable}{@{}llrrrrrr@{}}
\caption{Final recorded checkpoint of every run in the current 40-epoch vision study. Accuracies are percentages; validation columns refer to the official test split. Training accuracy is measured under augmentation, as specified in Appendix~\ref{app:results-vision}. Both structural readouts include every class.}\label{tab:final-vision-runs}\\
\toprule
Dataset & Run & \multicolumn{2}{c}{Accuracy (\%)} & \multicolumn{2}{c}{Cross-entropy} & $D_{\mathrm{den}}$ & $D_{\mathrm{within}}^{\mathrm{wt}}$\\
\cmidrule(lr){3-4}\cmidrule(lr){5-6}
 &  & Train & Validation & Train & Validation &  & \\
\midrule
\endfirsthead
\toprule
Dataset & Run & \multicolumn{2}{c}{Accuracy (\%)} & \multicolumn{2}{c}{Cross-entropy} & $D_{\mathrm{den}}$ & $D_{\mathrm{within}}^{\mathrm{wt}}$\\
\cmidrule(lr){3-4}\cmidrule(lr){5-6}
 &  & Train & Validation & Train & Validation &  & \\
\midrule
\endhead
CIFAR-10 & 1 & 93.6200 & 95.8700 & 0.177403 & 0.144321 & \(3.494\mathrm{e}{-6}\) & \(2.908\mathrm{e}{-6}\)\\
CIFAR-10 & 2 & 93.8800 & 96.0900 & 0.168495 & 0.135994 & \(3.461\mathrm{e}{-6}\) & \(2.920\mathrm{e}{-6}\)\\
CIFAR-10 & 3 & 94.1000 & 96.1400 & 0.169789 & 0.13577 & \(3.496\mathrm{e}{-6}\) & \(2.895\mathrm{e}{-6}\)\\
CIFAR-10 & 4 & 93.8800 & 96.1600 & 0.1756 & 0.132518 & \(3.520\mathrm{e}{-6}\) & \(2.908\mathrm{e}{-6}\)\\
CIFAR-10 & 5 & 93.9400 & 96.2200 & 0.173402 & 0.129801 & \(3.616\mathrm{e}{-6}\) & \(3.020\mathrm{e}{-6}\)\\
\midrule
CIFAR-100 & 1 & 88.6600 & 80.7400 & 0.424715 & 0.741999 & \(2.852\mathrm{e}{-6}\) & \(1.248\mathrm{e}{-6}\)\\
CIFAR-100 & 2 & 89.1200 & 81.1400 & 0.409146 & 0.740317 & \(2.985\mathrm{e}{-6}\) & \(1.299\mathrm{e}{-6}\)\\
CIFAR-100 & 3 & 88.6600 & 80.8100 & 0.420635 & 0.747813 & \(2.875\mathrm{e}{-6}\) & \(1.254\mathrm{e}{-6}\)\\
CIFAR-100 & 4 & 88.2400 & 81.3800 & 0.432419 & 0.727304 & \(2.990\mathrm{e}{-6}\) & \(1.287\mathrm{e}{-6}\)\\
CIFAR-100 & 5 & 89.1000 & 81.1500 & 0.405652 & 0.730587 & \(2.874\mathrm{e}{-6}\) & \(1.218\mathrm{e}{-6}\)\\
\midrule
STL-10 & 1 & 96.4000 & 94.5750 & 0.104851 & 0.192375 & \(2.227\mathrm{e}{-6}\) & \(1.756\mathrm{e}{-6}\)\\
STL-10 & 2 & 97.8000 & 94.4250 & 0.0777162 & 0.186733 & \(2.184\mathrm{e}{-6}\) & \(1.712\mathrm{e}{-6}\)\\
STL-10 & 3 & 97.8000 & 94.3750 & 0.0743775 & 0.194904 & \(2.124\mathrm{e}{-6}\) & \(1.617\mathrm{e}{-6}\)\\
STL-10 & 4 & 97.6000 & 94.6125 & 0.0754002 & 0.182313 & \(2.142\mathrm{e}{-6}\) & \(1.664\mathrm{e}{-6}\)\\
STL-10 & 5 & 97.6000 & 94.8750 & 0.0560533 & 0.178012 & \(2.208\mathrm{e}{-6}\) & \(1.719\mathrm{e}{-6}\)\\
\bottomrule
\end{longtable}

\endgroup

\section{Sensitivity and Readout Ablations}
\label{app:sensitivity}
The monitor-depth, probe-count, and precision analyses use the controlled hierarchy, supplemented by a depth check and recorded gradient-mass diagnostics in the language backbones. The final subsections examine negative-gap checkpoints and the module-level projection in the hierarchy and language runs. All use the shared-output weight-gradient scheme of Appendix~\ref{app:monitor}. The main protocol fixes $k=4,6,2$ for RoBERTa-base, DeBERTa-base, and Qwen3-1.7B, respectively, and $k=10$ for the hierarchy; the depth analyses below vary only the number of final blocks in this scheme. The probe-count analysis reads the training runs of Table~\ref{tab:resmlp-main} with different numbers of probes. The hierarchy's monitor-depth, low-mass, and precision analyses use their own training runs and density reductions accumulated in 64-bit precision; their values are therefore not repeated measurements of the runs in Table~\ref{tab:resmlp-main}.

\subsection{Monitor Depth in the Hierarchy}
Fifteen training runs, five for each $p\in\{.3,.5,.7\}$, are each read with the final $k\in\{2,4,6,8,10\}$ blocks monitored. The five readings of one run share its accuracy and loss and differ only in the monitored coordinates. Table~\ref{tab:app-depth} shows that both the mean association with the accuracy gap and the number of positive runs grow with the number of monitored blocks. The mean raw correlation of $\Dden$ rises from $\NumDepthTwoDdenRaw$ ($\NumDepthTwoDdenPos$/15 positive) with two blocks to $\NumDepthTenDdenRaw$ ($\NumDepthTenDdenPos$/15) with ten.

\begin{table}[htbp]
\centering
\caption{Monitor depth. Mean within-run correlation with the accuracy gap over 15 runs, with the number of positive runs.}
\label{tab:app-depth}
\small
\begin{tabular}{ccccc}
\toprule
Monitored blocks $k$ & $D_{\mathrm{within}}^{\mathrm{wt}}$ raw & $D_{\mathrm{within}}^{\mathrm{wt}}$ LOWESS & $D_{\mathrm{den}}$ raw & $D_{\mathrm{den}}$ LOWESS\\
\midrule
2 & .276 (9/15) & .318 (11/15) & .366 (11/15) & .425 (11/15)\\
4 & .371 (10/15) & .421 (11/15) & .486 (11/15) & .570 (12/15)\\
6 & .451 (12/15) & .541 (14/15) & .565 (14/15) & .674 (14/15)\\
8 & .516 (15/15) & .639 (15/15) & .617 (14/15) & .726 (15/15)\\
10 & .562 (15/15) & .690 (15/15) & .645 (15/15) & .753 (15/15)\\
\bottomrule
\end{tabular}

\end{table}

\subsection{Monitor Depth in the Language Backbones}
\label{app:sensitivity-depth-nlp}
After the initial BoolQ monitor calibration described in Appendix~\ref{app:monitor}, we performed a separate depth-sensitivity check using one additional, complete 10-epoch BoolQ trajectory per language backbone. Each run uses the BoolQ configuration of the main study: 10 epochs, the selected learning rate and the backbone's batch size, weight decay, and dropout (Table~\ref{tab:config-runs}), full fine-tuning in FP32 for RoBERTa-base and DeBERTa-base and rank-32 LoRA in BF16 for Qwen3-1.7B, 128 class-proportional training probes, and evaluation about every half epoch (20 checkpoints for RoBERTa-base and DeBERTa-base, 22 for Qwen3-1.7B). The three runs were executed one after another on a single NVIDIA RTX 3090 GPU with deterministic CUDA algorithms and TF32 disabled (PyTorch 2.11, Transformers 5.3). Each run is read with the shared-output weight-gradient scheme of Table~\ref{tab:monitors} over the final $k\in\{2,4,6,8,10,12\}$ blocks. As in the hierarchy, the six readings of one run share its accuracy and loss and differ only in the monitored coordinates.

Each backbone contributes one training trajectory, so Table~\ref{tab:app-depth-nlp} reports its within-run correlations. These additional runs are separate from both the initial BoolQ monitor-calibration runs and the 90-run main cohort; their correlations are reported separately.

Every association of $\Dw$ and $\Dden$ with the accuracy gap is positive at every depth and under both views, with raw correlations from $\NumBoolqDepthMin$ to $\NumBoolqDepthMax$ and LOWESS correlations from $\NumBoolqDepthLowessMin$ to $\NumBoolqDepthLowessMax$. The strength depends on the depth in a backbone-specific way: it changes little for DeBERTa-base, decreases moderately with depth for RoBERTa-base, and, as in the hierarchy, increases with depth for Qwen3-1.7B. For Qwen3-1.7B, the raw correlation of $\Dden$ rises from $\NumBoolqQwenDdenTwo$ with two blocks to $\NumBoolqQwenDdenTwelve$ with twelve.

\begin{table}[htbp]
\centering
\caption{Follow-up monitor-depth sensitivity in the language backbones, separate from the initial BoolQ calibration: one additional BoolQ run per backbone, read with the shared-output weight-gradient scheme over the final $k$ blocks. Each cell gives the raw / LOWESS within-run Pearson correlation with the accuracy gap; the six depths of a backbone come from the same training run. The main protocol fixes $k=4,6,2$ for RoBERTa-base, DeBERTa-base, and Qwen3-1.7B, respectively. Monitored coordinates per block: 2{,}950{,}656 for RoBERTa-base and DeBERTa-base, 393{,}216 for Qwen3-1.7B.}
\label{tab:app-depth-nlp}
\small
\begin{tabular}{ccccccc}
\toprule
 & \multicolumn{2}{c}{RoBERTa-base} & \multicolumn{2}{c}{DeBERTa-base} & \multicolumn{2}{c}{Qwen3-1.7B}\\
\cmidrule(lr){2-3}\cmidrule(lr){4-5}\cmidrule(lr){6-7}
Monitored blocks $k$ & $D_{\mathrm{within}}^{\mathrm{wt}}$ & $D_{\mathrm{den}}$ & $D_{\mathrm{within}}^{\mathrm{wt}}$ & $D_{\mathrm{den}}$ & $D_{\mathrm{within}}^{\mathrm{wt}}$ & $D_{\mathrm{den}}$\\
\midrule
2 & .805/.876 & .875/.910 & .898/.931 & .902/.916 & .773/.787 & .762/.778\\
4 & .816/.911 & .860/.914 & .941/.968 & .911/.927 & .845/.858 & .853/.866\\
6 & .694/.820 & .799/.894 & .946/.973 & .906/.925 & .875/.888 & .888/.899\\
8 & .667/.812 & .755/.867 & .932/.970 & .901/.927 & .900/.912 & .917/.927\\
10 & .682/.829 & .737/.846 & .878/.941 & .864/.906 & .915/.927 & .945/.953\\
12 & .674/.827 & .722/.834 & .796/.901 & .821/.886 & .921/.933 & .959/.965\\
\bottomrule
\end{tabular}

\end{table}

\subsection{Number of Probes}
Table~\ref{tab:app-probe-count} reports the association for 64, 128, 256, and 512 class-balanced probes. All four columns read the same 15 training runs, whose accuracy trajectories are identical across columns, so the cells differ only in the probe sample; the $N=128$ column reproduces Table~\ref{tab:resmlp-main}. Every cell has five of five runs positive under both views, and the associations change little with $N$.

\begin{table}[htbp]
\centering
\caption{Number of training probes. Mean raw / LOWESS within-run correlation with the accuracy gap over five runs.}
\label{tab:app-probe-count}
\small
\begin{tabular}{llcccc}
\toprule
Readout & $p$ & $N=64$ & $N=128$ & $N=256$ & $N=512$\\
\midrule
\multirow{3}{*}{$D_{\mathrm{within}}^{\mathrm{wt}}$} & .3 & .698 / .749 & .703 / .757 & .730 / .790 & .734 / .795\\
 & .5 & .669 / .716 & .632 / .679 & .626 / .675 & .615 / .662\\
 & .7 & .893 / .919 & .891 / .911 & .873 / .895 & .872 / .895\\
\midrule
\multirow{3}{*}{$D_{\mathrm{den}}$} & .3 & .911 / .950 & .911 / .951 & .907 / .948 & .905 / .946\\
 & .5 & .795 / .831 & .777 / .812 & .784 / .819 & .776 / .812\\
 & .7 & .970 / .983 & .972 / .984 & .970 / .983 & .968 / .981\\
\bottomrule
\end{tabular}

\end{table}

\subsection{Near-Zero Gradient Mass and Precision}
A probe that is fitted with high confidence has a small gradient, and its allocation is then sensitive to numerical error (Appendix~\ref{app:norm-stability}). In the $k=10$ depth runs, 460 of 48{,}000 probe evaluations (0.96\%) at 61 of 375 checkpoints had monitored mass at most $\varepsilon$ and were excluded as invalid; the median over checkpoints of the 1st, 5th, 10th, and 50th mass percentiles was $2.3\times10^{-4}$, $2.5\times10^{-3}$, $2.2\times10^{-2}$, and $4.6$. Excluding in addition the lowest 1\%, 5\%, or 10\% of masses at each checkpoint leaves the association of $\Dden$ almost unchanged (Table~\ref{tab:app-low-mass}).

\begin{table}[htbp]
\centering
\caption{Association of $\Dden$ with the accuracy gap after excluding the lowest-mass probes at each checkpoint ($k=10$ depth runs; 15 runs).}
\label{tab:app-low-mass}
\small
\begin{tabular}{ccc}
\toprule
Additional exclusion of lowest-mass probes & $D_{\mathrm{den}}$ raw $r$ & $D_{\mathrm{den}}$ LOWESS $r$\\
\midrule
0\% & .645 (15/15) & .753 (15/15)\\
1\% & .643 (15/15) & .752 (15/15)\\
5\% & .645 (15/15) & .756 (15/15)\\
10\% & .643 (15/15) & .757 (15/15)\\
\bottomrule
\end{tabular}

\end{table}

Two precision controls use one run per private share. With all gradients computed in FP32, no probe evaluation among 9{,}600 was invalid, the smallest mass was $2.6\times10^{-7}$, and the maximum $\varepsilon/S_i$ was approximately $3.8\times10^{-6}$; the raw / LOWESS associations of $\Dden$ with the accuracy gap were $.862/.913$, $.841/.874$, and $.914/.937$ at $p=.3,.5,.7$. With BF16 autocast, 26 of 9{,}600 evaluations were invalid, all at $p=.3$; the mean associations were $.891/.955$ for $\Dw$ and $.941/.980$ for $\Dden$, positive in all three runs, and excluding the lowest 1\%, 5\%, or 10\% of masses changed those of $\Dden$ to $.940/.980$, $.940/.982$, and $.939/.983$. Runtime determinism settings differed across the three BF16 runs, so this control addresses numerical sensitivity, not a matched comparison of private shares. These precision controls concern the hierarchy. The following diagnostics report the monitored masses available for three language runs.

\paragraph{Recorded gradient masses in language runs.}
The three runs with recorded monitored-gradient masses had no invalid probe; their smallest masses were $\NumMassDebertaEight$ and $\NumMassDebertaNine$ (DeBERTa) and $\NumMassQwen$ (Qwen3-1.7B, BF16), so $\varepsilon/S_i\le10^{-7}$ and the stabilizer is negligible in these runs.

\subsection{Negative-Gap Checkpoints}
\label{app:negative-gap-checkpoints}
Excluding the few checkpoints with a negative accuracy gap ($\NumNlpNegCheckpoints$ of 1{,}870 language checkpoints, all within the first two epochs, and $\NumResNegCheckpoints$ of 375 hierarchy checkpoints, all within the first seven of 25 epochs) leaves the mean raw association of $\Dden$ at $\NumNlpNonNegDdenRaw$ ($\NumNlpNonNegDdenRawPos$/90 positive) and $\NumResNonNegDdenRaw$ ($\NumResNonNegDdenRawPos$/15).

\FloatBarrier
\subsection{Module-Level Projection}
\label{app:results-module}
The module-level readout $\Dmod$ (Eq.~\eqref{eq:dmod}) summarizes how the probes divide their sensitivity among monitored projections. Table~\ref{tab:module} compares the associations of the two recorded readouts along the same training trajectories.

\begin{table}[htbp]
\centering
\caption{Coordinate-level readout $\Dden$ and module-level readout $\Dmod$ on the same runs. Within-run correlation with the accuracy gap as mean raw $r$ / mean LOWESS $r$ / runs positive under both; SD: between-run standard deviation of the raw correlation; $\Dden$ higher: runs in which the raw correlation of $\Dden$ exceeds that of $\Dmod$. Tables~\ref{tab:app-module} and~\ref{tab:app-module-time} give the values by dataset and private share and the time-controlled checks.}
\label{tab:module}
\small
\resizebox{\linewidth}{!}{
\begin{tabular}{llccccc}
\toprule
 & & \multicolumn{2}{c}{$D_{\mathrm{den}}$} & \multicolumn{2}{c}{$D_{\mathrm{mod}}$} & \\
\cmidrule(lr){3-4}\cmidrule(lr){5-6}
Study & Backbone & Raw / LOWESS / positive & SD & Raw / LOWESS / positive & SD & $D_{\mathrm{den}}$ higher\\
\midrule
Language & RoBERTa & .877 / .922 / 30/30 & .074 & .350 / .421 / 24/30 & .431 & 30/30\\
 & DeBERTa & .883 / .949 / 30/30 & .067 & .551 / .642 / 29/30 & .314 & 29/30\\
 & Qwen3-1.7B & .822 / .867 / 30/30 & .213 & .776 / .846 / 29/30 & .211 & 23/30\\
\midrule
 & All 90 runs & .861 / .913 / 90/90 & .137 & .559 / .636 / 82/90 & .371 & 82/90\\
\midrule
Hierarchy & ResMLP-24 & .887 / .916 / 15/15 & .163 & .855 / .894 / 15/15 & .139 & 11/15\\
\bottomrule
\end{tabular}
}
\end{table}

In the language runs, $\Dmod$ is associated with the accuracy gap, but more weakly and less stably than $\Dden$. Its mean correlation is $\NumModNlpRaw$ raw and $\NumModNlpLowess$ after LOWESS, positive under both views in $\NumModNlpJoint$ of 90 runs, against $\NumNlpDdenRaw$, $\NumNlpDdenLowess$, and 90 of 90 for $\Dden$. Its between-run standard deviation is $\NumModNlpSd$, against $\NumModNlpDdenSd$, and $\Dden$ has the larger raw correlation in $\NumModNlpDenHigher$ of 90 runs. All five runs are positive in $\NumModCondAllPos$ of the 18 backbone--dataset pairs for $\Dmod$ and in all 18 for $\Dden$; the lowest pair mean is $\NumModCondLowest$ for $\Dmod$ (RoBERTa-base on RTE) and $\NumModCondDdenLowest$ for $\Dden$. The difference is largest in the encoder backbones, where the mean raw correlation of $\Dmod$ is $\NumModRobertaRaw$ (RoBERTa-base) and $\NumModDebertaRaw$ (DeBERTa-base); in Qwen3-1.7B it is $\NumModQwenRaw$, close to the $\NumModQwenDdenRaw$ of $\Dden$.

In the controlled hierarchy, $\Dmod$ follows the accuracy gap almost as closely as $\Dden$: its mean correlation is $\NumModResRaw$ raw and $\NumModResLowess$ after LOWESS, positive in all 15 runs, against $\NumModResDdenRaw$ and $\NumModResDdenLowess$ for $\Dden$, which has the larger raw correlation in $\NumModResDenHigher$ of 15 runs.

The module-level projection retains a strong association with the gap in the hierarchy and in Qwen3-1.7B, whereas its association is weaker and varies in sign between runs in the encoder backbones. Across both studies, the coordinate-level readout is positively associated with the gap in every run.

Table~\ref{tab:app-module} gives the comparison of Section~\ref{sec:results-module} for every backbone--dataset pair and private share. In the language runs, $\Dmod$ falls below $\Dden$ in every RoBERTa-base and DeBERTa-base pair, with mean raw correlations of $\NumModRobertaMrpcRaw$ on MRPC ($\NumModRobertaMrpcPos$ of five runs positive) and $\NumModRobertaRteRaw$ on RTE ($\NumModRobertaRtePos$ of five) for RoBERTa-base; in the Qwen3-1.7B pairs and in the hierarchy the two readouts are close.

Table~\ref{tab:app-module-time} applies the stricter checks of Appendix~\ref{app:results-nlp} to the language runs: the correlation after removing a linear time trend, and the correlation of one-epoch changes. $\Dmod$ remains positive after the time control in $\NumModNlpPartialPos$/90 runs (mean $\NumModNlpPartial$) and its one-epoch changes in $\NumModNlpDiffPos$/90 (mean $\NumModNlpDiff$), compared with $\NumNlpDdenPartialPos$/90 and $\NumNlpDdenDiffPos$/90 for $\Dden$.

\begin{table}[htbp]
\centering
\caption{Coordinate-level readout $\Dden$ and module-level projection $\Dmod$ by dataset and private share: mean raw / LOWESS within-run Pearson $r$ with the accuracy gap over five runs, the number of runs positive under both views, and the number of runs in which the raw correlation of $\Dden$ exceeds that of $\Dmod$.}
\label{tab:app-module}
\small
\begin{tabular}{llccc}
\toprule
Backbone & Dataset & $D_{\mathrm{den}}$ & $D_{\mathrm{mod}}$ & $D_{\mathrm{den}}$ higher\\
\midrule
RoBERTa & BoolQ & $.912/.935$ (5/5) & $.673/.783$ (5/5) & 5/5\\
RoBERTa & CoLA & $.847/.900$ (5/5) & $.427/.569$ (5/5) & 5/5\\
RoBERTa & MRPC & $.852/.914$ (5/5) & $.032/.035$ (3/5) & 5/5\\
RoBERTa & RTE & $.934/.963$ (5/5) & ${-.022}/{-.102}$ (2/5) & 5/5\\
RoBERTa & SST-2 & $.797/.851$ (5/5) & $.373/.444$ (4/5) & 5/5\\
RoBERTa & WiC & $.923/.969$ (5/5) & $.616/.800$ (5/5) & 5/5\\
\midrule
DeBERTa & BoolQ & $.924/.972$ (5/5) & $.550/.630$ (5/5) & 5/5\\
DeBERTa & CoLA & $.870/.944$ (5/5) & $.455/.557$ (5/5) & 5/5\\
DeBERTa & MRPC & $.864/.955$ (5/5) & $.265/.298$ (4/5) & 5/5\\
DeBERTa & RTE & $.871/.946$ (5/5) & $.601/.718$ (5/5) & 5/5\\
DeBERTa & SST-2 & $.912/.967$ (5/5) & $.715/.809$ (5/5) & 5/5\\
DeBERTa & WiC & $.859/.912$ (5/5) & $.722/.841$ (5/5) & 4/5\\
\midrule
Qwen3-1.7B & BoolQ & $.942/.957$ (5/5) & $.855/.903$ (5/5) & 5/5\\
Qwen3-1.7B & CoLA & $.914/.961$ (5/5) & $.842/.912$ (5/5) & 5/5\\
Qwen3-1.7B & MRPC & $.932/.963$ (5/5) & $.902/.953$ (5/5) & 3/5\\
Qwen3-1.7B & RTE & $.650/.758$ (5/5) & $.671/.824$ (5/5) & 2/5\\
Qwen3-1.7B & SST-2 & $.550/.601$ (5/5) & $.538/.600$ (4/5) & 3/5\\
Qwen3-1.7B & WiC & $.943/.964$ (5/5) & $.845/.883$ (5/5) & 5/5\\
\midrule
ResMLP-24 & $p=.3$ & $.911/.951$ (5/5) & $.821/.869$ (5/5) & 4/5\\
ResMLP-24 & $p=.5$ & $.777/.812$ (5/5) & $.783/.828$ (5/5) & 3/5\\
ResMLP-24 & $p=.7$ & $.972/.984$ (5/5) & $.962/.985$ (5/5) & 4/5\\
\bottomrule
\end{tabular}

\end{table}

\begin{table}[htbp]
\centering
\caption{Stricter checks of $\Dden$ and $\Dmod$ in the language runs. Linear-time control: correlation with the accuracy gap after removing a linear epoch trend from both series. $\Delta$: correlation of one-epoch changes. Mean within-run $r$ with the number of positive runs.}
\label{tab:app-module-time}
\small
\begin{tabular}{lcccc}
\toprule
 & \multicolumn{2}{c}{Linear-time control} & \multicolumn{2}{c}{$\Delta$ vs.\ $\Delta\GapAcc$}\\
\cmidrule(lr){2-3}\cmidrule(lr){4-5}
Backbone & $D_{\mathrm{den}}$ & $D_{\mathrm{mod}}$ & $D_{\mathrm{den}}$ & $D_{\mathrm{mod}}$\\
\midrule
RoBERTa & .823 (30/30) & .335 (25/30) & .646 (30/30) & .190 (24/30)\\
DeBERTa & .793 (30/30) & .259 (21/30) & .587 (29/30) & .114 (20/30)\\
Qwen3-1.7B & .821 (30/30) & .635 (28/30) & .596 (29/30) & .297 (24/30)\\
\midrule
All 90 runs & .812 (90/90) & .410 (74/90) & .610 (88/90) & .200 (68/90)\\
\bottomrule
\end{tabular}

\end{table}

\section{Implementation and Reproducibility}
\label{app:repro}
This appendix collects the information needed to reproduce the experiments. Appendix~\ref{app:run-protocol} gives the run protocol and initialization, Appendix~\ref{app:hierarchy-construction} the hierarchy construction, Appendix~\ref{app:monitor} the monitored coordinates and evaluation sets, Appendix~\ref{app:hpo} the split and selection protocol, and Appendix~\ref{app:statistics-details} the statistical implementation. The normalized readouts and their stabilizer are defined in Section~\ref{sec:measurement}. Appendix~\ref{app:probe-computation} gives the probe computation and gradient precision, Tables~\ref{tab:config} and~\ref{tab:config-runs} the protocol and the training settings of each experiment, Appendix~\ref{app:provenance} the provenance of the runs, and Appendix~\ref{app:package} the data package, scripts, and code.

\subsection{Run Protocol and Initialization}
\label{app:run-protocol}
Every condition has five independent runs. Within a backbone all runs start from the same pretrained checkpoint; the seed determines the initialization of the newly added classification head (and, for Qwen3-1.7B, of the LoRA adapters), the order of the training data, and stochastic dropout where enabled; in the hierarchy it also draws the synthetic examples and their split, and in the vision runs the training-time augmentation. Dropout settings are listed by experiment in Appendix~\ref{app:config}. The probe branch is observational: it runs in evaluation mode between optimizer steps, clears its gradients, and changes neither the parameters nor the training schedule.

\paragraph{Hardware of the rebuttal-stage experiments.}
NVIDIA GeForce RTX 4090 GPUs were used for the rebuttal-stage experiments, including the original executions of the four runs listed in Appendix~\ref{app:history-updates}. The final re-executions of these four runs used NVIDIA GeForce RTX 3090 GPUs; their final process records are used throughout the current analyses. The three additional BoolQ monitor-depth runs in Appendix~\ref{app:sensitivity-depth-nlp} also used an RTX 3090. The historical 25-epoch language cohort and earlier ResMLP study are described separately in Appendices~\ref{app:history-nlp} and~\ref{app:history-resmlp}.

\subsection{Hierarchy Construction}
\label{app:hierarchy-construction}
The five feature blocks of Section~\ref{sec:design-hierarchy} have 16, 24, 32, 48, and 2048 coordinates. Their class prototypes and large-, medium-, and small-community prototypes are shared at the corresponding levels; the communities cover 60\%, 40\%, and 20\% of the pool per community. The feature noise $\eta$ is Gaussian with standard deviation $0.01$, and there is no label noise. Per-example level weights $w=(w_g,w_l,w_m,w_s,w_p)$ are Dirichlet draws with concentration $0.6$ around the target mixture of Eq.~\eqref{eq:target-mixture}, and a quarter of the examples receive weight on a single level. The intervention changes the private share $p$ while keeping the four shared levels in the ratio $1{:}2{:}3{:}4$.

A pool of 2048 examples in four classes is split by class into about 1843 training and 205 validation examples. A fixed random arrangement with random signs and a scale factor of 4, drawn once from a fixed seed and shared by all runs, repeats each of the 2168 feature coordinates about 69 times and maps each feature vector to a $3\times224\times224$ input of a pretrained ResMLP-24 \citep{touvron2021resmlp}. Because a private coordinate belongs to one example, private support cannot transfer to the validation split by construction; the experiment therefore tests whether the readouts and the gap respond to the amount of such support, not whether private support transfers poorly.

\subsection{Monitoring and Evaluation}
\label{sec:design-monitor}
\label{app:monitor}
The monitoring scheme is the shared-output weight-gradient scheme: in each of the final $k$ blocks, the monitored coordinates are the trainable parameters of the projections that write back into the residual stream (Table~\ref{tab:monitors}). These are the attention-output and feed-forward-output projections of the transformer backbones (for Qwen3-1.7B, the trainable LoRA parameters attached to them), the token-mixing and channel-MLP output projections of ResMLP-24, and the last convolution and normalization of each ResNet basic block. Projection parameters include weights and biases where present. The classification head is never monitored. Each language backbone underwent its own initial HPO stage on BoolQ. After selecting its task learning rate and dropout, we calibrated one monitoring depth for that backbone: $k=4$ for RoBERTa-base, $k=6$ for DeBERTa-base, and $k=2$ for Qwen3-1.7B. Each backbone then used this fixed depth across all six tasks, with task-specific training hyperparameters tuned separately (Appendix~\ref{app:hpo}). The controlled hierarchy fixes $k=10$. For the vision extension, an initial calibration on STL-10 determined $k=2$, which was then held fixed for CIFAR-10 and CIFAR-100. The subsequent depth analyses in Appendix~\ref{app:sensitivity} use additional training runs to vary $k$ while retaining this scheme; the language depth check is a separate follow-up study using complete 10-epoch trajectories, distinct from the initial BoolQ HPO calibration.

\begin{table}[htbp]
\centering
\caption{Fixed monitoring depths of the main protocol, using the shared-output weight-gradient scheme.}
\label{tab:monitors}
\small
\begin{tabular}{@{}llp{0.62\linewidth}@{}}
\toprule
Backbone & $k$ & Monitored parameters in each of the final $k$ blocks\\
\midrule
RoBERTa-base & 4 & attention-output and feed-forward-output dense layers\\
DeBERTa-base & 6 & attention-output and feed-forward-output dense layers\\
Qwen3-1.7B & 2 & LoRA parameters of \texttt{o\_proj} and \texttt{down\_proj}\\
ResMLP-24 & 10 & token-mixing and channel-MLP output projections\\
ResNet-18 & 2 & \texttt{conv2} and \texttt{bn2}\\
\bottomrule
\end{tabular}
\end{table}

\paragraph{Language training and evaluation.} The language runs fine-tune RoBERTa-base \citep{liu2019roberta} and DeBERTa-base \citep{he2020deberta} fully in FP32, and Qwen3-1.7B \citep{yang2025qwen3} with a frozen backbone and rank-32 LoRA adapters \citep{hu2022lora} on all attention and MLP projections in BF16, each with a trained classification head, on six GLUE and SuperGLUE tasks \citep{wang2019glue,wang2019superglue} for 10 epochs. The formal runs use the full official training split and evaluate validation accuracy and loss on the full official validation split. Training accuracy is evaluated on a fixed class-proportional 10\% subset of the full training split; those examples remain part of the optimization data. The 128 probes are drawn from the training split in proportion to its class frequencies (BoolQ 48/80, CoLA 38/90, MRPC 42/86, RTE 64/64, SST-2 57/71, WiC 64/64 for classes 0/1). The vision extension (ResNet-18) is described with its results in Appendix~\ref{app:results-vision}, and Appendix~\ref{app:config} lists the training settings.

\paragraph{Training probes.} The readouts of a run are computed on a fixed set of training examples, the probes, which serve only the gradient measurement and are distinct from the subsets on which accuracy and loss are evaluated: training accuracy uses a fixed 10\% subset of the training split, and validation accuracy the full validation split (the official test split in the vision runs). The probe design differs between the studies. The language runs use 128 probes drawn in proportion to the training-class frequencies, so the two classes contribute different numbers of probes (the class counts above). The hierarchy uses 128 probes, 32 per class. The vision runs use a fixed number of probes per class, 10 on CIFAR-10 and STL-10 (100 probes) and 2 on CIFAR-100 (200 probes); these training splits are class-balanced, so this is proportional sampling with a total that is a multiple of the number of classes, and every class has at least two probes for its within-class dispersion. In all three studies the within-class readout weights each class by its probe count (Eq.~\eqref{eq:within-between}).

\subsection{Hyperparameter Selection}
\label{sec:design-hpo}
\label{app:hpo}
The initial HPO stage was conducted separately for each language backbone on BoolQ. We first selected its task learning rate and dropout, then held those settings fixed while calibrating its monitoring depth on BoolQ (Appendix~\ref{app:monitor}). The selected depth was then held fixed across all six tasks; the other five tasks received their own learning-rate and dropout searches. These selections were made for the earlier benchmark experiments. The current 10-epoch language cohort retains the selected task configurations because the six NLP datasets are unchanged by the synthetic-hierarchy redesign. The language HPO runs used the full official training split for optimization and a fixed 10\% subset of the official benchmark validation split for selection. To limit HPO cost, each candidate configuration was trained for one epoch with two seeds; its selection score was the arithmetic mean of the two endpoint accuracies on this validation subset, with equal seed weights. The validation subset was shared across candidates and both HPO seeds. With dropout fixed at 0, the learning rate was selected from $\{1.0,1.5,2.0,3.0\}\times10^{-5}$ by this mean validation-accuracy score. The selected learning rate was then held fixed while dropout was selected from $\{0,0.05,0.1\}$ using the same criterion. Batch size, gradient accumulation, weight decay, warmup, and precision were fixed per backbone and not tuned (Table~\ref{tab:config-runs}). The candidate with the highest mean validation accuracy was selected. The subsequent formal study uses five runs per configuration over 10 epochs, with seeds disjoint from the two HPO seeds. These formal runs use the full official training split with these settings fixed, and evaluate accuracy and loss on the full official validation split, including the subset used for HPO.

\paragraph{Rationale for the selection protocol.} The study investigates how internal gradient allocation evolves with the accuracy and cross-entropy gaps during normal learning. HPO establishes task-learning conditions within a limited computational budget, with the aim of obtaining viable trajectories and avoiding training collapse. Validation accuracy serves as the criterion for choosing learning rate and dropout; neither $\Dden$, $\Dw$, nor their correlations with the gaps enter this selection. The selected settings are then fixed for the formal runs, whose seeds are disjoint from HPO. This protocol matches the mechanistic objective: the formal study examines structural associations under fixed task configurations without using those associations to select learning rate or dropout. The scientific objective is to explain structural evolution and gap development, rather than to claim improved downstream benchmark performance. Monitoring depth remains fixed per backbone after BoolQ calibration.

\paragraph{Training-only structural measurement.} Each structural readout is computed solely from fixed training probes. Validation examples, labels, and measured gaps are excluded from the readout calculation, so there is no direct leakage of validation information into the research indicators. Validation accuracy selects the task-training settings; the training probes provide the structural observations used to examine the proposed mechanism. This preserves the training-only construction and observation of the internal signal.

Validation accuracy and readout--gap association are distinct quantities. Since $\GapAcc=\TrainAcc-\ValAcc$, validation accuracy alone does not determine the gap. In all five CIFAR-10 runs the final gap is negative, while the mean full-trajectory correlations with $\Dden$ and $\Dw$ are $\NumCvTenDdenRaw$ and $\NumCvTenWRaw$, respectively (Appendix~\ref{app:results-vision}). This illustrates that positive readout--gap association does not require a positive final gap.

For the revised hierarchy, changing the feature-level construction required regenerating the synthetic data and repeating HPO and final training. The learning rate was selected using the same grid and validation-accuracy criterion, with an internal 90\% / 10\% partition of its training split; the other settings were fixed for the revised study (Table~\ref{tab:config-runs}). Appendix~\ref{app:history-resmlp} explains the change from the earlier simulation. For the vision runs, the monitoring depth was calibrated once on STL-10, and a single ResNet-18 training setting, selected on a 90\% / 10\% split of the official training data with the learning rate chosen from $\{2,3,5\}\times10^{-4}$, is shared by CIFAR-10, CIFAR-100, and STL-10. In these internal-holdout searches, intermediate checkpoints use 128 fixed holdout examples and the selection endpoint uses the full holdout. Their formal runs use the full training split; vision evaluation uses the full official test split (Appendix~\ref{app:results-vision}).

\subsection{Statistical Implementation}
\label{app:statistics-details}
The additional quadratic and logarithmic time-trend controls follow the same residual-correlation procedure, replacing the linear epoch trend with the corresponding trend; their results are reported in Appendix~\ref{app:results-nlp}.
All correlations are Pearson correlations computed within one run across its checkpoints and then averaged with equal weight over runs; we report means, between-run standard deviations, and the number of runs with a positive correlation. Checkpoints of one run are not treated as independent observations. As a second view of the levels, both series are smoothed separately by LOWESS (local-linear, tricube weights, fraction $.28$, no robust iterations) before correlating. The linear-time control correlates the residuals of the two series after each is regressed on an intercept and the epoch. The reported change correlations concern the language runs: each checkpoint is paired with the checkpoint nearest one epoch later (realized intervals $\NumNlpIntervalMin$ to $\NumNlpIntervalMax$ epochs). Positive counts treat runs as independent replicates; for example, 90 positive runs out of 90 bound the positive fraction below by $\NumCiNinetyLower$ (exact 95\% interval).

\subsection{Probe Computation}
\label{app:probe-computation}
At each checkpoint the model is switched to evaluation mode, so dropout is inactive. For every probe separately, the gradients are cleared, the probe is passed through the model alone, its logits are converted to FP32 before the cross-entropy loss is evaluated, and the loss is backpropagated. The gradients of the monitored parameters are flattened in a fixed order, their absolute values and $\ell_1$ mass $S_i$ are formed, and the stabilized allocation $\tilde q_i$ of Section~\ref{sec:measurement} is accumulated into running sums from which $\Dden$, the class dispersions $D_y$, and $\Db$ follow. Gradients are cleared after each probe, parameters are not updated, and training mode is restored before the next optimizer step. RoBERTa-base and DeBERTa-base hold FP32 parameters, so their probe gradients are FP32. For Qwen3-1.7B the frozen backbone is held in BF16, so the gradients of the monitored LoRA parameters are computed through BF16 activations. Density reductions are accumulated in 32-bit precision for the hierarchy runs of Table~\ref{tab:resmlp-main} and the three re-executed language runs, and in 64-bit precision for the hierarchy monitor-depth, low-mass, and precision analyses in Appendix~\ref{app:sensitivity}.

\begin{algorithm}[htbp]
\caption{Fixed-probe gradient-allocation tracking}
\label{alg:tracking}
\begin{algorithmic}[1]
\State Fix probes $I$, monitored coordinates $\mathcal C$, logging schedule, and $\varepsilon=10^{-12}$.
\For{each optimizer step $t$}
  \State Update the model with the ordinary optimizer and schedule.
  \If{$t$ is a logging step}
    \State Switch to evaluation mode.
    \For{each probe $i\in I$}
      \State Clear gradients; backpropagate $\ell(z_i;\theta_t)$ alone; read $g_i=\nabla_{\theta_{\mathcal C}}\ell$.
      \State $S_i\gets\|g_i\|_1$; if $S_i$ is finite and $S_i>\varepsilon$, accumulate $\tilde q_i=|g_i|/(S_i+\varepsilon)$ for $\Dden$ and $D_{y_i}$.
    \EndFor
    \State Record $\Dden$, $D_y$, $\Dw$, $\Db$, valid-probe counts, and training and validation accuracy and loss.
    \State Restore training mode.
  \EndIf
\EndFor
\end{algorithmic}
\end{algorithm}

\subsection{Training Configuration}
\label{app:config}
Table~\ref{tab:config} summarizes the cohort settings and language calibration protocol. The split, hyperparameter selection, and distinction between HPO evaluation examples and training probes are detailed in Appendix~\ref{app:hpo}; the later BoolQ depth check in Appendix~\ref{app:sensitivity-depth-nlp} uses separate training runs.

Table~\ref{tab:config-runs} lists the training settings of each experiment; the five runs of an experiment share them. All experiments use AdamW without gradient clipping. The listed dropout is applied to the hidden, attention, and classifier components of the language backbones; LoRA dropout is specified separately below. Language inputs are truncated to 256 tokens. Qwen3-1.7B starts from the base checkpoint (Qwen3-1.7B-Base) and trains rank-32 LoRA adapters with $\alpha=128$ and LoRA dropout 0 on the query, key, value, output, gate, up, and down projections. ResMLP-24 starts from the \texttt{timm} checkpoint \texttt{resmlp\_24\_224.fb\_distilled\_in1k} and accumulates gradients over 8 batches of 16, an effective batch size of 128. The learning rate rises linearly over the warmup fraction of optimizer steps and then decays linearly to zero at the end of training.

\begin{table}[htbp]
\centering
\caption{Cohort settings and language calibration protocol.}
\label{tab:config}
\small
\begin{tabular}{@{}p{0.22\linewidth}p{0.3\linewidth}p{0.2\linewidth}p{0.18\linewidth}@{}}
\toprule
Setting & Language & Controlled hierarchy & Vision\\
\midrule
Backbone & RoBERTa-base, DeBERTa-base, Qwen3-1.7B & ResMLP-24 (pretrained) & ResNet-18 (ImageNet)\\
Adaptation & full (RoBERTa, DeBERTa); frozen backbone with rank-32 LoRA on all attention and MLP projections (Qwen) & full & full\\
Monitor-depth calibration & BoolQ, once per backbone & -- & STL-10, once\\
Learning-rate search & per dataset from $\{1,1.5,2,3\}\times10^{-5}$ & from $\{1,1.5,2,3\}\times10^{-5}$ & from $\{2,3,5\}\times10^{-4}$\\
Dropout search & per dataset from $\{0,.05,.1\}$ & -- & --\\
HPO split & full official train / fixed 10\% of official validation & 90\% / 10\% of training split & 90\% / 10\% of official train, one setting for all datasets\\
Formal training / evaluation source & full official train / full official validation & full training / full validation split & full official train / full official test\\
Epochs; checkpoints & 10; 20--22 checkpoints, about every half epoch & 25; every epoch & 40; every epoch\\
Gradient probes & 128, class-proportional & 128, 32 per class & 10 per class (CIFAR-10, STL-10) or 2 per class (CIFAR-100)\\
HPO evaluation & 1 epoch, 2 seeds disjoint from formal runs; equal-weight mean validation accuracy & checkpoints: 128 fixed holdout examples; selection endpoint: full holdout & as hierarchy\\
Monitored final blocks $k$ & 4 / 6 / 2 (RoBERTa / DeBERTa / Qwen) & 10 & 2\\
Monitoring scheme & shared-output weight gradients & shared-output weight gradients & shared-output weight gradients\\
Training-evaluation subset & 10\%, class-proportional & 10\% (185 examples) & 10\%\\
\bottomrule
\end{tabular}
\end{table}

\begin{table}[htbp]
\centering
\caption{Training settings of each experiment. The five runs of an experiment share these settings. Batch gives the training / evaluation batch size, Accum.\ the gradient-accumulation steps, and Warmup the warmup ratio. The learning rates are those selected in the searches of Table~\ref{tab:config}; the other settings are fixed per backbone.}
\label{tab:config-runs}
\small
\setlength{\tabcolsep}{3.5pt}
\begin{tabular}{@{}ll*{6}{c}l@{}}
\toprule
Backbone & Experiment & Learning rate & Batch & Accum. & Weight decay & Warmup & Dropout & Precision\\
\midrule
RoBERTa-base & BoolQ & $2\times10^{-5}$ & 16 / 32 & 1 & 0 & 0 & 0 & FP32\\
 & CoLA & $2\times10^{-5}$ & 16 / 32 & 1 & 0 & 0 & 0 & FP32\\
 & MRPC & $2\times10^{-5}$ & 16 / 32 & 1 & 0 & 0 & 0 & FP32\\
 & RTE & $2\times10^{-5}$ & 16 / 32 & 1 & 0 & 0 & 0 & FP32\\
 & SST-2 & $2\times10^{-5}$ & 16 / 32 & 1 & 0 & 0 & 0 & FP32\\
 & WiC & $2\times10^{-5}$ & 16 / 32 & 1 & 0 & 0 & 0 & FP32\\
\midrule
DeBERTa-base & BoolQ & $3\times10^{-5}$ & 16 / 32 & 1 & 0 & 0.06 & 0 & FP32\\
 & CoLA & $1\times10^{-5}$ & 16 / 32 & 1 & 0 & 0.06 & 0 & FP32\\
 & MRPC & $3\times10^{-5}$ & 16 / 32 & 1 & 0 & 0.06 & 0 & FP32\\
 & RTE & $3\times10^{-5}$ & 16 / 32 & 1 & 0 & 0.06 & 0 & FP32\\
 & SST-2 & $1\times10^{-5}$ & 16 / 32 & 1 & 0 & 0.06 & 0 & FP32\\
 & WiC & $3\times10^{-5}$ & 16 / 32 & 1 & 0 & 0.06 & 0 & FP32\\
\midrule
Qwen3-1.7B (LoRA) & BoolQ & $2\times10^{-5}$ & 8 / 16 & 1 & 0.01 & 0.05 & 0 & BF16\\
 & CoLA & $2\times10^{-5}$ & 8 / 16 & 1 & 0.01 & 0.05 & 0 & BF16\\
 & MRPC & $2\times10^{-5}$ & 8 / 16 & 1 & 0.01 & 0.05 & 0 & BF16\\
 & RTE & $2\times10^{-5}$ & 8 / 16 & 1 & 0.01 & 0.05 & 0 & BF16\\
 & SST-2 & $2\times10^{-5}$ & 8 / 16 & 1 & 0.01 & 0.05 & 0 & BF16\\
 & WiC & $2\times10^{-5}$ & 8 / 16 & 1 & 0.01 & 0.05 & 0 & BF16\\
\midrule
ResMLP-24 & $p=.3$ & $1.5\times10^{-5}$ & 16 / 32 & 8 & 0.05 & 0.1 & 0 & BF16\\
 & $p=.5$ & $1.5\times10^{-5}$ & 16 / 32 & 8 & 0.05 & 0.1 & 0 & BF16\\
 & $p=.7$ & $1.5\times10^{-5}$ & 16 / 32 & 8 & 0.05 & 0.1 & 0 & BF16\\
\midrule
ResNet-18 & CIFAR-10 & $3\times10^{-4}$ & 64 / 128 & 1 & 0.01 & 0 & 0 & FP16\\
 & CIFAR-100 & $3\times10^{-4}$ & 64 / 128 & 1 & 0.01 & 0 & 0 & FP16\\
 & STL-10 & $3\times10^{-4}$ & 64 / 128 & 1 & 0.01 & 0 & 0 & FP16\\
\bottomrule
\end{tabular}
\end{table}

\subsection{Provenance of the Runs}
\label{app:provenance}
The analyses use the final process records indexed in the accompanying data manifest. Appendix~\ref{app:history} retains the earlier experimental summaries and documents the four updated runs and their numerical effects relative to the rebuttal. Appendix~\ref{app:final-run-metrics} lists the final behavioral and structural measurements of every main-study run. The accompanying Summary of Changes additionally lists the run identities and maps the revisions to the review comments. Probe membership is fixed; the class weights in Eq.~\eqref{eq:within-between} use the effective valid-probe counts at each checkpoint.

\subsection{Data Package, Scripts, and Code}
\label{app:package}
The data package stores, for every run, the process table with one row per checkpoint; for the vision runs it includes the dispersion and probe count of each of the 10 or 100 classes, from which the all-class readouts can be recomputed, and for the language and hierarchy runs the module-level projection $\Dmod$ at every checkpoint. A verification script checks for every checkpoint that the logged gaps equal the differences of the logged accuracies and losses, that the weighted within-class term equals the count-weighted mean of the class terms, that $\Dden=\Dw+\Db$, and that every file matches its checksum. Recomputation scripts rebuild each summary table of the package from the process tables. The paper source contains three further scripts: \texttt{revision\_stats.py} recomputes all statistics of this paper independently from the process tables and compares $\NumVerifyComparisons$ of them with the package tables (all agree to $10^{-9}$), \texttt{make\_tables.py} writes every table and every number quoted in the text, and \texttt{make\_figures.py} draws the data figures. The reference implementation provides the shared-output weight-gradient monitors for the projections in Table~\ref{tab:monitors}, the normalized readouts of Section~\ref{sec:measurement} (tested against a brute-force computation), the five-level generator, and the training loop with full-training-data and optional internal-holdout modes, together with the probe and training-evaluation sampling of Appendix~\ref{app:hpo}. The data package provides the unnormalized companion readouts and scripts for their analysis.

\section{Earlier Experiments and Rebuttal Updates}
\label{app:history}
This appendix retains the summaries of the earlier experiments and documents changes to the values reported during the discussion. The 25-epoch language cohort and the earlier hierarchy study are presented in Appendices~\ref{app:history-nlp} and~\ref{app:history-resmlp}; the effects of updating four runs in the rebuttal cohort are presented in Appendix~\ref{app:history-updates}. Appendix~\ref{app:history-vision} documents the correction of the vision class aggregation. The current studies use the 10-epoch language cohort, the fixed-ratio hierarchy, and the 40-epoch vision extension of Section~\ref{sec:design}, with the training settings in Appendix~\ref{app:config}.

\subsection{Earlier 25-Epoch Language Cohort}
\label{app:history-nlp}
The original submission evaluated RoBERTa-base, DeBERTa-base, Qwen3-1.7B, and DeepSeek-R1-Distill-Qwen-1.5B on the same six language tasks, with five runs per backbone--dataset pair, for 120 runs of 25 epochs. It used 128 fixed training probes and parameter-gradient monitoring of the shared-output projections. Table~\ref{tab:history-nlp25} reproduces the overall-readout correlations reported in its Table~5. The current 90-run cohort adds the within-class, class-specific, temporal, and loss analyses presented together in Section~\ref{sec:results-nlp} and Appendix~\ref{app:results-nlp}. The six NLP datasets are unchanged by the redesign of the synthetic hierarchy. For the three backbones retained in the current cohort, we therefore retain the task-specific configurations selected by the earlier HPO, together with their fixed backbone-level monitoring depths; the current language training horizon is 10 epochs.

\begin{table}[htbp]
\centering
\caption{Earlier 25-epoch language results, reproduced from Table~5 of the original submission. Each mean is over the 30 runs of one backbone; the positive count refers to raw correlations of the overall readout with the accuracy gap.}
\label{tab:history-nlp25}
\small
\resizebox{\linewidth}{!}{
\begin{tabular}{@{}lcccc@{}}
\toprule
Backbone & Runs & Raw positive & Mean raw $r$ & Mean LOWESS $r$\\
\midrule
RoBERTa-base & 30 & 30/30 & .776 & .910\\
DeBERTa-base & 30 & 30/30 & .683 & .809\\
Qwen3-1.7B & 30 & 30/30 & .698 & .831\\
DeepSeek-R1-Distill-Qwen-1.5B & 30 & 30/30 & .927 & .988\\
\bottomrule
\end{tabular}
}
\end{table}

\subsection{Earlier Hierarchy Study}
\label{app:history-resmlp}
The earlier ResMLP-24 simulation used 32 training probes and combined parameter/activation monitoring. Its three target mixtures were $(.20,.20,.20,.20,.20)$, $(.05,.10,.15,.20,.50)$, and $(.00,.05,.10,.15,.70)$, in the order of the five levels defined in Section~\ref{sec:design-hierarchy}. They changed the private share and the relative composition of the other levels together. Table~\ref{tab:history-resmlp} retains the reported gap and overall-readout correlations; Table~\ref{tab:history-masked} retains the private masked-loss-share changes over their originally reported epoch-1-to-20 window.

The revised main hierarchy varies the private share while fixing the other levels at $1{:}2{:}3{:}4$, uses 128 probes, and reports both $\Dden$ and $\Dw$, together with their unnormalized companions. These changes provide the controlled comparison used in Section~\ref{sec:results-resmlp}. Changing the hierarchy construction changes the synthetic learning problem, so this study was restarted with regenerated data, repeated HPO, and new training runs. Its training configuration was established for the revised construction and can therefore differ from that of the earlier simulation. The learning-rate search retains the same grid and validation-accuracy criterion; the other training settings are fixed for the revised study as listed in Appendix~\ref{app:config}. The earlier study is retained here as the historical experiment with its own mixtures and measurements.

\begin{table}[htbp]
\centering
\caption{Earlier hierarchy results, reproduced from Tables~2--3 of the original submission. The final accuracy gap and run-mean correlations summarize five runs per condition. ``Aggregate'' correlates the trajectories after aggregation across runs; ``run mean'' averages correlations computed within runs. The positive count refers to raw within-run correlations.}
\label{tab:history-resmlp}
\small
\resizebox{\linewidth}{!}{
\begin{tabular}{@{}ccccccc@{}}
\toprule
Private share & Final gap & Raw positive & Raw, run mean & Raw, aggregate & LOWESS, run mean & LOWESS, aggregate\\
\midrule
.20 & .084 & 5/5 & .796 & .948 & .931 & .989\\
.50 & .291 & 5/5 & .676 & .786 & .873 & .954\\
.70 & .425 & 5/5 & .817 & .939 & .903 & .959\\
\bottomrule
\end{tabular}
}
\end{table}

\begin{table}[htbp]
\centering
\caption{Earlier private masked-loss-share changes from epoch 1 to epoch 20, reproduced from Table~4 of the original submission. Excess is the validation change minus the training change; relative excess divides this difference by the epoch-1 training-side private masked-loss share. Values retain the rounding of the original table.}
\label{tab:history-masked}
\small
\resizebox{\linewidth}{!}{
\begin{tabular}{@{}ccccc@{}}
\toprule
Private share & Training change (pp) & Validation change (pp) & Excess (pp) & Relative excess (\%)\\
\midrule
.20 & +5.35 & +8.03 & +2.68 & +13.39\\
.50 & +10.84 & +21.45 & +10.60 & +53.02\\
.70 & $-2.40$ & +11.43 & +13.83 & +69.13\\
\bottomrule
\end{tabular}
}
\end{table}
\FloatBarrier

\subsection{Updates to Four Runs in the Rebuttal Cohort}
\label{app:history-updates}
Four runs were re-executed following validation-accuracy collapse to the majority-class baseline, identified at the final checkpoint upon completion of training: DeBERTa-base MRPC Runs~4 and~5, Qwen3-1.7B RTE Run~2, and ResMLP-24 Run~5 at $p=.3$. These Run numbers refer to the five runs within each condition. The original executions used NVIDIA GeForce RTX 4090 GPUs, and the four final re-executions used NVIDIA GeForce RTX 3090 GPUs. The current analyses use their final process records throughout; Appendix~\ref{app:final-run-metrics} reports their final accuracies, losses, and both primary structural readouts together with those of every other main-study run. The comparisons below use the values published during the rebuttal as the baseline. Language effects are reported at the same backbone and cohort aggregation as the author responses; the hierarchy comparison also includes the affected run's published structure endpoints.

\paragraph{Language correlations and positive-run counts.}
Table~\ref{tab:history-nlp-changes} separates updating the run records from expressing the overall readout as the pooled dispersion. The overall values published during the discussion combined the equal-class within term with the between-class term. Its ``same aggregate'' comparison retains that construction on the current run records and therefore isolates the change due to the updated runs. The final column uses the pooled $\Dden=\Dw+\Db$ of Eq.~\eqref{eq:within-between}. The weighted within-class readout keeps the same probe-count weights in both versions. Table~\ref{tab:history-class-changes} gives the class-specific comparisons.

Both primary readouts retain positive raw and LOWESS level correlations in 90/90 language runs. For the overall readout with the posted aggregation held fixed, the mean raw one-epoch-change correlation changes from $.621$ to $.615$, while the positive count changes from 87/90 to 88/90; the current pooled value is $\NumNlpDdenDiff$. The corresponding weighted within-class mean remains $.516$ to three decimals, with its positive count changing from 86/90 to 87/90. Qwen's class-1 joint-positive count changes from 30/30 to 29/30. The changes therefore include both increases and decreases in correlation.

\begin{landscape}
\begin{table}[p]
\centering
\caption{Language results published during the rebuttal and current results. Cells give mean within-run correlations and positive-run counts; level cells give raw/LOWESS correlations and the count positive under both. The within-class and ``same aggregate'' comparisons isolate run-record updates. The last column additionally applies the pooled definition used in this paper. The loss-gap rows report the overall readout, as in the rebuttal.}
\label{tab:history-nlp-changes}
\scriptsize
\resizebox{\linewidth}{!}{
\begin{tabular}{@{}llccc@{}}
\toprule
Analysis & Backbone & $D_{\mathrm{within}}^{\mathrm{wt}}$: posted $\to$ current & Overall: posted $\to$ current, same aggregate & Current pooled $D_{\mathrm{den}}$\\
\midrule
Level, raw/LOWESS & RoBERTa & .785/.872 (30/30) $\to$ .785/.872 (30/30) & .879/.922 (30/30) $\to$ .879/.922 (30/30) & .877/.922 (30/30)\\
Level, raw/LOWESS & DeBERTa & .789/.888 (30/30) $\to$ .783/.883 (30/30) & .894/.954 (30/30) $\to$ .888/.951 (30/30) & .883/.949 (30/30)\\
Level, raw/LOWESS & Qwen3-1.7B & .836/.884 (30/30) $\to$ .842/.886 (30/30) & .820/.865 (30/30) $\to$ .824/.868 (30/30) & .822/.867 (30/30)\\
\midrule
Linear-time partial & RoBERTa & .674 (29/30) $\to$ .674 (29/30) & .828 (30/30) $\to$ .828 (30/30) & .823 (30/30)\\
Linear-time partial & DeBERTa & .738 (30/30) $\to$ .727 (30/30) & .811 (30/30) $\to$ .799 (30/30) & .793 (30/30)\\
Linear-time partial & Qwen3-1.7B & .791 (30/30) $\to$ .797 (30/30) & .822 (30/30) $\to$ .824 (30/30) & .821 (30/30)\\
\midrule
One-epoch $\Delta$, raw & RoBERTa & .466 (29/30) $\to$ .466 (29/30) & .649 (30/30) $\to$ .649 (30/30) & .646 (30/30)\\
One-epoch $\Delta$, raw & DeBERTa & .523 (29/30) $\to$ .514 (29/30) & .622 (29/30) $\to$ .598 (29/30) & .587 (29/30)\\
One-epoch $\Delta$, raw & Qwen3-1.7B & .559 (28/30) $\to$ .568 (29/30) & .592 (28/30) $\to$ .598 (29/30) & .596 (29/30)\\
One-epoch $\Delta$, raw & All 90 & .516 (86/90) $\to$ .516 (87/90) & .621 (87/90) $\to$ .615 (88/90) & .610 (88/90)\\
\midrule
One-epoch $\Delta$, LOWESS & RoBERTa & .661 (29/30) $\to$ .661 (29/30) & .832 (30/30) $\to$ .832 (30/30) & .828 (30/30)\\
One-epoch $\Delta$, LOWESS & DeBERTa & .769 (29/30) $\to$ .761 (29/30) & .832 (30/30) $\to$ .821 (30/30) & .815 (30/30)\\
One-epoch $\Delta$, LOWESS & Qwen3-1.7B & .733 (29/30) $\to$ .742 (29/30) & .791 (30/30) $\to$ .798 (30/30) & .796 (30/30)\\
One-epoch $\Delta$, LOWESS & All 90 & .721 (87/90) $\to$ .722 (87/90) & .818 (90/90) $\to$ .817 (90/90) & .813 (90/90)\\
\midrule
Loss gap, level raw & All 90 &  & .818 (88/90) $\to$ .819 (88/90) & .813 (88/90)\\
Loss gap, one-epoch $\Delta$, raw & All 90 &  & .480 (83/90) $\to$ .480 (83/90) & .463 (83/90)\\
\bottomrule
\end{tabular}
}
\end{table}
\end{landscape}

\begin{table}[htbp]
\centering
\caption{Class-specific accuracy-gap correlations: values published during the rebuttal and current values. Each cell gives mean raw/LOWESS correlations and the count positive under both.}
\label{tab:history-class-changes}
\small
\resizebox{\linewidth}{!}{
\begin{tabular}{@{}lcccc@{}}
\toprule
Backbone & $D_0$ posted & $D_0$ revised & $D_1$ posted & $D_1$ revised\\
\midrule
RoBERTa & .675/.776 (30/30) & .675/.776 (30/30) & .613/.743 (30/30) & .613/.743 (30/30)\\
DeBERTa & .750/.849 (30/30) & .727/.837 (30/30) & .676/.792 (29/30) & .675/.794 (29/30)\\
Qwen3-1.7B & .710/.832 (30/30) & .717/.838 (30/30) & .759/.850 (30/30) & .727/.799 (29/30)\\
\bottomrule
\end{tabular}
}
\end{table}

\paragraph{Hierarchy endpoints and five-run correlations.}
Table~\ref{tab:history-resmlp-endpoints} compares the epoch-1 and epoch-25 endpoints and increments of Run~5 at $p=.3$. Its normalized increments decrease and remain positive. At the five-run level, the normalized overall correlation increases and the weighted within-class correlation decreases, with both remaining positive in 5/5 runs (Table~\ref{tab:history-resmlp-correlations}). The current unnormalized increment summaries are reported in Table~\ref{tab:resmlp-unnorm}; all four normalized and unnormalized increments remain positive in 15/15 runs.

\begin{table}[htbp]
\centering
\caption{ResMLP Run~5 at $p=.3$: structure endpoints and increments published during the rebuttal and in the current records. The increment is epoch 25 minus epoch 1. Displayed endpoint and increment values are rounded separately.}
\label{tab:history-resmlp-endpoints}
\small
\resizebox{\linewidth}{!}{
\begin{tabular}{@{}lcccc@{}}
\toprule
Readout & Posted: epoch 1 $\to$ 25 & Current: epoch 1 $\to$ 25 & Posted increment & Current increment\\
\midrule
$D_{\mathrm{den}}$ & $3.198332\times10^{-7}\to3.786169\times10^{-6}$ & $3.350081\times10^{-7}\to2.873923\times10^{-6}$ & $3.466336\times10^{-6}$ & $2.538915\times10^{-6}$\\
$D_{\mathrm{within}}^{\mathrm{wt}}$ & $1.877910\times10^{-7}\to1.934437\times10^{-6}$ & $2.029659\times10^{-7}\to1.022190\times10^{-6}$ & $1.746646\times10^{-6}$ & $8.192243\times10^{-7}$\\
$U_{\mathrm{den}}$ & $8.223495\times10^{-5}\to62.88648$ & $7.976367\times10^{-5}\to60.29398$ & $62.88639$ & $60.29390$\\
$U_{\mathrm{within}}^{\mathrm{wt}}$ & $2.689543\times10^{-5}\to60.08161$ & $2.644836\times10^{-5}\to57.52962$ & $60.08159$ & $57.52959$\\
\bottomrule
\end{tabular}
}
\end{table}

\begin{table}[htbp]
\centering
\caption{Effect on the five-run ResMLP correlations at $p=.3$ with 128 probes. Each correlation is paired with the accuracy gap; positive counts require both raw and LOWESS correlations to be positive.}
\label{tab:history-resmlp-correlations}
\small
\resizebox{\linewidth}{!}{
\begin{tabular}{@{}lccc@{}}
\toprule
Readout & Posted raw / LOWESS & Current raw / LOWESS & Positive under both: posted $\to$ current\\
\midrule
$D_{\mathrm{den}}$ & .894 / .937 & .911 / .951 & 5/5 $\to$ 5/5\\
$D_{\mathrm{within}}^{\mathrm{wt}}$ & .732 / .789 & .703 / .757 & 5/5 $\to$ 5/5\\
\bottomrule
\end{tabular}
}
\end{table}
\FloatBarrier

\subsection{Correction of Vision Class Aggregation}
\label{app:history-vision}
\label{app:history-vision-aggregation}
An earlier version of this analysis averaged the within-class term over probe classes 0 and 1 only and formed the overall readout as that two-class mean plus the between-class term. The readouts reported here use all 10 or 100 classes with probe-count weights and the pooled $\Dden$ of Eq.~\eqref{eq:dden}. Table~\ref{tab:app-vision-aggregation} lists both aggregations, computed from the same process records. The associations are positive in all 15 runs under either aggregation; the largest correction to the raw correlation of the weighted within-class readout occurs for CIFAR-100, where classes 0 and 1 hold 4 of the 200 probes.

\begin{table}[htbp]
\centering
\caption{ResNet-18 runs: association with the accuracy gap under the earlier two-class aggregation (probe classes 0 and 1 only) and under the all-class aggregation used in this paper. Mean raw / LOWESS within-run Pearson $r$ over five runs, with the number of runs positive under both views.}
\label{tab:app-vision-aggregation}
\small
\begin{tabular}{llcc}
\toprule
Dataset & Readout & Classes 0 and 1 & All classes\\
\midrule
\multirow{2}{*}{CIFAR-10} & $D_{\mathrm{within}}^{\mathrm{wt}}$ & .890 / .965 (5/5) & .936 / .964 (5/5)\\
 & $D_{\mathrm{den}}$ & .890 / .955 (5/5) & .925 / .952 (5/5)\\
\midrule
\multirow{2}{*}{CIFAR-100} & $D_{\mathrm{within}}^{\mathrm{wt}}$ & .806 / .934 (5/5) & .972 / .982 (5/5)\\
 & $D_{\mathrm{den}}$ & .918 / .964 (5/5) & .969 / .980 (5/5)\\
\midrule
\multirow{2}{*}{STL-10} & $D_{\mathrm{within}}^{\mathrm{wt}}$ & .727 / .948 (5/5) & .865 / .983 (5/5)\\
 & $D_{\mathrm{den}}$ & .743 / .956 (5/5) & .860 / .983 (5/5)\\
\bottomrule
\end{tabular}

\end{table}
\FloatBarrier

\section{AI Assistance Disclosure}
\label{app:ai-disclosure}

The authors used OpenAI's ChatGPT/Codex and Anthropic's Claude to assist with manuscript drafting and editing, translation, figure design, code inspection, and formatting. All research ideas, experimental decisions, analyses, claims, and conclusions are human-sourced and were independently verified by the authors.

\end{document}